\documentclass[10pt,twocolumn,letterpaper]{article}
\usepackage[margin=0.73in,top=0.72in,bottom=0.74in,columnsep=0.25in]{geometry}
\usepackage[T1]{fontenc}
\usepackage[utf8]{inputenc}
\usepackage{amsmath,amssymb,amsthm,mathtools}
\usepackage{times}
\usepackage{bm,microtype}
\usepackage{booktabs,tabularx,array,multirow}
\usepackage{algorithm,algpseudocode}
\usepackage[numbers,sort&compress]{natbib}
\usepackage{enumitem}
\usepackage{fancyhdr}
\usepackage{xcolor}
\usepackage{tikz}
\usepackage{graphicx}
\usepackage{wrapfig}
\usepackage{float}

\makeatletter
\newcommand{\teasercaption}[1]{\def\@captype{figure}\caption{#1}}
\makeatother
\usepackage{hyperref}
\hypersetup{colorlinks=true,linkcolor=blue!45!black,citecolor=blue!45!black,urlcolor=blue!45!black,pdftitle={FlowMap-OPD: Rollout-Kernel Separation for On-Policy Distillation of Few-Step Flow-Map Generators},pdfauthor={Zhiqi Li and Bo Zhu}}
\setlist[itemize]{leftmargin=1.1em,itemsep=1pt,topsep=3pt}
\setlist[enumerate]{leftmargin=1.4em,itemsep=1pt,topsep=3pt}
\newtheorem{theorem}{Theorem}

\theoremstyle{plain}
\newtheorem{appendixtheorem}{Theorem}[section]
\newtheorem{appendixproposition}[appendixtheorem]{Proposition}
\newtheorem{appendixcorollary}[appendixtheorem]{Corollary}
\theoremstyle{definition}
\newtheorem{appendixassumption}[appendixtheorem]{Assumption}
\theoremstyle{plain}

\theoremstyle{definition}

\theoremstyle{remark}

\newcommand{\E}{\mathbb E}

\newcommand{\KL}{D_{\mathrm{KL}}}
\newcommand{\Tcal}{\mathcal{T}}
\newcommand{\sg}{\operatorname{sg}}
\newcommand{\Id}{\mathrm I}
\newcommand{\norm}[1]{\left\|#1\right\|}

\newcommand{\psitheta}{\psi^\theta}
\newcommand{\Gcal}{\mathcal G}
\newcommand{\Lcal}{\mathcal L}

\newcolumntype{Y}{>{\raggedright\arraybackslash}X}
\allowdisplaybreaks[1]
\title{\vspace{-1.1em}\bfseries FlowMap-OPD: Rollout--Kernel Separation\\for On-Policy Distillation of Few-Step Flow-Map Generators}
\author{Zhiqi Li\thanks{Email: \href{mailto:zli3167@gatech.edu}{\texttt{zli3167@gatech.edu}}.} \qquad Bo Zhu\\
Georgia Institute of Technology}
\date{}
\begin{document}
\twocolumn[{
\begin{@twocolumnfalse}
\maketitle
\vspace{-1em}
\begin{center}\small
Project: \url{https://zhiqili-cg.github.io/Flowmap_OPD_project/}\\
Code: \url{https://github.com/ZhiqiLi-CG/Flowmap_OPD_source}
\end{center}
\vspace{0.2em}
\begin{minipage}{\textwidth}
\centering
\includegraphics[width=\textwidth]{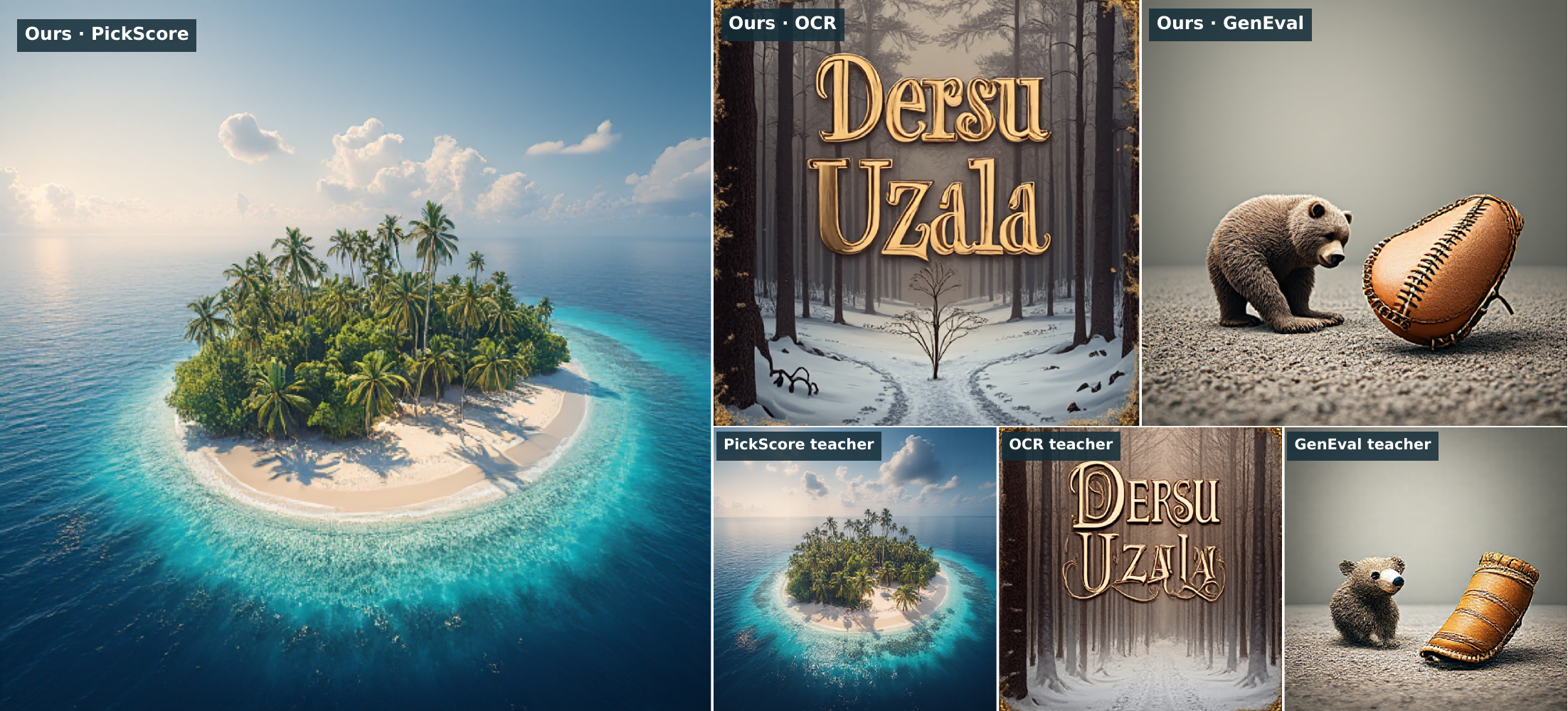}
\teasercaption{\textbf{Three specialist teachers, one native student.}
By separating native rollout from the teacher-supervision kernel, FlowMap-OPD rapidly distills
the complementary capabilities of three specialist teachers into a single few-step student:
visual preferences, text rendering, and object relations.
Large images show outputs from the same student across tasks;
the lower-right thumbnails show the corresponding specialist outputs.}
\label{fig:teaser}
\end{minipage}
\vspace{0.8em}
\end{@twocolumnfalse}
}]
\thispagestyle{plain}
\begingroup
\renewcommand{\thefootnote}{\fnsymbol{footnote}}
\footnotetext[1]{Email: \href{mailto:zli3167@gatech.edu}{\texttt{zli3167@gatech.edu}}.}
\endgroup
\begin{abstract}
Few-step flow-map generators, including MeanFlow and consistency models, enable efficient sampling through long-range transport, yet their on-policy distillation remains underexplored. We introduce FlowMap-OPD, an on-policy distillation framework that separates student-state acquisition from teacher--student distribution comparison. A formulation based on state marginals establishes this separation, while flow--velocity consistency connects local supervision to the deployed long-range map. Within this framework, we develop flow-map, induced-velocity, and instantaneous-velocity distribution supervision, each paired with a separately specified native flow-map rollout. Cross-capacity ImageNet experiments across three teacher rewards identify instantaneous-velocity distribution supervision with independently tunable student consistency as the most effective choice. In text-to-image experiments, FlowMap-OPD demonstrates strong multi-specialist consolidation capabilities and surpasses multi-reward Flow-Map GRPO in task performance and convergence speed.
\end{abstract}

\section{Introduction}\label{sec:intro}

On-policy distillation (OPD) trains a student using teacher supervision on states encountered during its own generation~\citep{agarwal2023opd,gu2023minillm}. Unlike distillation based solely on teacher-generated data, OPD reduces the mismatch between training and inference distributions by training on states drawn from the student's own generation distribution, allowing the student to learn from its own mistakes~\citep{agarwal2023opd}. DiffusionOPD and Flow-OPD extend this principle to diffusion and flow-matching models through local teacher--student transition-distribution comparisons \citep{li2026diffusionopd,fang2026flowopd}. These comparisons provide direct guidance throughout generation, without requiring terminal rewards or credit assignment across preceding sampling steps. Their trajectory-level formulations use the same student transition kernel for rollout and teacher comparison, coupling two distinct design choices: which states the student visits and what the teacher supervises at those states.

Conventional diffusion and flow-matching samplers require many sequential model evaluations, making both generation and on-policy state collection costly. Flow-map generators such as MeanFlow and consistency models reduce this cost by directly predicting long-range transport in one or a few steps~\citep{geng2025meanflow,song2023consistency,lu2025simplifying}. Yet on-policy distillation for these few-step generators remains underexplored. In MeanFlow, the same network predicts average velocities over finite intervals and instantaneous velocities on the diagonal, providing both flow-map and velocity representations. Flow--velocity consistency links these representations, suggesting that sampling and supervision need not use the same interface. This raises our central question: \emph{must a flow-map generator use the same representation for rollout and teacher--student distribution comparison, or can it collect states through long-range maps while learning through a different, consistent representation?}

We introduce \emph{FlowMap-OPD}, a theoretical framework based on \emph{rollout--kernel separation}. Starting from trajectory-level OPD, we rewrite its additive local objective as teacher--student comparisons over the joint distribution of student-generated states and comparison times. For a fixed local comparison and time weighting, the expected objective depends on these state marginals and comparison-time distributions rather than the full cross-time coupling of the rollout. This separates two roles: the \emph{rollout} provides training states, while a compatible \emph{optimization kernel} defines the teacher--student comparison at those states. Flow--velocity consistency then connects local velocity supervision to the long-range map used for generation. This separation is particularly natural for OPD, where teachers provide local supervision without requiring credit assignment from terminal rewards.

Our theory establishes when rollout replacement preserves the distillation objective and how flow--velocity consistency connects local supervision to the deployed flow map. Building on these results, FlowMap-OPD pairs native flow-map rollouts with flow-map~\citep{li2026flowmapgrpo}, induced-velocity~\citep{huang2026meanflownft}, or instantaneous-velocity distribution supervision, expanding the supervision design space for few-step flow-map generators. In the instantaneous-velocity instance, the teacher provides instantaneous-velocity targets at student-visited states, while student self-consistency connects this supervision to the long-range map used for generation. Our cross-capacity ImageNet study tests twelve configurations under each of three teacher rewards, identifying instantaneous velocity supervision with separately tunable consistency as an effective instantiation while revealing reward-dependent trade-offs. Text-to-image experiments further demonstrate strong multi-specialist consolidation in a single few-step student. Given already available specialist teachers, FlowMap-OPD surpasses multi-reward Flow-Map GRPO in task performance and convergence speed. Our contributions are:
\begin{itemize}
\item \textbf{Rollout--kernel separation for flow-map OPD.} We formulate teacher--student comparison over student state marginals at the selected comparison times, separating state acquisition from supervision, and characterize the conditions under which rollout replacement preserves the objective and expected update.
\item \textbf{An expanded design space for flow-map OPD.} Supervision can act through transport representations different from the native sampler. Within this space, we identify instantaneous velocity matching with independently tunable self-consistency as the most effective practical design.
\item \textbf{Effective distillation across model capacities and specialist teachers.} FlowMap-OPD improves generation quality in XL/2-to-B/2 distillation and consolidates three specialist teachers into a single student with performance comparable to each specialist on its corresponding benchmark.
\end{itemize}

\begin{figure*}[t]
\centering
\begin{minipage}[t]{0.33\textwidth}
\centering
\includegraphics[width=\linewidth]{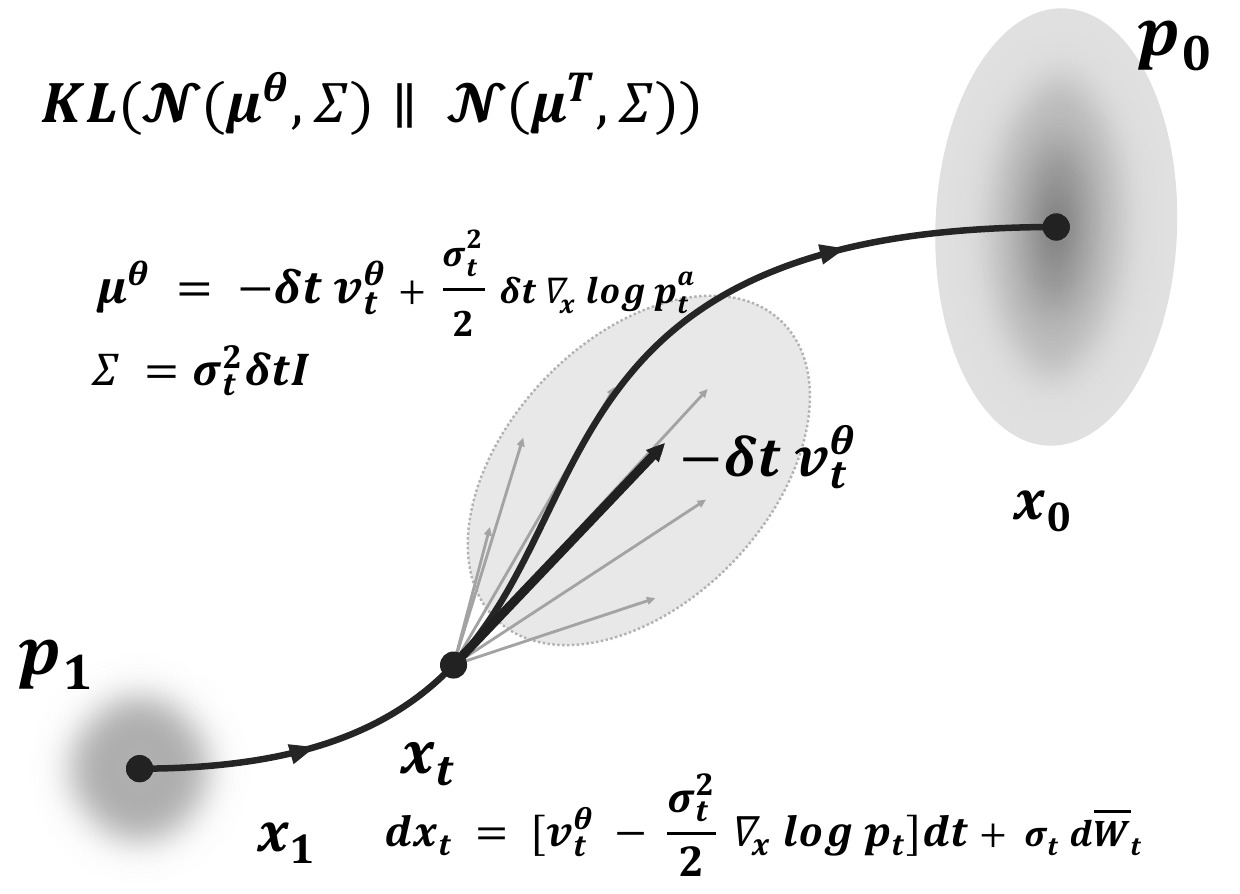}\par
\vspace{2pt}
{\small\bfseries (1) Flow/Diffusion OPD\par}
\end{minipage}\hfill
\begin{minipage}[t]{0.33\textwidth}
\centering
\includegraphics[width=\linewidth]{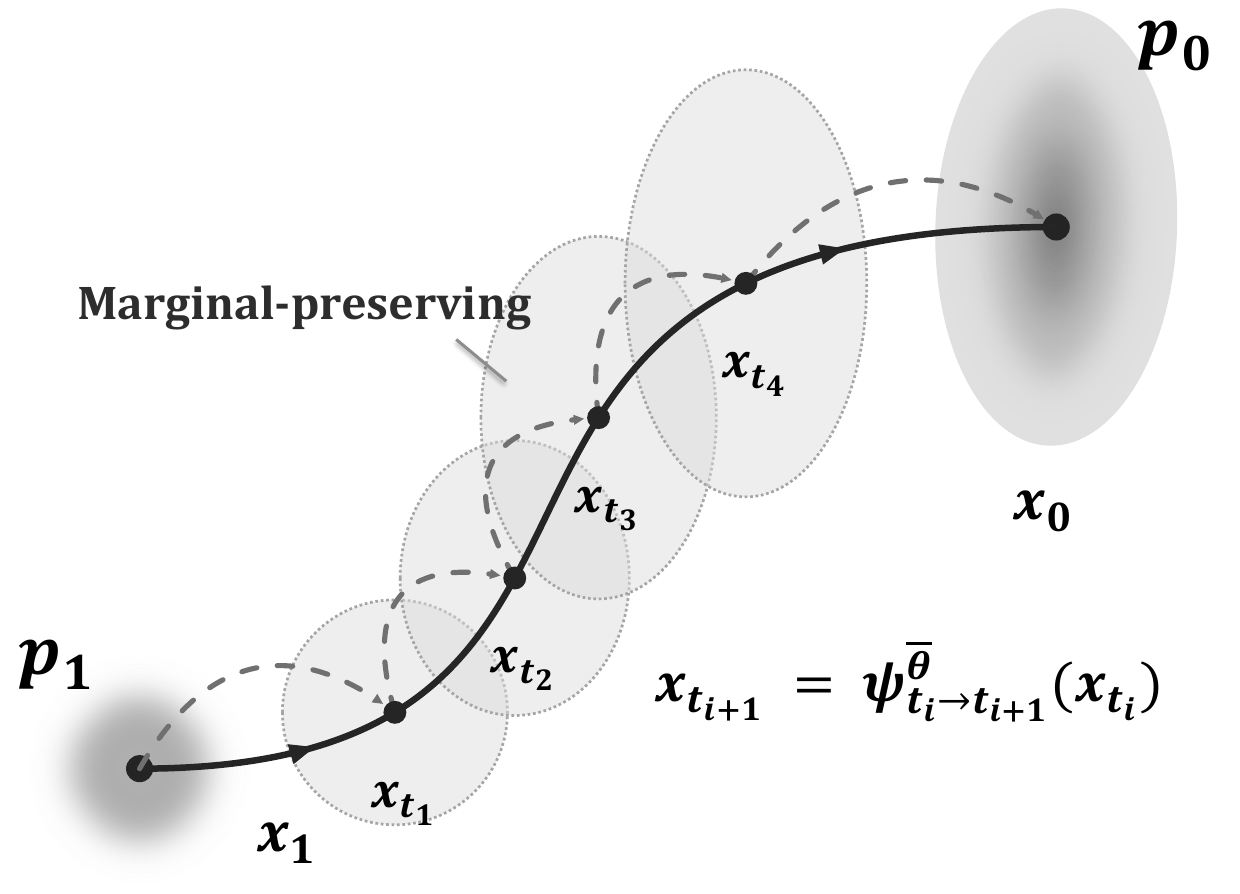}\par
\vspace{2pt}
{\small\bfseries (2) FlowMap OPD rollout\par}
\end{minipage}\hfill
\begin{minipage}[t]{0.33\textwidth}
\centering
\includegraphics[width=\linewidth]{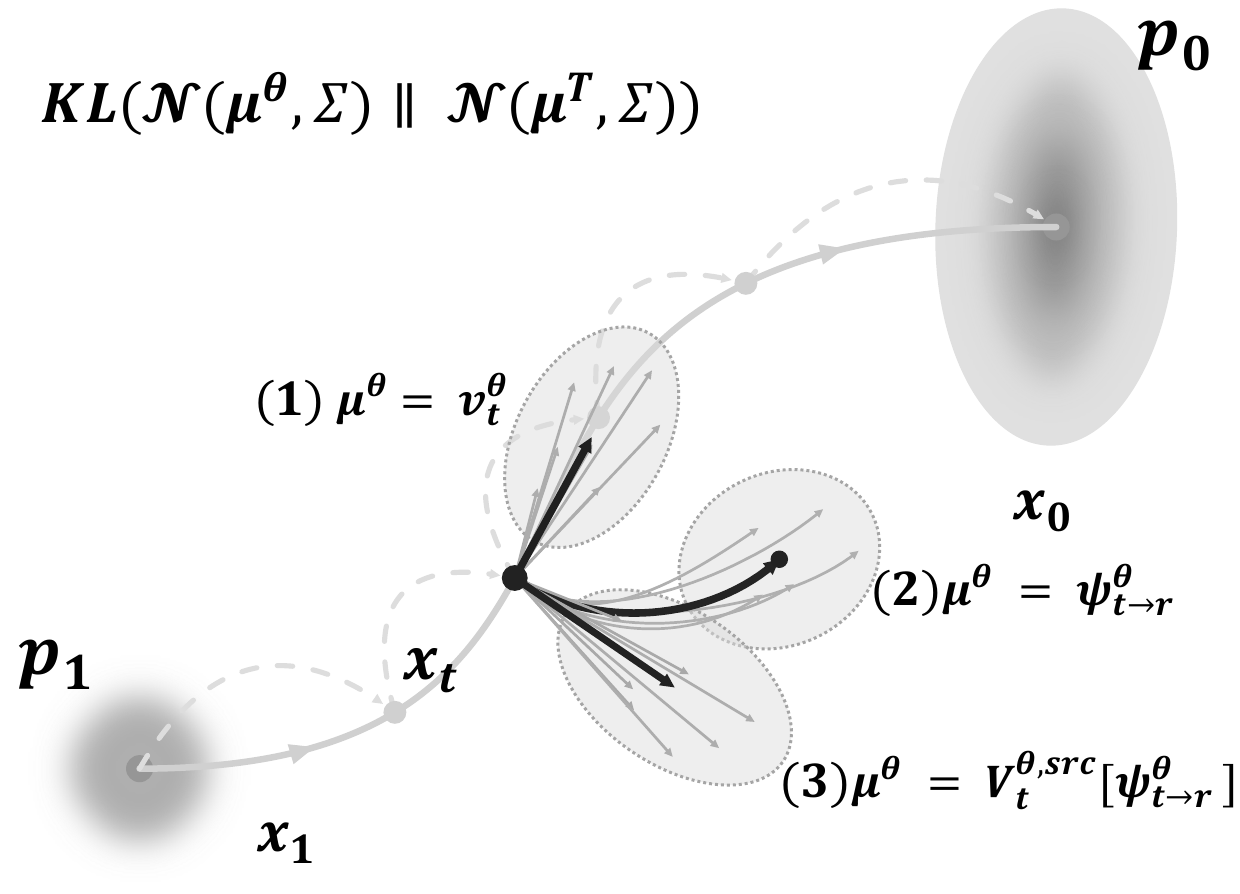}\par
\vspace{2pt}
{\small\bfseries (3) FlowMap OPD optimization\par}
\end{minipage}
\caption{\textbf{From local-transition OPD to few-step flow-map OPD with separate rollout and teacher comparison.}
\textbf{(1)} Flow/Diffusion OPD~\citep{fang2026flowopd,li2026diffusionopd} uses the same local transition distributions from $dx_t=[v_t^\theta-\tfrac{g_t^2}{2}\nabla_x\log p_t]dt+g_t\,d\overline W_t$ for rollout and teacher--student comparison.
\textbf{(2)} FlowMap-OPD collects states through long-range map rollouts, $x_{t_{i+1}}=\psi^{\theta}_{t_i\to t_{i+1}}(x_{t_i})$.
\textbf{(3)} Independently of the rollout transition, optimization compares teacher--student distributions through $\mathrm{KL}(\mathcal N(\mu^\theta,\Sigma)\|\mathcal N(\mu^T,\Sigma))$, with means constructed from direct velocities $v^\theta(x,t)$ (Section~\ref{sec:vc}), flow maps $\psi^\theta_{t\to r}(x)$ with their stochastic corrections (Section~\ref{sec:directmap}), or induced velocities $V^{\theta,\mathrm{src}}(x,r,t)$ and $V^{\theta,\mathrm{dst}}(x,r,t)$ (Section~\ref{sec:induced}).}
\label{fig:separation}
\end{figure*}

\section{Background}\label{sec:background}
We use $\theta$ for student parameters and $T$ for the fixed teacher. Generation runs from noise at $t=1$ toward data at $t=0$.

\paragraph{Continuous dynamics.}
Continuous-time generative models, including Flow Matching and diffusion
models~\citep{lipman2023flowmatching,song2021scorebased}, describe
transport from a simple noise distribution $p_1$ to the data distribution
$p_0=p_{\mathrm{data}}$ through a probability path
$\{p_t\}_{t\in[0,1]}$. In the ODE formulation, a state evolves according
to $dx_t/dt=v(x_t,t)$, starting from a noise sample $x_1\sim p_1$ and moving toward
$t=0$, where $v$ is the instantaneous velocity field. The probability
path satisfies the continuity equation
$\partial_t p_t+\nabla_x\!\cdot(p_t v)=0$.
The model specifies local motion $v_t^\theta(x):=v^\theta(x,t)$ at a given state and time,
and the sequential sampling in \eqref{eq:rk-chain-sampling} arises by
choosing a time grid $\Gcal=\{t_i\}_{i=0}^{N}$ with $1=t_0>t_1>\cdots>t_N=0$ and discretizing these dynamics on each
interval. With $h_i=t_i-t_{i+1}$, an explicit Euler step gives
$x_{t_{i+1}}=x_{t_i}-h_i v^\theta(x_{t_i},t_i)$, defining the Dirac kernel
$K_i^\theta(\cdot\mid x,\Gcal)=\delta_{x-h_i v^\theta(x,t_i)}$.

Stochastic sampling uses the SDE~\citep{song2021scorebased}
\begin{equation}
 dx_t=b_t(x_t)dt+g_t\,d\overline W_t,
 \label{eq:background-sde}
\end{equation}
where $b_t=v_t-\tfrac12g_t^2\nabla_x\log p_t$ and $\overline W_t$ is
reverse-time Brownian motion. Its discrete step is
$x_{t_{i+1}}=x_{t_i}-h_i b_{t_i}(x_{t_i})+g_{t_i}\sqrt{h_i}\,\xi_i$,
with $\xi_i\sim\mathcal N(0,\Id)$, giving a Gaussian transition.
With the same initial distribution, this SDE
and the ODE share the marginals $p_t$
(Appendix~\ref{app:ode-sde-marginals}). Discretizing the student's SDE defines the transition kernel
$K_i^\theta(\cdot\mid x,\Gcal)=\mathcal N(x-h_i b_{t_i}^\theta(x),\,h_i g_{t_i}^2\Id)$.

\paragraph{Flow maps.}
The flow map $\psi_{t\to r}:\mathbb R^d\to\mathbb R^d$ transports a
state from time $t$ to time $r$ along the deterministic ODE, satisfying
$\psi_{t\to r}(x_t)=x_r$ for any trajectory of $dx_t/dt=v(x_t,t)$.
Equivalently, $\psi_{t\to r}(x_t)=x_t-\int_r^t v(x_\tau,\tau)\,d\tau$.
The maps satisfy $\psi_{t\to t}=\mathrm{Id}$ and the composition
relation $\psi_{t\to r}=\psi_{s\to r}\circ\psi_{t\to s}$ for
$r\leq s\leq t$.
The flow map transports not only individual states but also their distributions:
if $x_t\sim p_t$, then $x_r=\psi_{t\to r}(x_t)\sim p_r$, giving the pushforward relation
$p_r=(\psi_{t\to r})_\#p_t$.
Since the ODE and its marginal-preserving SDE share the same probability
path, this deterministic map also represents the SDE's time marginals
through $p_t=(\psi_{1\to t})_\#p_1$. Thus, even when methods such as
DiffusionOPD and Flow-OPD~\citep{li2026diffusionopd,fang2026flowopd} use discretized stochastic
dynamics to obtain tractable transition kernels $K$, the underlying
continuous-time marginals can be expressed through the ODE flow map.
Few-step models directly parameterize this long-range transport operator by
$\psi^\theta_{t\to r}$.

\section{Rollout--Kernel Separation for Flow-Map Distillation}
\label{sec:rk-theory}

Few-step flow-map generation brings together two transport
representations: a flow map for long-range transport and a velocity
field for instantaneous dynamics. Flow--velocity consistency links these
representations, motivating the use of different, compatible transport
representations for different roles in distillation. We develop this idea in three steps.
Section~\ref{sec:rk-theory} establishes how rollout and optimization
kernel can be specified separately in the OPD objective.
Section~\ref{sec:fm-transfer} establishes the flow--velocity consistency
relations that connect these transport representations and support their
use in different roles.
Section~\ref{sec:instances} uses these relations to construct concrete
supervision kernels and pair them with native MeanFlow and CM rollouts,
respecting each model's time parameterization.

\subsection{From trajectories to state marginals}
\label{sec:rk-trajectory}
\label{sec:rk-marginal}

Consider generation from noise at $t=1$ to data at $t=0$ on a fixed grid $\Gcal=(1=t_0>\cdots>t_N=0)$. Teacher and student share conditioning, coordinates, guidance, and the time convention. In trajectory-level OPD~\citep{li2026diffusionopd,fang2026flowopd}, the student samples
\begin{equation}
 x_{t_0}\sim p_1,\qquad
 x_{t_{i+1}}\sim K_i^\theta(\cdot\mid x_{t_i},\Gcal),
 \label{eq:rk-chain-sampling}
\end{equation} 
producing a trajectory $\tau\sim\Tcal(K^\theta;\Gcal,p_1)$, where $K_i^\theta$ is the student transition kernel and $\Tcal(K^\theta;\Gcal,p_1)$ denotes the trajectory distribution induced by this sampling procedure. Along the sampled trajectory, each student transition is compared with the teacher through the local discrepancy
\begin{equation}
 \ell_i^K(\theta,T;x,\Gcal)
 =\mathrm{KL}\!\left(K_i^\theta(\cdot\mid x,\Gcal)
                 \middle\|K_i^T(\cdot\mid x,\Gcal)\right).
 \label{eq:rk-local-kl}
\end{equation}
The overall trajectory-level OPD objective is
\begin{equation}
 \begin{aligned}
 J_K^\Gcal(\theta)
 &=\mathbb E_{\tau\sim\Tcal(K^\theta;\Gcal,p_1)}
 \left[\sum_{i=0}^{N-1}
 \ell_i^K(\theta,T;x_{t_i}(\tau),\Gcal)\right].
 \end{aligned}
 \label{eq:rk-trajectory-objective}
\end{equation}
Here $x_{t_i}(\tau)$ is the state visited at time $t_i$ in the sampled trajectory $\tau$.  In this formulation, the same student transition $K^\theta$ serves two roles: it generates the rollout and defines the transition compared with the teacher. State acquisition and supervision are therefore tied to the same transition family, as illustrated in Figure~\ref{fig:separation}(1).

Our starting point for separating these roles is the locality and additivity of the objective. Once a query state is obtained by sampling a rollout trajectory, and thus follows the corresponding state marginal, each teacher--student discrepancy can be evaluated without the rest of the trajectory or the procedure used to collect it. Its expectation therefore depends only on the marginal state distribution at that query. More formally, let $d_i^{\Tcal,\theta}(\cdot\mid\Gcal)$ be the state marginal at time $t_i$ induced by the rollout $\Tcal$ above, where $d_0^{\Tcal,\theta}(\cdot\mid\Gcal)=p_1$ and $d_{i+1}^{\Tcal,\theta}(dy\mid\Gcal)=\int K_i^\theta(dy\mid x,\Gcal)\,d_i^{\Tcal,\theta}(dx\mid\Gcal)$, and let $X_{t_i}$ denote a state sampled from this marginal. Exchanging the finite sum and expectation gives
\begin{equation}
 J_K^\Gcal(\theta)
 =\sum_{i=0}^{N-1}\mathbb E_{X_{t_i}\sim d_i^{\Tcal,\theta}(\cdot\mid\Gcal)}
                  [\ell_i^K(\theta,T;X_{t_i},\Gcal)].
 \label{eq:rk-original-marginals}
\end{equation}
Equation~\eqref{eq:rk-original-marginals} expresses local kernel
optimization through state marginals, but these marginals are still
defined through sequential sampling. The continuous dynamics and flow maps
in Section~\ref{sec:background} express these marginals directly.

\paragraph{Marginal formulation.}
We write the continuous-time state marginal as
\begin{equation}
 d_t^{\psi,\theta}=(\psi^\theta_{1\to t})_\#p_1,
 \label{eq:rk-map-pushforward}
\end{equation}
which is the limit of the rollout marginal
$d_i^{\Tcal,\theta}(\cdot\mid\Gcal)$ in \eqref{eq:rk-original-marginals}
as $N\to\infty$ (Appendix~\ref{app:continuous-queries}).
For the student and teacher SDEs in Eq.~\eqref{eq:background-sde} with a shared noise coefficient $g_t>0$, discretization gives Gaussian transition kernels whose conditional KL is exactly
$\ell_i^K(\theta,T;x,\Gcal)=h_i\|b_{t_i}^\theta(x)-b_{t_i}^T(x)\|^2/(2g_{t_i}^2)$.
Thus Eq.~\eqref{eq:rk-original-marginals} sums the drift discrepancy weighted by each time interval. As the grid is refined with $N\to\infty$ and its state marginals converge, this sum yields the continuous-time objective (Appendix~\ref{app:continuous-queries}):
\begin{equation}
 J(\theta)
 =\mathbb E_{t,\;x\sim d_t^{\psi,\theta}}
       \left[\frac{\|b_t^\theta(x)-b_t^T(x)\|^2}{2g_t^2}\right].
 \label{eq:rk-time-integral}
\end{equation}

\paragraph{The separated objective.}
Building on the state-marginal formulation of the original OPD objective in Eq.~\eqref{eq:rk-time-integral}, we define a general FlowMap-OPD objective with a chosen local comparison $\ell$:
\begin{equation}
 \mathcal L(\theta)
 =\mathbb E_{z\sim\rho^\theta}
     [\ell(\theta,T;z)].
 \label{eq:organizing-objective}
\end{equation}
Here $z=(x_t,t)$ for single-time comparisons and $z=(x_t,t,r)$ for two-time comparisons, and $\rho^\theta$ denotes the corresponding joint distribution of student-generated states and comparison times. This distribution is independent of the rollout procedure, provided that the state marginals and the sampling distribution of comparison times are preserved.
The loss $\ell$ specifies teacher--student distribution matching through a discrete kernel KL or its weighted regression form, optionally with a consistency penalty; Sections~\ref{sec:directmap}, \ref{sec:induced}, and~\ref{sec:vc} give the concrete choices.

\textbf{As shown in Eq.~\eqref{eq:organizing-objective}, regardless of how states are obtained, preserving their marginal
distribution at each query time preserves the objective for a fixed
local comparison and time weighting.} Consequently, the rollout can be
chosen for efficient state acquisition, while the optimization kernel
is chosen for teacher--student comparison at those states. The two
roles need not use the same transition: a flow-map rollout can supply
states for a local velocity comparison, provided that it preserves the
required marginals. This is the basis of \textbf{rollout--kernel
separation}.
Keeping the original comparison and matching its state marginals and time weighting recovers the original objective. Choosing a different comparison instead defines a new distillation objective within the same framework;
Section~\ref{sec:fm-transfer} establishes how flow--velocity consistency
connects different transport representations to the same deployed map.

\begin{figure*}[t]
\centering
\includegraphics[width=\textwidth]{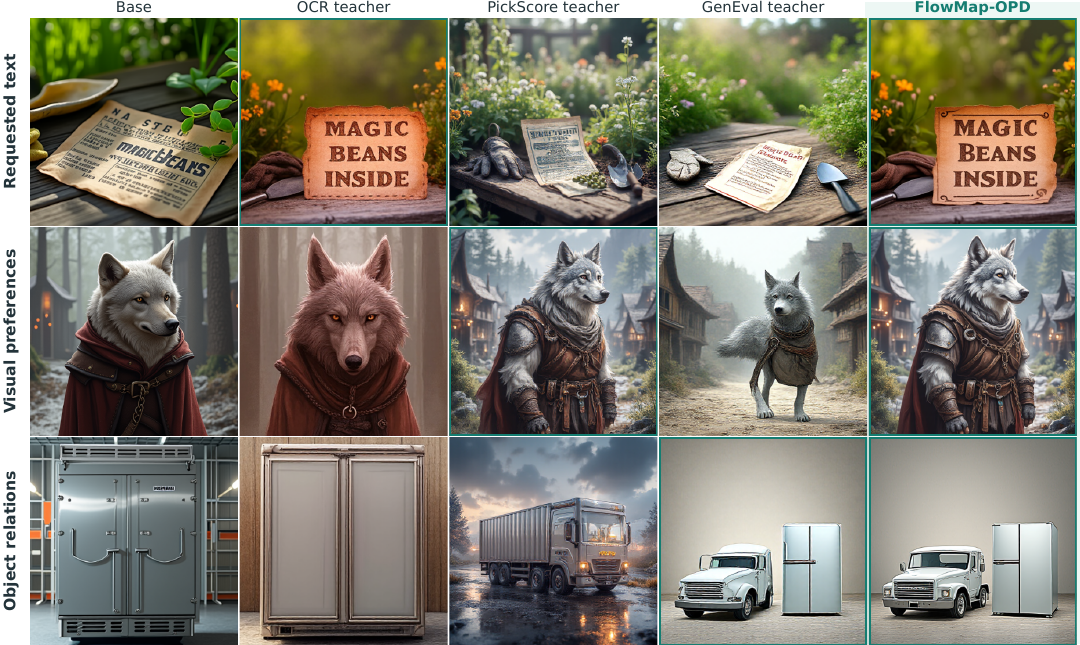}
\caption{\textbf{Specialist capabilities consolidated in the same few-step student.}
Each row compares the base, all three specialists, and FlowMap-OPD on the same
prompt and initial noise with the same few-step sampler. Gold borders mark the
specialist for each task; the green column shows the same student across all
three tasks: a seed packet labeled ``Magic Beans Inside,'' a medieval wolf
adventurer, and a truck to the left of a refrigerator. Full task scores are
reported in Table~\ref{tab:three-teacher}.}
\label{fig:capability-comparison}
\end{figure*}

\subsection{Optimizing with separated rollouts}
\label{sec:rk-roles}
\label{sec:rk-replacement}

Based on the marginal formulation above, we optimize FlowMap-OPD by collecting student states through a chosen rollout and applying a separately specified teacher--student distribution comparison at those states.

\paragraph{Rollout.}
Let $\Tcal$ be an arbitrary student rollout policy, with state marginals
$d_t^{\Tcal,\theta}$. At query times covering the supervised intervals and weighted
according to the objective's time average, it supplies training states through
\begin{equation}
 x_t\sim d_t^{\Tcal,\theta}.
 \label{eq:rk-rollout-sample}
\end{equation}
To preserve the continuous flow-map objective, the rollout must retain
$d_t^{\Tcal,\theta}=(\psi^\theta_{1\to t})_\#p_1$.
For example, starting from $x_{t_0}\sim p_1$, the rollout can use
few-step long-range updates
$x_{t_{i+1}}=\psi^\theta_{t_i\to t_{i+1}}(x_{t_i})$, as in MeanFlow,
or numerically integrate the instantaneous velocity, as in Flow
Matching, e.g., $x_{t_{i+1}}=x_{t_i}-h_i v^\theta(x_{t_i},t_i)$
with small $h_i=t_i-t_{i+1}$. In either case, the visited states supply
training queries.

\paragraph{Optimization kernel.}
The optimization kernel specifies the local teacher--student
distribution comparison. We evaluate the local loss
$\ell(\theta,T;x_t,t)$ over the rollout-supplied marginals, as in
\eqref{eq:rk-original-marginals} and \eqref{eq:rk-time-integral}.
Concretely, the comparison can match flow-map predictions
(Section~\ref{sec:directmap}), map-induced velocities
(Section~\ref{sec:induced}), or instantaneous velocity predictions with a
flow--velocity consistency penalty (Section~\ref{sec:vc}).
The corresponding kernel KL determines the matching discrepancy
and its weighting. Holding sampled states fixed, we update
\begin{equation}
 \theta\leftarrow\theta-\eta\nabla_\theta
               \ell(\theta,T;x_t,t),
 \label{eq:rk-sampled-update}
\end{equation}
where $\eta$ is the learning rate. The following result states when
changing the rollout preserves this optimization objective.

\begin{theorem}[Marginal-preserving rollout replacement]
\label{thm:rk-replacement}
Fix $\theta$, the local comparison, and time weighting. Suppose both
rollouts use the same sampling rules for all comparison times,
including any conditioning on the sampled state or other times. If
$d_t^{\Tcal,\theta}=d_t^{\widetilde\Tcal,\theta}$ at almost every queried time,
the two rollouts give the same expected local loss and the same
expected update in \eqref{eq:rk-sampled-update}, provided the loss and
its state-fixed gradient are integrable.
\end{theorem}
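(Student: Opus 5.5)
The plan is to write the expected local loss under an arbitrary rollout as an integral against the joint law $\rho^\theta_{\Tcal}$ of the query $z$ (state together with comparison times). We then show that this joint law depends on the rollout only through its state marginals at the queried times.

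\textbf{Step 1: disintegrate the query law.} Because both rollouts use the same sampling rules for comparison times, we factor the joint law. For a single-time query $z=(x_t,t)$, write $\rho^\theta_{\Tcal}(dx,dt)=\mu(dt)\,d_t^{\Tcal,\theta}(dx)$, where $\mu$ is the fixed time weighting. For a two-time query $z=(x_t,t,r)$, write
\[
\rho^\theta_{\Tcal}(dx,dt,dr)=\mu(dt)\,d_t^{\Tcal,\theta}(dx)\,\kappa(dr\mid x,t),
\]
where $\kappa$ is the shared conditional rule for the extra times. This rule may depend on the sampled state, which is why the hypothesis ``same sampling rules, including any conditioning on the sampled state'' is needed. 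The same factorization holds for $\widetilde\Tcal$ with the same $\mu$ and $\kappa$. The key observation is that the local loss $\ell(\theta,T;z)$ is evaluated at the query alone. It does not use the rest of the trajectory, exactly as in the passage from \eqref{eq:rk-trajectory-objective} to \eqref{eq:rk-original-marginals}. Integrating out the other trajectory coordinates therefore leaves only this marginal.

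\textbf{Step 2: compare the laws and integrate.} The marginals $d_t^{\Tcal,\theta}$ and $d_t^{\widetilde\Tcal,\theta}$ agree for $\mu$-a.e.\ $t$. Since $\mu$ and $\kappa$ are shared, the two joint laws coincide: $\rho^\theta_{\Tcal}=\rho^\theta_{\widetilde\Tcal}$. To verify this, evaluate both laws on product sets and apply a monotone class argument, using Fubini/Tonelli for the iterated integrals. The null set of times where the marginals differ contributes nothing. Integrability of $\ell$ lets us apply Fubini, so $\mathbb E_{\Tcal}[\ell]=\mathbb E_{\widetilde\Tcal}[\ell]$. For the update \eqref{eq:rk-sampled-update}, take $\theta$ as fixed and the sampled $z$ as held fixed, i.e.\ with stop-gradient on the state. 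The state-fixed gradient $\nabla_\theta\ell(\theta,T;z)$ is then a measurable function of $z$ alone. Its integrability allows the same identity componentwise, so the expected updates coincide.

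\textbf{Main obstacle.} The delicate step is the disintegration in Step 1 when comparison times are sampled jointly with, or conditionally on, trajectory features beyond the current state. The hypothesis must be read as saying that the conditional law of the extra times given $(x_t,t)$ is the same kernel $\kappa$ under both rollouts. If that kernel instead depended on other trajectory coordinates, agreement of the marginals would not suffice. I would state this reading explicitly and invoke the existence of regular conditional distributions on Polish spaces to justify the factorization. A second point concerns the gradient: it is the state-fixed gradient, not the derivative of $\theta\mapsto\mathbb E_{\rho^\theta}[\ell]$, which would also differentiate through the marginals. No dominated-convergence interchange is therefore needed.
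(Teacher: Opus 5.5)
Your proposal is correct and follows essentially the paper's argument. The paper exchanges the finite sum with the path integral, since each summand depends only on the queried coordinate, and writes both objectives as integrals of the same local loss against the augmented query law $\gamma(d\Gcal)\,a_i(\Gcal)\,d_i^{\Tcal,\theta}(dx\mid\Gcal)\,q(d\zeta\mid x,i,\Gcal)$; matching marginals with shared $\gamma,a_i,q$ makes the two laws equal, and the same identity applied componentwise gives the equal state-fixed update. The only difference is that the paper obtains your factorization by construction (index drawn independently of the path, auxiliary times from a kernel in $(x,i,\Gcal)$, marginals conditioned on the grid) rather than by disintegration, so your appeal to regular conditional distributions is unnecessary but harmless.
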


The proof and a bound for unequal marginals appear in
Appendix~\ref{app:replacement}. Changing the comparison kernel defines
a new local objective.
\textbf{The transition kernel used for teacher--student distribution
matching need not be the transition used by the rollout to collect
training states.} For example, long-range flow-map updates can supply states
for velocity-kernel comparison.
When the two roles use different transport representations,
flow--velocity consistency connects the supervised object to the map
used for generation. Section~\ref{sec:fm-transfer} establishes this
connection and its error bounds; Section~\ref{sec:instances} develops
the resulting rollout--kernel pairings.

\section{Supervision Interfaces and Consistency}
\label{sec:fm-transfer}

By introducing long-range predictions alongside instantaneous
velocities, flow-map generators naturally provide two representations
of transport. Their consistency must be maintained during training
to prevent a mismatch between the rollout and the optimization kernel.
We establish this connection and quantify how consistency errors
affect native few-step generation.

\subsection{Flow--velocity consistency}
\label{sec:fm-predictions}
\label{sec:fm-consistency}

Consider a velocity field $v$ and a candidate long-range flow map
$\psi_{t\to r}$. The flow-map
definition gives two local consistency relations. At the source,
taking a short velocity step before applying the remaining map leaves
the destination unchanged to first order:
$\psi_{t\to r}(x)=\psi_{t-\varepsilon\to r}
(x-\varepsilon v(x,t))+o(\varepsilon)$.
At the destination, extending transport by a short step follows the
arrival velocity:
$\psi_{t\to r-\varepsilon}(x)=\psi_{t\to r}(x)
-\varepsilon v(\psi_{t\to r}(x),r)+o(\varepsilon)$.
Taking the first-order limit as $\varepsilon\to0$ in these two views,
illustrated in Figure~\ref{fig:tworoutes}, yields the source and destination
consistency identities~\citep{geng2025meanflow,boffi2024fmm}
(Appendix~\ref{app:local-consistency-derivation}):
\begin{equation}
 \begin{aligned}
 \partial_t\psi_{t\to r}(x)
 +(J_x\psi_{t\to r}(x))v(x,t)&=0,\\
 \partial_r\psi_{t\to r}(x)
 &=v(\psi_{t\to r}(x),r).
 \end{aligned}
 \label{eq:fm-consistency-relations}
\end{equation}
The first keeps the predicted destination fixed as the source moves;
the second makes the destination move according to the same dynamics.

A learned map may depart from these identities. We define its
consistency residuals directly from the map derivatives:
\begin{align}
 c^{\mathrm{src}}(x,r,t)
 &=-\partial_t\psi_{t\to r}(x)
   -(J_x\psi_{t\to r}(x))v(x,t),
 \label{eq:fm-csrc}\\
 c^{\mathrm{dst}}(x,r,t)
 &=\partial_r\psi_{t\to r}(x)
     -v(\psi_{t\to r}(x),r).
 \label{eq:fm-cdst}
\end{align}
Both residuals vanish when \eqref{eq:fm-consistency-relations} holds.
Their squared norms $\|c^{\mathrm{src}}(x,r,t)\|^2$ and
$\|c^{\mathrm{dst}}(x,r,t)\|^2$ provide consistency losses; the theorem below
explains what satisfying these relations determines about the map.

\begin{theorem}[Flow-map identification from consistency]
\label{thm:fm-exact}
Let $v$ be a velocity field and $\psi^v_{t\to r}$ its flow map as
defined by the ODE in Section~\ref{sec:background}. Let
$\psi_{t\to r}$ be another family of maps with
$\psi_{t\to t}(x)=x$. Under the transport regularity conditions of
Assumption~\ref{ass:fm-regularity}, if
$\psi$ and $v$ satisfy either the source or destination consistency
relation in \eqref{eq:fm-consistency-relations} throughout the
comparison domain, then
\begin{equation}
 \psi_{t\to r}(x)=\psi^v_{t\to r}(x).
 \label{eq:fm-identification}
\end{equation}
\end{theorem}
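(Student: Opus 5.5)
We treat the two consistency relations separately. In each case we reduce the relation to an ODE along a suitable curve and conclude by uniqueness for that ODE. Assumption~\ref{ass:fm-regularity} is taken to supply what this needs: $v$ is locally Lipschitz in $x$ and continuous in $t$, so that $\psi^v$ exists, is unique, is $C^1$, and satisfies the composition law on the comparison domain; $\psi_{t\to r}$ is $C^1$ in $(t,r,x)$; and the domain is closed under the paths used below.

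\emph{Destination case.} Fix $t$ and $x$, and define $y(r):=\psi_{t\to r}(x)$ for $r$ ranging from $t$ toward $0$. The destination relation says $y'(r)=v(y(r),r)$, and the normalization $\psi_{t\to t}=\mathrm{Id}$ gives $y(t)=x$. The ODE trajectory $r\mapsto\psi^v_{t\to r}(x)$ solves the same initial value problem. By Picard--Lindel\"of uniqueness under the Lipschitz condition, the two curves agree for all $r$ in the domain. Hence $\psi_{t\to r}(x)=\psi^v_{t\to r}(x)$.

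\emph{Source case.} Fix the endpoint $r$ and the point $x$, and let $t$ be the source time. Consider the ODE trajectory through $x$ at time $t$, namely $x_s:=\psi^v_{t\to s}(x)$ for $s$ between $r$ and $t$. Define $\phi(s):=\psi_{s\to r}(x_s)$. By the chain rule and $\dot x_s=v(x_s,s)$,
\[
\phi'(s)=\partial_s\psi_{s\to r}(x_s)+\bigl(J_x\psi_{s\to r}(x_s)\bigr)v(x_s,s)=-c^{\mathrm{src}}(x_s,r,s)=0 .
\]
So $\phi$ is constant on the interval, which gives $\psi_{t\to r}(x)=\phi(t)=\phi(r)=\psi_{r\to r}(x_r)=x_r=\psi^v_{t\to r}(x)$. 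In this case no uniqueness theorem is applied to $\psi$; only existence of the characteristic curve $x_s$ is used. This is the method-of-characteristics argument for the linear transport PDE.

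\emph{Main obstacle.} The delicate step is the domain and regularity bookkeeping, not the computation. In the source case, the characteristic $(x_s,s)$ for $s\in[r,t]$ must stay inside the comparison domain where the relation holds. It is also needed that $\phi$ is $C^1$, or at least absolutely continuous, so that $\phi'=0$ implies $\phi$ is constant. In the destination case, the Lipschitz bound on $v$ must hold along both curves, so that the two solutions stay where uniqueness applies. Both points would be settled by appealing directly to Assumption~\ref{ass:fm-regularity}: the comparison domain is taken to be $\{(x,t,r):0\le r\le t\le1\}$ with $x$ ranging over all of $\mathbb R^d$, or over a set invariant under $\psi^v$. If the assumption only provides local Lipschitz continuity, the fallback is to add a no-blow-up (linear growth) condition so that trajectories exist on all of $[r,t]$.
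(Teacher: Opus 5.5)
Your proposal is correct and follows the paper's proof essentially step for step. The destination case uses ODE uniqueness for $r\mapsto\psi_{t\to r}(x)$ with initial value $x$. The source case shows that $q\mapsto\psi_{q\to r}(\psi^v_{t\to q}(x))$ has zero derivative along the true characteristic, then evaluates it at the identity boundary. The domain and regularity issues you flag are exactly what Assumption~\ref{ass:fm-regularity} is stated to supply: trajectories remaining in the comparison domain, continuity up to the diagonal, and unique nonexplosive flows. So no extra linear-growth fallback is needed.
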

The proof is given in Appendix~\ref{app:exact-transfer}.

\begin{figure}[t]
\centering
\begin{minipage}[t]{0.49\linewidth}
\centering
\begin{tikzpicture}[x=1cm,y=1cm,font=\scriptsize,
  line cap=round,line join=round,>=stealth]
\path[use as bounding box] (0,-0.14) rectangle (3.85,2.78);
\coordinate (a) at (0.25,0.35);
\coordinate (c) at (0.90,0.42);
\coordinate (b) at (3.45,2.00);
\draw[black!35,line width=1.1pt]
  (a) .. controls (1.80,0.35) and (1.90,2.00) .. (b);
\draw[->,black!65,line width=1.1pt,dash pattern=on 3pt off 2pt]
  (a) .. controls (0.12,2.42) and (2.64,2.72) .. (b);
\node at (1.76,2.56) {$\psi_{t\to r}$};
\draw[->,black!65,line width=1.1pt,dash pattern=on 3pt off 2pt]
  (c) .. controls (2.27,0.14) and (3.67,0.77) .. (b);
\node at (2.64,0.16) {$\psi_{t-\varepsilon\to r}$};
\draw[->,black,line width=1.4pt] (a)--(c);
\node at (0.65,0.03) {$-\varepsilon v_t$};
\node[above] at (0.78,0.54) {$x_{t-\varepsilon}$};
\node[left] at (0.25,0.38) {$x_t$};
\node[above] at (3.45,2.10) {$x_r$};
\foreach \p in {a,c,b} \fill[black!85] (\p) circle (1.45pt);
\end{tikzpicture}\par
\vspace{1pt}{\small\bfseries (a) Source (src)\par}
\end{minipage}\hfill
\begin{minipage}[t]{0.49\linewidth}
\centering
\begin{tikzpicture}[x=1cm,y=1cm,font=\scriptsize,
  line cap=round,line join=round,>=stealth]
\path[use as bounding box] (0,-0.14) rectangle (3.85,2.78);
\coordinate (a) at (0.25,0.35);
\coordinate (c) at (2.80,1.93);
\coordinate (b) at (3.45,2.00);
\draw[black!35,line width=1.1pt]
  (a) .. controls (1.80,0.35) and (1.90,2.00) .. (b);
\draw[->,black!65,line width=1.1pt,dash pattern=on 3pt off 2pt]
  (a) .. controls (0.12,2.42) and (2.64,2.72) .. (b);
\node at (1.76,2.56) {$\psi_{t\to r-\varepsilon}$};
\draw[->,black!65,line width=1.1pt,dash pattern=on 3pt off 2pt]
  (a) .. controls (1.92,0.02) and (2.95,0.58) .. (c);
\node at (2.00,0.16) {$\psi_{t\to r}$};
\draw[->,black,line width=1.4pt] (c)--(b);
\node at (3.13,1.60) {$-\varepsilon v_r$};
\node[above] at (2.65,1.99) {$x_r$};
\node[left] at (0.25,0.38) {$x_t$};
\node[above] at (3.45,2.10) {$x_{r-\varepsilon}$};
\foreach \p in {a,c,b} \fill[black!85] (\p) circle (1.45pt);
\end{tikzpicture}\par
\vspace{1pt}{\small\bfseries (b) Destination (dst)\par}
\end{minipage}
\caption{\textbf{Source and destination consistency.}
Gray S-curves show the continuous trajectory; dashed arcs show separate long- and short-range flow-map jumps, and black straight arrows show local velocity steps.
\textbf{(a) Source:} the direct map $\psi_{t\to r}$ agrees to first order with a step
$x_{t-\varepsilon}=x_t-\varepsilon v_t$ followed by $\psi_{t-\varepsilon\to r}$.
\textbf{(b) Destination:} $\psi_{t\to r-\varepsilon}$ agrees to first order with
$\psi_{t\to r}$ followed by $x_{r-\varepsilon}=x_r-\varepsilon v_r$.
Here $v_t=v(x_t,t)$ and $v_r=v(x_r,r)$. These two routes give the identities in Eq.~\ref{eq:fm-consistency-relations}.}
\label{fig:tworoutes}
\end{figure}

We assume a consistent teacher: $\psi^T$ is the flow map induced by
$v^T$ and satisfies both consistency relations.
During learning, we constrain the student flow map to satisfy either
source or destination consistency with its velocity field, and match
that field to the teacher velocity. When these constraints hold,
i.e., $c^{\theta,\mathrm{src}}=0$ or $c^{\theta,\mathrm{dst}}=0$,
together with $v^\theta=v^T$, Theorem~\ref{thm:fm-exact} gives
\begin{equation}
 \psi_{t\to r}^\theta=\psi^T_{t\to r}.
 \label{eq:fm-teacher-identification}
\end{equation}
where $c^{\theta,\mathrm{src}}$ and $c^{\theta,\mathrm{dst}}$ denote
the student consistency residuals.
Composing the student maps
therefore recovers the teacher transport on any valid sampling grid.
Thus consistency makes native map and exact velocity rollouts supply
the same marginals, while teacher-velocity matching identifies the
long-range transport to be learned. Corollary~\ref{cor:app-zero-loss}
gives the conditions for recovering these relations from population
losses.

Approximate velocity matching and flow--velocity consistency can jointly
control the student's transport error through either the source or
destination relation (Theorems~\ref{thm:fm-finite-error}
and~\ref{thm:fm-destination-error} in
Appendices~\ref{app:local-error} and~\ref{app:destination-transfer}).
With exact consistency, both bounds depend only on velocity error.

\subsection{Native rollout families}\label{sec:instances}\label{sec:fm-interfaces}
The consistency relations support several ways to supervise the same long-range generator. We first specify its native rollout, then construct flow-map and induced-velocity comparisons, before developing instantaneous velocity supervision with independently tunable consistency.

Although training states can be collected through multi-step velocity
integration, we use native few-step flow-map rollouts for more efficient
state acquisition.
We consider two classes of flow-map parameterizations: fixed-endpoint
maps $\psi_{t\to0}$, which vary the source time while keeping the destination fixed,
and two-time maps $\psi_{t\to r}$, which allow both source and destination times to
vary within $0\leq r\leq t\leq1$. We use CM and MeanFlow as
representative examples, respectively, and retain their native
few-step rollouts to collect training states.

\paragraph{Fixed-endpoint maps: CM.}
A CM-style model predicts a fixed endpoint
$G^\theta(x,t)=\psi^\theta_{t\to0}(x)$, with $G^\theta(x,0)=x$.
Its network output $f^\theta$ is converted to this prediction through
the checkpoint's preconditioning:
\begin{equation}
 G^\theta(x,t)=c_{\rm skip}(t)x+c_{\rm out}(t)
 f^\theta(c_{\rm in}(t)x,c_{\rm noise}(t)).
 \label{eq:cmprecondition}
\end{equation}
The native rollout predicts the endpoint and re-noises it to the next
time $r<t$:
\begin{equation}
 x_r=\alpha_rG^\theta(x_t,t)+\beta_r\xi,
 \qquad \xi\sim\mathcal N(0,\Id),\quad 0<r<t,
 \label{eq:scmrollout}
\end{equation}
where $\alpha_r,\beta_r$ follow the model's noise schedule.

\paragraph{Two-time maps: MeanFlow.}
MeanFlow~\citep{geng2025meanflow} parameterizes the map through an
average-velocity network $v^\theta_{t\to r}$; its diagonal gives the instantaneous
velocity:
\begin{equation}
 \begin{aligned}
 \psi_{t\to r}^{\theta}(x)&=x-hv^\theta_{t\to r}(x),\qquad h=t-r,\\
 v^\theta(x,t)&=v^\theta_{t\to t}(x).
 \end{aligned}
 \label{eq:fm-map-velocity}
\end{equation}
The native rollout advances through this long-range flow map:
\begin{equation}
 x_{t_{i+1}}=\psi^\theta_{t_i\to t_{i+1}}(x_{t_i}).
 \label{eq:meanflowstep}
\end{equation}
Although the same model exposes an instantaneous velocity on the
diagonal, which can be used to collect rollout states, we retain native
flow-map sampling for its few-step efficiency.

\subsection{Flow-map comparisons}\label{sec:directmap}
Our first supervision interface uses the stochastic flow-map kernels derived from Flow-Map GRPO~\citep{li2026flowmapgrpo} to compare student and teacher predictions. We focus on two-time models, using MeanFlow's average-velocity parameterization as a representative example; the single-time formulation for consistency models is discussed in Appendix~\ref{app:endpoint-kernels}.

\paragraph{Stochastic flow-map comparison.}
A conditional KL requires probability kernels. Flow-Map GRPO~\citep{li2026flowmapgrpo} generates stochastic trajectories
by randomizing each flow-map sampling step. Starting from
$x_{t_0}\sim p_1$, each step takes the current state $x=x_{t_i}$ at
$t=t_i$ and samples the next state at $r=t_{i+1}$. For the rectified
schedule, the local Gaussian sampling rule is
\begin{equation}
\begin{split}
 \widetilde x_r^\theta
 &=\left(1-\frac{\varepsilon\lambda^2}{1-r}\right)\psi^\theta_{t\to r}(x)\\
 &\quad-\varepsilon\lambda^2 v^\theta(\psi^\theta_{t\to r}(x),r)
 +\lambda\sqrt{\frac{2\varepsilon r}{1-r}}\,\xi.
\end{split}
 \label{eq:finite-kernel-main}
\end{equation}
Here $0<\varepsilon<r<1$, $r<t\leq1$, $\lambda>0$, and
$\xi\sim\mathcal N(0,\Id)$. Setting
$x_{t_{i+1}}=\widetilde x_{t_{i+1}}^\theta$ and repeating with independent
noise draws produces the stochastic trajectory. This step defines the
conditional kernel $K^\theta(\cdot\mid x,r,t)=\mathcal N(\mu^\theta,\frac{2\varepsilon\lambda^2r}{1-r}\Id)$,
where $\mu^\theta=\psi^\theta_{t\to r}(x)-\varepsilon\lambda^2[\psi^\theta_{t\to r}(x)/(1-r)+v^\theta(\psi^\theta_{t\to r}(x),r)]$.
The teacher kernel $K^T$ is constructed identically, replacing
$\theta$ with $T$ and keeping the same noise parameters.
We use this kernel for comparison at states supplied by the selected
rollout. Applying the conditional KL in \eqref{eq:rk-local-kl} gives
\begin{equation}
\begin{split}
 \ell_{\mathrm{map}}(\theta,T;x,r,t)
 &=\KL(K^\theta\|K^T)\\
 &=w_{\mathrm{map}}\bigl\|\bigl(1-\tfrac{\varepsilon\lambda^2}{1-r}\bigr)\Delta\psi
 -\varepsilon\lambda^2\Delta v\bigr\|^2,
\end{split}
 \label{eq:finite-kernel-loss}
\end{equation}
where $w_{\mathrm{map}}=(1-r)/(4\varepsilon\lambda^2r)$,
$\Delta\psi=\psi^\theta_{t\to r}(x)-\psi^T_{t\to r}(x)$, and
$\Delta v=v^\theta(\psi^\theta_{t\to r}(x),r)-v^T(\psi^T_{t\to r}(x),r)$. Thus OPD compares the
randomized flow-map transitions, including their velocity corrections.
Appendix~\ref{app:kernels} gives the derivation and the conditions under
which the underlying stochastic construction preserves the boundary
marginal.

\paragraph{Deterministic limit.}
As $\lambda\to0$, Eq.~\eqref{eq:finite-kernel-main} approaches the deterministic flow-map step, analogous to the ODE formulation in DiffusionOPD~\citep{li2026diffusionopd}. For fixed $0<\varepsilon<r<t\leq1$ and finite predictions, the variance-scaled loss
$\frac{2\varepsilon\lambda^2r}{1-r}\ell_{\mathrm{map}}$
converges as $\lambda\to0$ to
\begin{equation}
\ell_{\mathrm{map-reg}}=\tfrac12\|\psi^\theta_{t\to r}(x)-\psi^T_{t\to r}(x)\|^2.
\label{eq:map-reg-main}
\end{equation}
This is the scaled zero-noise limit, rather than the KL between the resulting point masses. The teacher target can be supplied by a compatible learned map or by integrating its velocity. With a shared average-velocity parameterization, the loss equals $h^2\|v^\theta_{t\to r}-v^T_{t\to r}\|^2/2$.

\subsection{Induced-velocity comparisons}\label{sec:induced}
We reuse the instantaneous-velocity kernel in \eqref{eq:rk-local-kl}, replacing its student velocity
prediction with one expressed through the long-range flow map using consistency.
The kernel construction and conditional KL remain the same.
Following MeanFlowNFT~\citep{huang2026meanflownft} for the source
construction, and using destination consistency for its counterpart,
we obtain

\begin{align}
 V^{\theta,\mathrm{src}}
 &=v^\theta_{t\to r}+h(\partial_tv^\theta_{t\to r}+(J_x v^\theta_{t\to r})v^\theta),\label{eq:indsrc}\\
 V^{\theta,\mathrm{dst}}
 &=\partial_r\psitheta_{t\to r}(x).
 \label{eq:inddst}
\end{align}
Substituting these predictions into the same velocity-kernel KL gives
the source-induced loss,
$\ell_{\mathrm{src}}=w_{\mathrm{src}}
\|V^{\theta,\mathrm{src}}-v^T(x,t)\|^2$;
the destination-induced loss is
$\ell_{\mathrm{dst}}=w_{\mathrm{dst}}
\|V^{\theta,\mathrm{dst}}-v^T(\psi^\theta_{t\to r}(x),r)\|^2$.
The kernels are evaluated at $(x,t)$ for source supervision and
$(\psi^\theta_{t\to r}(x),r)$ for destination supervision. Their
weights follow the same KL formula \eqref{eq:generalkl}.

Through flow--velocity consistency, the velocity-kernel KL supervises
the student's long-range flow map via its induced velocity prediction
(Proposition~\ref{prop:app-induced-consistency}).

\begin{algorithm}[t]
\caption{FlowMap-OPD training with separated rollout and supervision}
\label{alg:vc}
\begin{algorithmic}[1]
\Require Two-time student $\theta$, fixed teacher $T$, supervision choice, and optional consistency weight $\lambda_c\geq0$
\For{each training round}
 \State Collect native states using \eqref{eq:meanflowstep}; store states and times as detached queries
 \For{each update on the buffer}
  \State Sample $(x,t)$ with the prescribed weighting and comparison time $r$ when required
  \If{flow-map supervision}
   \State $\ell\gets\ell_{\mathrm{map}}$ from \eqref{eq:finite-kernel-loss}, or $\ell_{\mathrm{map-reg}}$ from \eqref{eq:map-reg-main} in the deterministic limit
  \ElsIf{induced-velocity supervision}
   \State Compute the source or destination predictor using \eqref{eq:indsrc} or \eqref{eq:inddst}
   \State $\ell\gets\ell_{\mathrm{src}}$ or $\ell_{\mathrm{dst}}$ with the corresponding teacher target (Section~\ref{sec:induced})
  \Else\Comment{Instantaneous-velocity supervision}
   \State $\ell\gets w_v\|v_t^\theta(x)-v_t^T(x)\|^2$
   \If{$\lambda_c>0$}
    \State Evaluate the chosen consistency penalty $\widetilde\ell_c$
    \State $\ell\gets\ell+\lambda_c\widetilde\ell_c$
   \EndIf
  \EndIf
  \State $\theta\gets\theta-\eta\nabla_\theta\ell$, holding queries fixed
 \EndFor
\EndFor
\State \Return the updated student with its native sampler
\end{algorithmic}
\end{algorithm}

\begin{figure*}[t]\centering
\includegraphics[width=\textwidth]{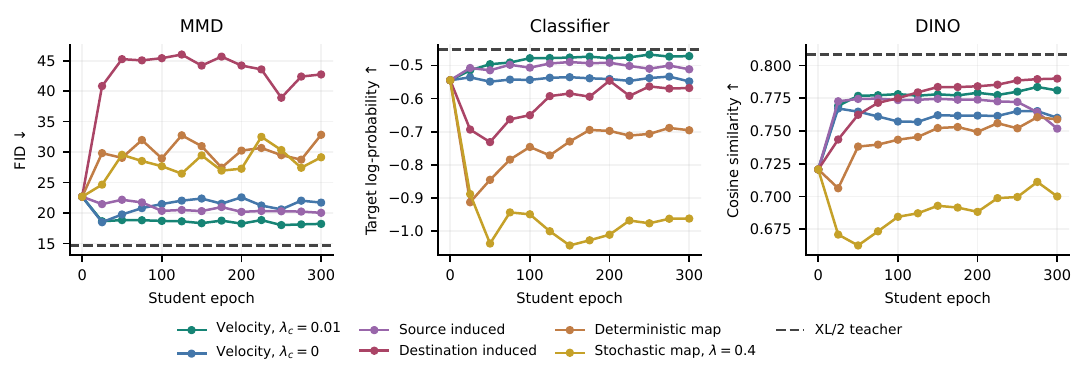}
\caption{\textbf{Cross-capacity transfer under three teacher rewards.}
Separate XL/2-to-B/2 runs compare representative supervision recipes.
Panels show MMD-setting FID, classifier target log-probability, and DINO
cosine similarity under native four-step generation. Every marker is a
recorded evaluation; curves stop at the common epoch 300. Dashed lines
show the frozen teacher actually used in each sweep. Complete coefficient
sweeps and five-step curves appear in Appendix~\ref{app:imagenet-results}.}
\label{fig:imagenet-design-selection}
\end{figure*}

\subsection{Instantaneous velocity supervision with tunable consistency}\label{sec:vc}
We use the velocity-matching loss induced by the local kernel KL in
\eqref{eq:rk-local-kl}, as in DiffusionOPD, and add flow--velocity
consistency:
\begin{equation}
\begin{split}
 \Lcal_{\mathrm{velocity}}(\theta)
 &=\E_{t,\,x\sim d_t^{\Tcal,\theta}}\bigl[
 w_v\norm{v_t^\theta-v_t^T}^2+\lambda_c\norm{c^\theta}^2\bigr].
\end{split}\label{eq:vc}
\end{equation}
Here $v_t^\theta$ and $v_t^T$ are evaluated at the sampled state $x$, and $c^\theta$ can be either $c^{\theta,\mathrm{src}}$ or
$c^{\theta,\mathrm{dst}}$, as defined in Section~\ref{sec:fm-consistency}.
The weight $w_v$ is determined by the velocity-kernel KL
(Appendix~\ref{app:kernels}), and the expectation also averages the
consistency interval $r<t$. For example, the MeanFlow source residual is
$c^{\theta,\mathrm{src}}=v^\theta_{t\to r}+
(t-r)(\partial_tv^\theta_{t\to r}+(J_xv^\theta_{t\to r})v^\theta)-v^\theta$.
The velocity term learns teacher dynamics; the consistency term
connects them to the long-range flow map, as quantified by
Theorems~\ref{thm:fm-finite-error} and~\ref{thm:fm-destination-error}.

\paragraph{Learned teacher maps and velocity-induced transport.}
The exact-recovery statements use a compatible teacher. In empirical
post-training, its learned map $\psi^T$ may differ from the ODE flow
$\psi^{v^T}$ induced by its velocity. The elementary decomposition
$\|\psi^\theta-\psi^T\|\leq
\|\psi^\theta-\psi^{v^T}\|+\|\psi^{v^T}-\psi^T\|$
separates student transfer error from this teacher representation gap.
Accordingly, reducing a student consistency residual need not monotonically
improve reproduction of the teacher's learned map.

\paragraph{Relation to source-induced-velocity supervision.}
We compare the instantaneous-velocity objective in Eq.~\eqref{eq:vc} with source-induced-velocity supervision from Section~\ref{sec:induced}. At the same source query, let $e^\theta=v_t^\theta(x)-v_t^T(x)$ and $c=c^{\theta,\mathrm{src}}(x,r,t)$. The source-induced predictor in Eq.~\eqref{eq:indsrc} satisfies $V^{\theta,\mathrm{src}}-v_t^T(x)=e^\theta+c$. Thus, suppressing kernel weights, its teacher-matching loss is
\begin{equation}
\|e^\theta+c\|^2=\|e^\theta\|^2+\|c\|^2+2\langle e^\theta,c\rangle.
\label{eq:source-error-decomposition}
\end{equation}
The cross term couples velocity error and consistency error, allowing them to reinforce or partly cancel. In contrast, the instantaneous-velocity objective in Eq.~\eqref{eq:vc} controls the two errors separately through $w_v\|e^\theta\|^2+\lambda_c\|c\|^2$. This makes $\lambda_c\geq0$ an independent design choice; $\lambda_c=0$ recovers velocity-only supervision. The algebra explains the distinction between objectives, while the ImageNet comparison determines our practical choice. These are forward-value identities; the stopped-target and stopped-JVP updates below have their own gradients.

\subsection{Algorithm and implementation}\label{sec:training}
We summarize FlowMap-OPD training in Algorithm~\ref{alg:vc}.
For a two-time student, each round collects native states, applies the selected flow-map, induced-velocity, or instantaneous-velocity comparison, and refreshes the queries with the updated student. Below we describe the sampling and implementation details.

\paragraph{Time sampling and state collection.}
All three supervision choices collect states with the current student on
$t_0>\cdots>t_N$:
\begin{equation}
 x_{t_{i+1}}=\psi^\theta_{t_i\to t_{i+1}}(x_{t_i}),\qquad x_{t_0}\sim p_1.
 \label{eq:randomrollout}
\end{equation}
Parameters are fixed during collection, and sampled states are detached
for optimization. ImageNet uses four steps following
MeanFlow~\citep{geng2025meanflow}; text-to-image uses five steps following
Flow-Map GRPO~\citep{li2026flowmapgrpo}. Our training formulation supports
random time grids, with the five-step sampling rule and its estimator
given in Appendix~\ref{app:random-grids}. CM rollouts use
\eqref{eq:scmrollout}, with endpoint comparisons in
Appendix~\ref{app:endpoint-kernels}.

\paragraph{Kernel evaluation and updates.}
For each detached query, we evaluate the selected objective from
Sections~\ref{sec:directmap}--\ref{sec:vc}. The rollout interval $h$, velocity-kernel
length $\delta$, and anchor interval $\varepsilon$ are chosen separately;
the few-step rollout does not fix the comparison scale.
Algorithm~\ref{alg:vc} holds sampled queries fixed during each update
(Proposition~\ref{prop:app-gradient}).

\paragraph{Gradient computation.}\label{sec:gradients}
For inputs $(x,r,t)$, we compute the source JVP in direction
$(v^\theta,0,1)$ and stop its gradient to avoid backpropagating through
the derivative computation. The source-consistency loss is
\begin{equation}
 \widetilde\ell_c
 =\norm{v^\theta_{t\to r}+h\,\sg[\partial_tv^\theta_{t\to r}+(J_x v^\theta_{t\to r})v^\theta]-v^\theta}^2.
 \label{eq:stoppedcons}
\end{equation}
We use the same stopped JVP in source-induced velocity supervision.
Implementation details are given in Appendix~\ref{app:stopped-jvp}.

\begin{table}[t]\centering\footnotesize
\caption{Three separate XL/2-to-B/2 reward-transfer settings. All students use epoch 300, with native four-step sampling. Columns report MMD-teacher transfer by FID, classifier transfer by target log-probability, and DINO transfer by cosine similarity. These are different reward settings, not three scores of one student. Best and second-best student results are bold and underlined, respectively.}
\label{tab:imagenet}\setlength{\tabcolsep}{3pt}
\begin{tabularx}{\columnwidth}{@{}Yrrr@{}}\toprule
Supervision & MMD & Classifier & DINO\\
 & FID $\downarrow$ & $\log p_y\uparrow$ & Cosine $\uparrow$\\\midrule
B/2 base & 22.68 & -0.5448 & 0.7207\\
XL/2 teacher & 14.68 & -0.4526 & 0.8084\\
\midrule\multicolumn{4}{@{}l}{\textit{Flow-map supervision}}\\
Stochastic map, $\lambda=0.1$ & 29.15 & -0.9623 & 0.7004\\
Stochastic map, $\lambda=0.2$ & 29.15 & -0.9621 & 0.7003\\
Stochastic map, $\lambda=0.4$ & 29.15 & -0.9619 & 0.7001\\
Stochastic map, $\lambda=0.8$ & 27.89 & -0.9813 & 0.6989\\
Deterministic map & 32.84 & -0.6956 & 0.7589\\
\midrule\multicolumn{4}{@{}l}{\textit{Induced-velocity supervision}}\\
Source induced & \underline{20.04} & \underline{-0.5118} & 0.7519\\
Destination induced & 42.74 & -0.5682 & \textbf{0.7901}\\
\midrule\multicolumn{4}{@{}l}{\textit{Instantaneous velocity supervision}}\\
Velocity, $\lambda_c=0$ & 21.72 & -0.5484 & 0.7601\\
Velocity, $\lambda_c=0.01$ & \textbf{18.23} & \textbf{-0.4720} & \underline{0.7811}\\
Velocity, $\lambda_c=0.1$ & 21.05 & -0.5378 & 0.7679\\
Velocity, $\lambda_c=1$ & 43.07 & -0.6710 & 0.0202\\
Velocity, $\lambda_c=10$ & 392.78 & -7.7761 & 0.0183\\
\bottomrule\end{tabularx}\end{table}

\section{Experiments}\label{sec:experiments}
Our main experiments use MeanFlow; supplementary results for CM students
appear in Appendix~\ref{app:imagenet-results} (Table~\ref{tab:cm-transfer}).
We first compare supervision choices on ImageNet by separately distilling
MMD-, classifier-, and DINO-adapted XL/2 teachers into B/2 students. This comparison selects the
velocity-supervision objective with a tunable consistency term. We then
use this objective family for text-to-image specialist consolidation and
study its consistency coefficient separately.

\subsection{Design-space exploration on ImageNet}\label{sec:imagenet}
We compare three supervision families by distilling XL/2 MeanFlow teachers
into B/2 students with the same native rollout and number of optimization steps.

\paragraph{Setup.}
For each reward (MMD distribution matching, classifier target log-probability,
or DINO cosine similarity), a student initialized from the same
pretrained B/2 model distills a pre-provided teacher optimized with
Flow-Map GRPO~\citep{li2026flowmapgrpo}. Each setting compares twelve
configurations spanning stochastic and deterministic flow-map supervision,
source- and destination-induced velocity, and instantaneous velocity with varying
consistency weights. All use deterministic four-step rollouts and are
compared at epoch 300. We evaluate 5,000 images from
ImageNet classes 0--99; FID uses 2,000 held-out reference images.
Appendix~\ref{app:imagenet-protocol} gives the training and evaluation details.

\paragraph{Results.}
\textbf{Instantaneous velocity supervision with $\lambda_c=0.01$ provides the
strongest overall balance across the three reward settings}, ranking
first for MMD and classifier transfer and second for DINO
(Table~\ref{tab:imagenet}); Figure~\ref{fig:imagenet-design-selection}
shows training progress. It achieves MMD FID 18.23 and classifier
log-probability $-0.4720$, improving on the base's 22.68 and $-0.5448$.
For DINO, its cosine similarity of 0.7811 is close to the best value,
0.7901 from destination-induced supervision, while retaining substantially
higher recall (0.1160 versus 0.0225). This configuration therefore combines
strong reward transfer across all three settings with better DINO coverage
than the highest-reward alternative. Full metrics appear in
Appendix~\ref{app:imagenet-results}.
We therefore use \textbf{instantaneous velocity supervision with tunable
consistency} for text-to-image consolidation. Since large consistency
weights degrade transfer, we examine this coefficient again in the
text-to-image setting.

\subsection{Specialist-teacher consolidation}
We combine three task-specialized teachers into one native
MeanFlow student: an OCR teacher, a PickScore teacher, and a
GenEval teacher~\citep{ghosh2023geneval,kirstain2023pickapic}.
The base is FLUX.1-lite-8B with a pretrained MeanFlow checkpoint,
using $512\times512$ resolution and guidance 3.5. Both teachers and the student are trained with LoRA; the teachers are optimized using Flow-Map GRPO~\citep{li2026flowmapgrpo}.
Training prompts are sampled in a 1:1:1 task ratio, with each prompt
routed to its corresponding teacher.

\paragraph{Training and evaluation.}
We use five-step flow-map rollouts following Flow-Map GRPO~\citep{li2026flowmapgrpo}.
Motivated by the ImageNet results, we use instantaneous velocity supervision
with $\lambda_c=0.001$ for this application. Each update uses 48 prompts with
24 samples per prompt, giving 1,152 trajectories on eight H100 GPUs.
We use AdamW with learning rate $10^{-4}$ and train for 300 updates.
EMA students are evaluated every 40 updates on the full OCR and
PickScore test sets. Offline GenEval
evaluation is performed at the final checkpoint.
Figure~\ref{fig:three-teacher-curves}
shows the training evaluations.

\begin{table*}[t]
\centering\small
\caption{Three specialist teachers distilled into one five-step student.
The student uses $\lambda_c=0.001$ at update 300. Task scores and DrawBench scores are shown side by side; higher is better.}
\label{tab:three-teacher}
\setlength{\tabcolsep}{4pt}\renewcommand{\arraystretch}{1.12}
\begin{tabular*}{\textwidth}{@{\extracolsep{\fill}}lrrrrrrrr@{}}
\toprule
& \multicolumn{3}{c}{\textbf{Task scores}} & \multicolumn{5}{c}{\textbf{DrawBench scores}}\\
\cmidrule(lr){2-4}\cmidrule(l){5-9}
Model / supervision & GenEval & OCR & PickScore & PickScore & Aesthetic & DeQA & ImgRwd & UniRwd\\
\midrule
Base & 0.5041 & 0.3491 & 20.9758 & 21.6298 & 5.5368 & 4.1712 & 0.3918 & 2.7187\\
\addlinespace[3pt]\multicolumn{9}{@{}l}{\textit{Task-specialized teachers}}\\
GenEval teacher & 0.8454 & --- & --- & 21.8184 & 5.3905 & 3.2086 & 0.6426 & 2.8719\\
OCR teacher & --- & 0.8504 & --- & 21.9869 & 5.5111 & 4.1396 & 0.5386 & 2.8049\\
PickScore teacher & --- & --- & 23.0772 & 23.3711 & \textbf{6.1510} & 4.0829 & 1.1402 & 3.2005\\
\midrule\multicolumn{9}{@{}l}{\textit{One student trained with all three teachers}}\\
Velocity-supervised FlowMap-OPD & \textbf{0.8580} & \textbf{0.8830} & \textbf{23.1502} & \textbf{23.4490} & 6.1338 & \textbf{4.2747} & \textbf{1.2141} & \textbf{3.4191}\\
\bottomrule\end{tabular*}
\end{table*}

\begin{figure}[t]
\centering
\includegraphics[width=\columnwidth]{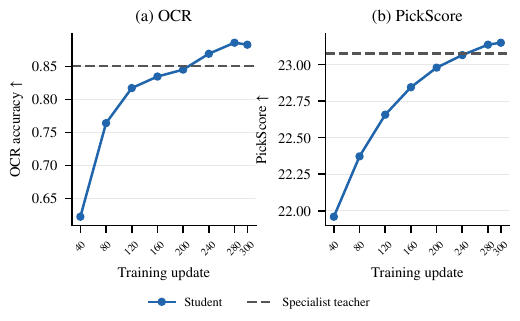}
\caption{\textbf{Three-teacher transfer during training.}
Full-test-set OCR and PickScore scores during training and at update 300;
markers are actual evaluations and lines connect measurements without
smoothing. Dashed lines denote the respective specialists. Both panels report the
same student with $\lambda_c=0.001$.}
\label{fig:three-teacher-curves}
\end{figure}

\paragraph{Transfer across tasks.}
\textbf{The same student checkpoint acquires all three specialist capabilities in only 300 training steps.}
GenEval improves from 0.5041 to 0.8580, OCR from 0.3491 to 0.8830,
and PickScore from 20.9758 to 23.1502. The respective specialist scores
are 0.8454, 0.8504 and 23.0772. The student exceeds these specialist scores
using the same number of sampling steps. Figure~\ref{fig:specialist-comparison}
shows paired base, specialist and student outputs; additional examples
appear in Appendix~\ref{app:paired-gallery}.

\paragraph{Two-task consolidation versus mixed-reward GRPO.}
Given already available OCR and PickScore specialist teachers, we compare
OPD consolidation with directly optimizing both rewards using multi-reward
Flow-Map GRPO. Both settings
use a 50/50 mixture of OCR and PickScore training prompts, the same
pretrained MeanFlow backbone, and the same five-step rollouts. GRPO uses training settings
following Flow-Map GRPO~\citep{li2026flowmapgrpo}, with OCR/PickScore
reward weights of 0.5/0.5. Both methods use eight H100 GPUs.

Figure~\ref{fig:two-teacher-grpo} compares the training progress of OPD and GRPO.
At update 300, OPD reaches OCR 0.8629 and PickScore 23.1677,
compared with 0.8158 and 22.9825 for GRPO at update 1,400. OPD already exceeds these GRPO scores at update 200,
with OCR 0.8661 and PickScore 23.0723.
These results demonstrate faster convergence in training updates and
better performance on both tasks than mixed-reward GRPO.

\begin{figure}[t]
\centering
\includegraphics[width=\columnwidth]{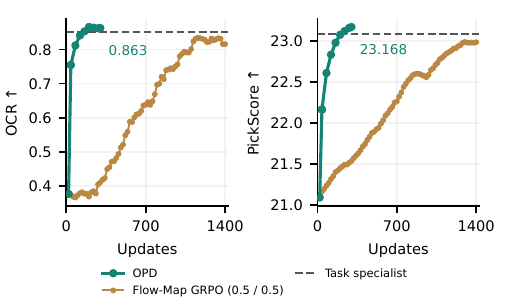}
\vspace{-10mm}
\caption{\textbf{Two specialist teachers versus direct mixed-reward optimization.}
EMA OCR and PickScore scores for two-teacher OPD and
Flow-Map GRPO with equal OCR/PickScore reward weights.
Dashed lines denote the task specialists. The shared starting marker is a visual reference.}
\label{fig:two-teacher-grpo}
\end{figure}

\subsection{Consistency strength and teacher transfer}\label{sec:consistency-experiment}\label{sec:evidence}
\begin{wrapfigure}{r}{0.58\columnwidth}
\vspace{-8pt}
\centering
\includegraphics[width=\linewidth]{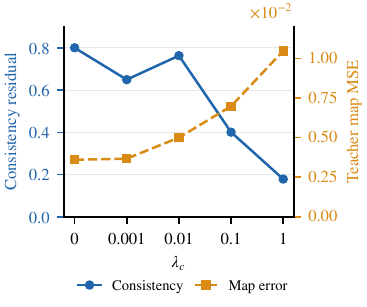}
\vspace{-7mm}
\caption{\textbf{Consistency (solid, left) versus teacher-map error.(dashed, right) across $\lambda_c$}}
\label{fig:consistency-t2i}
\vspace{-8pt}
\end{wrapfigure}
We vary $\lambda_c\in\{0,0.001,0.01,0.1,1\}$ in velocity-supervised
FlowMap-OPD with the same three specialist teachers.
Table~\ref{tab:consistency-t2i} reports full-test-set scores at
update 200 from these separate sweep runs; the application above uses
update 300 with $\lambda_c=0.001$.

Figure~\ref{fig:consistency-t2i} compares consistency loss and teacher-map error at update 200.
Increasing $\lambda_c$ from 0 to 1 reduces the unweighted source
residual from 0.800 to 0.179, but increases teacher-map MSE from
0.0036 to 0.0105. All three task scores decline at the larger tested penalties.
The smaller coefficient $\lambda_c=0.001$ improves all three scores
over $\lambda_c=0$ (Table~\ref{tab:consistency-t2i}).

These results suggest that the pretrained student retains a degree of
flow--velocity consistency during distillation even without an explicit
consistency penalty. A small penalty improves teacher transfer, whereas
stronger regularization further reduces the residual at the expense of
transfer quality. This trend is consistent with the ImageNet results. Flow--velocity
consistency connects the supervised representation to the deployed map,
but a smaller residual alone does not guarantee better teacher transfer.

\begin{table}[t]
\centering\footnotesize
\caption{MeanFlow consistency sweep at update 200.}
\label{tab:consistency-t2i}
\setlength{\tabcolsep}{3pt}
\begin{tabularx}{\columnwidth}{@{}Yccc@{}}
\toprule
$\lambda_c$ & GenEval $\uparrow$ & OCR $\uparrow$ & PickScore $\uparrow$\\
\midrule
0 & 0.827 & 0.830 & 22.8533\\
0.001 & \textbf{0.838} & \textbf{0.845} & \textbf{22.9794}\\
0.01 & 0.725 & 0.806 & 22.6529\\
0.1 & 0.568 & 0.646 & 22.0283\\
1 & 0.450 & 0.340 & 21.5939\\
\bottomrule
\end{tabularx}
\end{table}

\section{Related Work}\label{sec:related}
\paragraph{On-policy distillation and direct teacher supervision.}
Learning from teacher feedback on student-generated behavior has a broader history in policy distillation. \citet{czarnecki2019distilling} analyze how different policy-distillation formulations change the optimization objective and learning behavior. For language models, GKD trains on student-generated sequences and allows different teacher--student divergences, directly addressing the mismatch between training prefixes and inference-time outputs \citep{agarwal2023opd}. MiniLLM minimizes reverse KL through policy-gradient optimization, with a single-step decomposition and teacher-mixed sampling to stabilize learning \citep{gu2024minillm}. DistiLLM studies the related efficiency problem: it combines skew-KL losses with adaptive off-policy reuse of student-generated outputs, reducing the cost of repeatedly collecting fresh samples \citep{ko2024distillm}. These works make both the sampling distribution and the comparison objective central design choices. In continuous generation, DiffusionOPD and Flow-OPD formulate teacher feedback through local conditional transitions \citep{li2026diffusionopd,fang2026flowopd}, while Any-OPD addresses heterogeneous flow-matching teachers and students by comparing decoded outputs in a shared visual representation \citep{fu2026anyopd}. Our focus is the additional freedom available to few-step flow-map students: their native sampler uses long-range transport, whereas teacher supervision can act on local dynamics or on a separately constructed flow-map transition. Starting from additive local OPD objectives, we characterize when rollout changes preserve the objective and establish how flow--velocity consistency connects the chosen supervision to the deployed map.

\paragraph{Flow maps, consistency, and post-training.}
Few-step flow-map methods learn long-range transport while using consistency with local dynamics to constrain it. MeanFlow distinguishes average from instantaneous velocity and derives an identity linking the two \citep{geng2025meanflow}; consistency models learn endpoint predictions that agree along a transport trajectory \citep{song2023consistency,lu2025simplifying}. Flow Map Matching develops Eulerian and Lagrangian characterizations of long-range flow maps, and Align Your Flow studies scalable continuous-time flow-map distillation \citep{boffi2024fmm,sabour2025ayf}. These relationships also support post-training: MeanFlowNFT optimizes an induced instantaneous predictor while retaining average-velocity sampling, whereas Flow-Map GRPO constructs anchored stochastic transitions around long-range flow maps \citep{huang2026meanflownft,li2026flowmapgrpo}. We bring these complementary interfaces into a common OPD formulation. Native flow-map rollout supplies the training queries, and velocity-space or flow-map kernels specify the teacher comparison. Our consistency and deployment results explain when optimization through these interfaces constrains the flow-map generator used at inference.

\paragraph{Sampler--learner separation in reinforcement learning.}
DiffusionNFT and DMSampler explore different combinations of sampling procedures and learning objectives for diffusion reinforcement learning \citep{zheng2026diffusionnft,liu2026dmsampler}. We develop this perspective for few-step flow-map OPD, where a teacher directly supervises the student at states produced by its rollout. Our framework separates the sampling procedure from the teacher--student comparison, allowing generation and supervision to use different compatible representations connected by flow--velocity consistency.

\section{Conclusion}\label{sec:conclusion}
We introduced FlowMap-OPD, a framework that separates student-state
collection from teacher--student distribution comparison for few-step
flow-map generators. This separation expands the design space of on-policy
distillation: a student can retain its native flow-map rollout while
learning through flow-map, induced-velocity, or instantaneous-velocity supervision.
Our theory establishes when different rollouts preserve the training
objective and how flow--velocity consistency connects supervision to the
deployed generator.

Experiments demonstrate effective transfer across model capacities on
ImageNet and consolidation of three specialist teachers into a single
text-to-image student. The consolidated student matches or exceeds each specialist
on its corresponding benchmark while retaining few-step generation. These
results show that separating rollout from supervision provides a practical
way to transfer diverse teacher capabilities to efficient flow-map students.

\clearpage
{\small
\bibliographystyle{plainnat}
\bibliography{references}
}

\clearpage
\onecolumn
\appendix
\section*{Appendix contents}
\pdfbookmark[0]{Appendix contents}{appendix-contents}
The appendices provide proofs for the separated objective and consistency
results, followed by kernel constructions, implementation details, and
additional experimental evidence.
\makeatletter
\begingroup
\setcounter{tocdepth}{2}
\small
\@starttoc{apc}
\endgroup
\let\appendixaddcontentsline\addcontentsline
\renewcommand{\addcontentsline}[3]{%
  \def\appendixcontentsfile{#1}\def\maintocfile{toc}%
  \ifx\appendixcontentsfile\maintocfile
    \appendixaddcontentsline{apc}{#2}{#3}%
  \else
    \appendixaddcontentsline{#1}{#2}{#3}%
  \fi}
\makeatother

\section{Rollout Separation and Sampling}
\label{app:occupancy}

\subsection{Sampled states, comparison times, and rollout distributions}
\label{app:general-queries}

We use the state marginals and local comparison from
Section~\ref{sec:rk-theory}. A query $z$ retains the sampled state
and the times needed by that comparison.

Let $\mathcal X=\mathbb R^d$ be the state space. Sample a grid
$\Gcal=(1=t_0>\cdots>t_N=0)\sim\gamma$, where $\gamma$ is the
chosen distribution of time grids and $N$ is the number of steps.
A fixed grid corresponds to a point mass. Conditioning on a task or prompt
is retained whenever it affects the comparison.

For rollout policy $\Tcal$ and student parameters $\theta$, write
$\Tcal^\theta(d\tau\mid \Gcal)$ for the conditional trajectory distribution, where
$\tau=(x_0,\ldots,x_{N})$. Here $x_i:=x_{t_i}$ denotes the state at the corresponding grid time. For a measurable set $A\subseteq\mathcal X$, its state marginals are
\begin{equation}
    d_i^{\Tcal,\theta}(A\mid \Gcal)
    =\Tcal^\theta(x_i\in A\mid \Gcal).
    \label{eq:app-state-marginal}
\end{equation}
The rollout may be deterministic or history-dependent; no state density is required.
Throughout this appendix, we fix a local comparison and its comparison-variable
sampling rule. The collected query distribution is held fixed when
differentiating the local objective.

Choose measurable weights $a_i(\Gcal)\geq0$ with
$\sum_{i=0}^{N-1}a_i(\Gcal)=1$. Given $\Gcal$, the query index is selected
independently of the realized path using these weights. Let $\zeta$ collect any additional comparison choices, such as a
destination time or an anchor interval, and let
$q(d\zeta\mid x,i,\Gcal)$ be their conditional sampling distribution.
For the chosen local loss $\ell$, define its conditional average
\begin{equation}
    \overline\ell^\theta(x,i,\Gcal)
    =\int\ell(\theta,T;(\Gcal,i,x),\zeta)
                    q(d\zeta\mid x,i,\Gcal).
    \label{eq:app-integrated-loss}
\end{equation}
Here $z=(\Gcal,i,x)$. We assume the expected absolute loss is finite.
For nonnegative losses, the marginal identity also holds with infinite
values by Tonelli's theorem.

\paragraph{Sampled states and separated comparisons.}
\label{app:separated-queries}
The population objective corresponding to the sampled update in
\eqref{eq:rk-sampled-update} is
\begin{equation}
 \mathcal L_{\Tcal}(\theta)
 =\mathbb E_{z\sim\rho^{\Tcal,\theta}}
       [\ell(\theta,T;z)],
 \label{eq:rk-separated-objective}
\end{equation}
where $\rho^{\Tcal,\theta}$ is the distribution of queries collected by
the frozen rollout, and $z$ retains the state and all times required by
the comparison. The shorthand $\ell(\theta,T;x_t,t)$ in the main text
averages any additional comparison choices at fixed $(x_t,t)$.
Specifically, draw $\Gcal\sim\gamma$, a rollout
$\tau\sim \Tcal^\theta(\cdot\mid\Gcal)$, and an index $I$
with probabilities $a_i(\Gcal)$. The resulting query is
$z=(\Gcal,I,x_{t_I})$.
The local comparison may be a conditional kernel KL, its regression
form, or a loss with a separately specified consistency penalty.
In Eq.~\eqref{eq:rk-separated-objective}, auxiliary variables are averaged
as $\ell(\theta,T;z)=\int\ell(\theta,T;z,\zeta)q(d\zeta\mid z)$.
For uniform index weights $a_i=1/N$, conditioning on the grid gives
\begin{equation}
 \mathcal L_{\Tcal}^{\Gcal}(\theta)
 =\frac1N\sum_{i=0}^{N-1}
   \mathbb E_{\substack{x\sim d_i^{\Tcal,\theta}(\cdot\mid \Gcal)\\
          \zeta\sim q(\cdot\mid \Gcal,i,x)}}
   [\ell(\theta,T;z_i,\zeta)].
 \label{eq:rk-marginal-objective}
\end{equation}
Here $z_i=(\Gcal,i,x)$. Averaging over $\Gcal\sim\gamma$ gives
\eqref{eq:rk-separated-objective}. If the comparison and its auxiliary
rule depend on the grid only through the source and destination times,
we use $z=(x,t,r)$, where $r=t_{i+1}$ and $t=t_i$. This query order matches the main text; map residuals retain the argument order $(x,r,t)$.
Proposition~\ref{prop:app-compression} states the compression condition.
For continuous flow-map marginals, let $\mu$ denote the distribution
of comparison times. The corresponding expression is
\begin{equation}
 \mathcal L_{\psi}(\theta)
 =\int_{[0,1]}\int
 \overline\ell(\theta,T;x,t)\,
 (\psi^{\theta}_{1\to t})_\#p_1(dx)\,\mu(dt),
 \label{eq:rk-separated-pushforward}
\end{equation}
where $\overline\ell(\theta,T;x,t)=
\mathbb E_{\zeta\sim q(\cdot\mid x,t)}[\ell(\theta,T;x,t,\zeta)]$.

\subsection{Marginal formulation of the objective}
\label{app:marginal-proof}

Define the base-query distribution on $(\Gcal,i,x)$ by
\begin{equation}
    \rho^{\Tcal,\theta}(d\Gcal,\{i\},dx)
      =\gamma(d\Gcal)a_i(\Gcal)d_i^{\Tcal,\theta}(dx\mid \Gcal),
    \label{eq:app-occupancy-measure}
\end{equation}
and the augmented query distribution by
\begin{equation}
    \eta^{\Tcal,\theta}(d\Gcal,\{i\},dx,d\zeta)
      =\rho^{\Tcal,\theta}(d\Gcal,\{i\},dx)
         q(d\zeta\mid x,i,\Gcal).
    \label{eq:app-augmented-occupancy}
\end{equation}
Both are probability measures.

\begin{appendixtheorem}[General marginal sufficiency]
\label{prop:app-marginal}
Under the preceding measurability and integrability assumptions,
\begin{align}
 \mathcal L_{\Tcal}(\theta)
 &=\int\gamma(d\Gcal)\int \Tcal^\theta(d\tau\mid \Gcal)
       \sum_i a_i(\Gcal)\overline\ell^\theta(x_i,i,\Gcal) \notag\\
 &=\int\gamma(d\Gcal)\sum_i a_i(\Gcal)
       \int \overline\ell^\theta(x,i,\Gcal)
                    d_i^{\Tcal,\theta}(dx\mid \Gcal) \notag\\
 &=\int\ell(\theta,T;(\Gcal,i,x),\zeta)
                    \eta^{\Tcal,\theta}(d\Gcal,\{i\},dx,d\zeta).
 \label{eq:app-marginal-identity}
\end{align}
\end{appendixtheorem}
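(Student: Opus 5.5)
The proof is a chain of Fubini--Tonelli exchanges together with the definition of the state marginals. Nothing beyond measure-theoretic bookkeeping is needed.

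\emph{Step 1: unfold the query distribution.} Start from the definition of $\mathcal L_{\Tcal}(\theta)$ in \eqref{eq:rk-separated-objective}. The query $z=(\Gcal,I,x_{t_I})$ is produced hierarchically: draw $\Gcal\sim\gamma$, then $\tau\sim\Tcal^\theta(\cdot\mid\Gcal)$, then $I$ with probabilities $a_i(\Gcal)$ independently of $\tau$. Auxiliary variables are then drawn as $\zeta\sim q(\cdot\mid x_I,I,\Gcal)$. Writing the expectation as an iterated integral and integrating $\zeta$ out first gives $\overline\ell^\theta$ by \eqref{eq:app-integrated-loss}. Since $I$ is independent of the path, its law contributes the finite sum $\sum_i a_i(\Gcal)\overline\ell^\theta(x_i,i,\Gcal)$ inside the trajectory integral. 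This is the first line of \eqref{eq:app-marginal-identity}.

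\emph{Step 2: pass to marginals.} Exchange the finite sum with the trajectory integral, which is valid by linearity once each term is integrable. Each summand $\int \Tcal^\theta(d\tau\mid\Gcal)\,\overline\ell^\theta(x_i,i,\Gcal)$ depends on $\tau$ only through the coordinate $x_i$. By the change-of-variables formula for the coordinate projection $\tau\mapsto x_i$, whose pushforward of $\Tcal^\theta(\cdot\mid\Gcal)$ is $d_i^{\Tcal,\theta}(\cdot\mid\Gcal)$ by \eqref{eq:app-state-marginal}, this equals $\int\overline\ell^\theta(x,i,\Gcal)\,d_i^{\Tcal,\theta}(dx\mid\Gcal)$. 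This is the second line. Note that the argument does not use whether the rollout is deterministic or history-dependent, or whether densities exist.

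\emph{Step 3: recognize the augmented measure.} Substitute the definition of $\overline\ell^\theta$ back in and reassemble the iterated integral $\gamma(d\Gcal)\,a_i(\Gcal)\,d_i^{\Tcal,\theta}(dx\mid\Gcal)\,q(d\zeta\mid x,i,\Gcal)$. By \eqref{eq:app-occupancy-measure}--\eqref{eq:app-augmented-occupancy}, this is exactly $\eta^{\Tcal,\theta}$. Integration against a measure built from kernels equals the iterated integral for integrable or nonnegative integrands, which gives the third line.

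\emph{Main obstacle.} The only delicate point is justifying the exchanges. Before touching $\ell$ itself, I would first apply Tonelli to $|\ell|$ to show that finiteness of the expected absolute loss makes every iterated integral absolutely convergent; Fubini then licenses all reorderings. For nonnegative $\ell$, Tonelli alone suffices, including infinite values. This also requires that $\Tcal^\theta(\cdot\mid\Gcal)$, $q$, and $a_i$ be measurable kernels in their conditioning arguments, so that the iterated integrals are well defined; this is covered by the standing measurability assumptions.
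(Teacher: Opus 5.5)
Your proposal is correct and follows the paper's own argument: condition on the grid, exchange the finite sum with the path integral, use that each summand depends only on the coordinate $x_i$ to pass to the marginal $d_i^{\Tcal,\theta}(\cdot\mid\Gcal)$, then integrate over $\Gcal$ and substitute the definition of $\overline\ell^\theta$. Your Tonelli-on-$|\ell|$ step, followed by Fubini, just makes explicit the integrability bookkeeping that the paper leaves to its standing assumptions.
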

\begin{proof}
Condition on $\Gcal$. Exchange the finite sum and the path integral.
Each summand is a function only of the coordinate $x_i$ and its retained
query variables, so integrating against the path distribution is equivalent to
integrating against its coordinate marginal. Integrate over $\Gcal$ and
then substitute \eqref{eq:app-integrated-loss}.
\end{proof}

For a fixed grid and $a_i=1/N$, the middle line is
\eqref{eq:rk-marginal-objective}. Sampling $I$ with probabilities $a_i(\Gcal)$
yields \eqref{eq:rk-separated-objective}. This gives the sampling form used in the main text.

\subsection{Discrete and continuous state marginals}
\label{app:continuous-queries}

\paragraph{Exact integral form on a finite grid.}
For the uniform query rule, define the probability measure
$\mu_{\Gcal}=N^{-1}\sum_{i=0}^{N-1}\delta_{t_i}$.
On its support, define $\ell^K(\theta,T;x,t_i,\Gcal)=\ell_i^K(\theta,T;x,\Gcal)$ and $d_{t_i}^{\Tcal,\theta}=d_i^{\Tcal,\theta}(\cdot\mid\Gcal)$. Integrating against this atomic measure gives
\[
 \int\!\int \ell^K(\theta,T;x,t,\Gcal)
 d_t^{\Tcal,\theta}(dx)\,\mu_{\Gcal}(dt)
 =\frac1N\sum_{i=0}^{N-1}\int
 \ell_i^K(\theta,T;x,\Gcal)\,d_i^{\Tcal,\theta}(dx\mid\Gcal).
\]
The right side is $J_K^\Gcal/N$ by
\eqref{eq:rk-original-marginals}. Thus the atomic-time integral is an
exact representation of the normalized finite-grid objective. The
continuous flow-map expression in \eqref{eq:rk-time-integral} additionally
specifies the continuous-time marginal family and query rule below.

\paragraph{Continuous-time queries.}
A continuous query rule replaces the atomic time measure by a declared
probability measure $\mu(dt)$, for example $w(t)\,dt$ with
$\int_0^1w(t)\,dt=1$. Given state marginals $d_t$ and a conditional rule
$q(d\zeta\mid x,t)$ that retains all comparison variables, the objective is
\begin{equation}
 \mathcal L
 =\int_0^1\int\int
 \ell(\theta,T;x,t,\zeta)\,
 q(d\zeta\mid x,t)\,d_t(dx)\,\mu(dt).
 \label{eq:app-continuous-query-objective}
\end{equation}
Destination times and finite comparison intervals belong to $\zeta$
when they are not determined by $t$. Thus continuous-time notation
preserves the conditioning needed by a flow-map loss.

\begin{appendixproposition}[Discrete-to-continuous marginal objectives]
\label{prop:app-grid-limit}
Fix a velocity field $v$ that is globally Lipschitz in space and time
on $\mathbb R^d\times[0,1]$, with Lipschitz constant $L$, and let
$p_1$ have finite first moment. Write $W_1$ for the Wasserstein distance
with Euclidean transport cost.
Let $\psi$ be its exact flow map and set $d_t=(\psi_{1\to t})_\#p_1$.
For grids $\Gcal_n$ with $N_n$ intervals and mesh $\delta_n=\max_i(t_i-t_{i+1})\to0$,
let $d_i^n$ denote its Euler-rollout marginals initialized from $p_1$.
For $v=v^\theta$, these are the marginals
$d_i^{\Tcal,\theta}(\cdot\mid\Gcal_n)$ in the main text;
$n$ indexes grid refinement.
Then, for a constant $C$ independent of $n$,
\begin{equation}
 \max_i W_1(d_i^n,d_{t_i})\leq C\delta_n.
 \label{eq:app-euler-marginal-limit}
\end{equation}
Let $g(x,t)$ be bounded, continuous in time, and uniformly
$L_g$-Lipschitz in $x$. It may be the comparison after averaging its
auxiliary variables. Choose nonnegative weights $a_i^n$ with sum one,
and assume $\mu_n=\sum_i a_i^n\delta_{t_i}$ converges weakly to $\mu$.
For discrete comparisons $g_i^n$ satisfying
$\epsilon_n:=\max_i\sup_x|g_i^n(x)-g(x,t_i)|\to0$, define
\begin{equation}
 \begin{aligned}
 A_n&=\sum_i a_i^n\int g_i^n(x)\,d_i^n(dx),\\
 A&=\int\int g(x,t)\,d_t(dx)\,\mu(dt).
 \end{aligned}
\end{equation}
Writing $m(t)=\int g(x,t)d_t(dx)$, we have
\begin{equation}
 |A_n-A|\leq\epsilon_n+L_gC\delta_n
       +\left|\int m(t)\,[\mu_n-\mu](dt)\right|\longrightarrow0.
 \label{eq:app-objective-grid-limit}
\end{equation}
The same objective conclusion holds for any stochastic discretization
whose marginals satisfy $\max_iW_1(d_i^n,d_{t_i})\to0$.
\end{appendixproposition}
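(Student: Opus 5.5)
The proof has two parts. The first is a standard Euler global-error estimate carried out by coupling. The second is a three-term decomposition of $A_n-A$.

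\emph{Part 1: Euler marginals.} We couple the Euler chain and the exact flow through the same initial point $x_1\sim p_1$. Let $y_i$ be the Euler iterate, $y_{i+1}=y_i-h_i v(y_i,t_i)$, and let $x_{t_i}=\psi_{1\to t_i}(x_1)$ be the exact trajectory. Writing $e_i=y_i-x_{t_i}$, the exact step is $x_{t_{i+1}}=x_{t_i}-\int_{t_{i+1}}^{t_i}v(x_s,s)\,ds$. Since $v$ is $L$-Lipschitz in $(x,t)$, the local truncation term satisfies
\[
\Bigl|\int_{t_{i+1}}^{t_i}\bigl[v(x_s,s)-v(x_{t_i},t_i)\bigr]ds\Bigr|\le L h_i^2\bigl(1+\sup_s|v(x_s,s)|\bigr).
\]
The propagated error satisfies $|e_{i+1}|\le(1+Lh_i)|e_i|+Lh_i^2(1+|v(x_{t_i},t_i)|+L h_i|v|)$, and I will bound this last factor more carefully below. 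The velocity along the trajectory grows at most linearly: $|v(x,t)|\le |v(0,0)|+L(|x|+1)$. By Grönwall, $\sup_s|x_s|\le C(1+|x_1|)$. Discrete Grönwall then gives $|e_i|\le C'\delta_n(1+|x_1|)e^{L}$. Because $W_1(d_i^n,d_{t_i})\le\mathbb E|e_i|$ for this coupling, and $p_1$ has finite first moment, we obtain \eqref{eq:app-euler-marginal-limit} with $C$ depending on $L$, $|v(0,0)|$, and $\mathbb E|x_1|$.

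\emph{Part 2: objective convergence.} Split
\[
A_n-A=\sum_i a_i^n\int(g_i^n-g(\cdot,t_i))\,dd_i^n+\sum_i a_i^n\Bigl[\int g(\cdot,t_i)\,dd_i^n-m(t_i)\Bigr]+\Bigl[\int m\,d\mu_n-\int m\,d\mu\Bigr].
\]
The first term is at most $\epsilon_n$, since the weights sum to one. For the second term, $g(\cdot,t_i)$ is $L_g$-Lipschitz, so Kantorovich--Rubinstein duality bounds each bracket by $L_g W_1(d_i^n,d_{t_i})\le L_gC\delta_n$. Averaging over $i$ preserves this bound. The third term is exactly the last term of \eqref{eq:app-objective-grid-limit}.

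For the limit, it remains to show that this third term tends to $0$. Weak convergence $\mu_n\to\mu$ gives this as soon as $m$ is bounded and continuous. Boundedness of $m$ follows from boundedness of $g$. Continuity is the main step needing care. For $s\to t$, write
\[
|m(s)-m(t)|\le\int|g(x,s)-g(x,t)|\,d_s(dx)+\Bigl|\int g(\cdot,t)\,d(d_s-d_t)\Bigr|.
\]
The second piece is at most $L_gW_1(d_s,d_t)$. The exact flow is continuous in time with $\mathbb E|x_s-x_t|\le C|s-t|$ by the linear-growth bound, so this piece tends to $0$.

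For the first piece, continuity of $g$ in time gives only pointwise convergence in $x$. I handle this in three steps.
\begin{itemize}
\item Tightness: the family $\{d_s\}$ is tight, because it has uniformly bounded first moments.
\item Uniform convergence on compacts: on a compact ball, uniform $L_g$-Lipschitzness in $x$ together with pointwise continuity in $t$ yields uniform convergence, by an equicontinuity/Arzelà-type argument over a finite net.
\item Tails: outside the ball, the integrand is controlled by $2\sup|g|$ times a small probability, using tightness.
\end{itemize}
This establishes continuity of $m$, and hence the limit.

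The stochastic-discretization remark follows from the same argument. Part 1 is used only through the bound $\max_iW_1(d_i^n,d_{t_i})\to0$, so replacing $L_gC\delta_n$ by $L_g\max_iW_1(d_i^n,d_{t_i})$ in the second term gives the conclusion.
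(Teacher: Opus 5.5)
Your proposal is correct and follows the paper's proof essentially step for step. It uses the same ingredients: the shared-initial-noise coupling, an $O\bigl((1+|x_{t_0}|)h_i^2\bigr)$ local truncation term with discrete Gronwall for the $W_1$ bound, and then the same three-term split costing $\epsilon_n$, $L_gC\delta_n$, and the weak-convergence term. The only difference is in proving continuity of $m$, where your tightness-plus-uniform-convergence-on-compacts argument works but is heavier than needed. Writing $m(t)=\int g(\psi_{1\to t}(x),t)\,p_1(dx)$, the integrand is bounded by $\sup|g|$ and is continuous in $t$ for each fixed $x$ (by Lipschitzness in space and continuity of the flow), so dominated convergence, as the paper invokes, gives continuity directly.
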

\begin{proof}
Couple the Euler states $x_{t_i}^n$ and the exact states
$x_{t_i}^{\psi}=\psi_{1\to t_i}(x_{t_0})$ using the same $x_{t_0}\sim p_1$.
Global Lipschitz continuity gives linear growth and
$\sup_t|x_t^{\psi}|\leq C_0(1+|x_{t_0}|)$. Over a step $h_i=t_i-t_{i+1}$,
the exact update has the form
\[
 x_{t_{i+1}}^{\psi}=x_{t_i}^{\psi}-h_i v(x_{t_i}^{\psi},t_i)+r_i,
 \qquad |r_i|\leq C_1(1+|x_{t_0}|)h_i^2.
\]
Indeed, subtract the left-endpoint velocity from the ODE integral and
use the space and time Lipschitz bounds. Here $C_0,C_1$ depend only on the velocity bounds. If
$e_i=|x_{t_i}^n-x_{t_i}^{\psi}|$,
then $e_0=0$ and
\[
 e_{i+1}\leq(1+Lh_i)e_i+C_1(1+|x_{t_0}|)h_i^2.
\]
Discrete Gronwall and $\sum_i h_i^2\leq\delta_n$ give
$\max_i e_i\leq C_2(1+|x_{t_0}|)\delta_n$.
Taking expectation in this coupling proves
\eqref{eq:app-euler-marginal-limit}.

Next replace $g_i^n$ by $g(\cdot,t_i)$, at cost at most $\epsilon_n$,
and replace $d_i^n$ by $d_{t_i}$, at cost at most $L_gC\delta_n$.
The remaining discrete objective is $\int m(t)\mu_n(dt)$.
The exact flow is continuous in time; boundedness and continuity of
$g$ imply continuity of $m$ by dominated convergence. Weak convergence
of $\mu_n$ therefore gives the last term in
\eqref{eq:app-objective-grid-limit}. For a stochastic discretization,
replace the Euler bound by its assumed marginal-convergence bound.
\end{proof}

\paragraph{From continuous drift comparison to discrete kernel KL.}
For the decreasing-time SDEs in the main text, the discretized kernels are
\[
 K_i^a(\cdot\mid x,\Gcal_n)
 =\mathcal N(x-h_i b_{t_i}^a(x),h_i g_{t_i}^2\Id),
 \qquad a\in\{\theta,T\}.
\]
The Gaussian KL formula gives
\begin{equation}
 \begin{aligned}
 \ell_i^K(\theta,T;x,\Gcal_n)
 &=\frac{\|h_i(b_{t_i}^\theta(x)-b_{t_i}^T(x))\|^2}
          {2h_i g_{t_i}^2}\\
 &=h_i\,\frac{\|b_{t_i}^\theta(x)-b_{t_i}^T(x)\|^2}{2g_{t_i}^2}.
 \end{aligned}
 \label{eq:app-drift-kl-scaling}
\end{equation}
The finite-step KL is $h_i$ times the local drift discrepancy.

Set $g(x,t)=\|b_t^\theta(x)-b_t^T(x)\|^2/(2g_t^2)$ in
Proposition~\ref{prop:app-grid-limit}. Assume this comparison satisfies
that proposition's continuity and boundedness conditions, and the
student stochastic discretization satisfies
$\max_i W_1(d_i^{\Tcal,\theta}(\cdot\mid\Gcal_n),d_{t_i}^{\psi,\theta})\to0$.
Here the continuous student SDE shares the exact ODE flow-map marginals
by Appendix~\ref{app:ode-sde-marginals}. Taking weights $a_i^n=h_i$
and using \eqref{eq:app-drift-kl-scaling} gives
\begin{equation}
 \begin{aligned}
 \lim_{n\to\infty}J_K^{\Gcal_n}(\theta)
 &=\int_0^1\int
 \frac{\|b_t^\theta(x)-b_t^T(x)\|^2}{2g_t^2}
 d_t^{\psi,\theta}(dx)\,dt\\
 &=J(\theta).
 \end{aligned}
 \label{eq:app-continuous-opd-limit}
\end{equation}
This proves the connection between \eqref{eq:rk-original-marginals}
and \eqref{eq:rk-time-integral}. The $t$-expectation in the latter is
shorthand for this unit-interval time integral. The sum is not divided
by $N_n$, since each kernel KL already contains $h_i$.
The same conclusion extends to an unbounded continuous discrepancy
under uniform integrability of its evaluations under the discrete and
limiting joint query distributions, by truncation.

For other fixed local comparisons that do not contain a time-step
factor, uniform query averaging uses $N_n^{-1}\sum_i$ instead. On
nonuniform grids, uniform index weights may converge to a different
time distribution, whereas weights $h_i$ give the time integral above.
An additional path-KL interpretation requires the conditions in
Theorem~\ref{thm:app-pathkl}.

\paragraph{ODE and SDE marginal agreement.}
\label{app:ode-sde-marginals}
Assume $p_t$ is a smooth positive density solving
$\partial_t p_t=-\nabla_x\cdot(p_t v_t)$, and assume the ODE and SDE
below are well posed with a unique solution to the associated density
equation and the same initial distribution. To check the decreasing-time
sign, set $r=1-t$, $y_r=x_{1-r}$, and $\widetilde p_r=p_{1-r}$.
The SDE in the main text becomes a forward-time SDE with drift
$-v(y_r,1-r)+(g_{1-r}^2/2)\nabla_y\log\widetilde p_r(y_r)$.
Its density equation, evaluated at $\widetilde p_r$, is
\[
 \begin{aligned}
 \partial_r\widetilde p_r
 &=\nabla_y\cdot(\widetilde p_r v_{1-r})
   -\frac{g_{1-r}^2}{2}\Delta\widetilde p_r
   +\frac{g_{1-r}^2}{2}\Delta\widetilde p_r\\
 &=\nabla_y\cdot(\widetilde p_r v_{1-r}),
 \end{aligned}
\]
which is the density equation of $dy_r/dr=-v(y_r,1-r)$.
Uniqueness and the common initial distribution establish marginal
agreement. The equality concerns the continuous dynamics with the
compatible score; finite-step discretizations introduce approximation
error. The local Gaussian formulas in Appendix~\ref{app:kernels} use
this same decreasing-time sign convention.

\paragraph{Continuous flow maps and discrete rollout marginals.}
For a measurable flow map, $d_t^{\psi,\theta}=(\psi^\theta_{1\to t})_\#p_1$
means that, for every integrable test function $f$,
\[
 \int f(x)\,d_t^{\psi,\theta}(dx)
 =\int f(\psi^\theta_{1\to t}(x_{t_0}))\,p_1(dx_{t_0}).
\]
Substituting this marginal in \eqref{eq:app-continuous-query-objective}
gives \eqref{eq:rk-time-integral} when the local comparison is the drift
discrepancy $\|b_t^\theta-b_t^T\|^2/(2g_t^2)$ and time is integrated
with Lebesgue measure on $[0,1]$. For a general comparison, freezing the
map parameters at $\theta$ gives \eqref{eq:rk-separated-pushforward}.

To relate this continuous description to a discrete native rollout,
assume the flow-map composition identity
\[
 \psi^\theta_{q\to r}\circ \psi^\theta_{t\to q}
 =\psi^\theta_{t\to r},\qquad r\leq q\leq t.
\]
Starting at $x_{t_0}\sim p_1$, induction over the grid then gives
$x_{t_i}=\psi^\theta_{1\to t_i}(x_{t_0})$ and hence
$d_i^{\Tcal,\theta}(\cdot\mid\Gcal)=d_{t_i}^{\psi,\theta}$.
The grid selects query times from the same continuous marginal family.
For a learned map with imperfect composition consistency, the actual
multistep marginals need not equal the direct-map pushforwards. The
finite-grid marginal formulation uses those actual marginals;
Proposition~\ref{prop:app-wasserstein-mismatch} controls the objective
change when they are replaced by the continuous-map marginals.

For a Markov stochastic rollout, let $K_i^\theta$ denote its state-transition kernel at fixed $\theta$ and $\Gcal$, and abbreviate $d_i=d_i^{\Tcal,\theta}(\cdot\mid\Gcal)$. The marginal recursion is
\[
 d_0=p_1,\qquad d_{i+1}(A)=\int K_i^\theta(A\mid x,\Gcal)\,d_i(dx).
\]
Such a rollout realizes the same continuous flow-map marginals if its
kernels satisfy
$d_{t_{i+1}}^{\psi,\theta}(A)=\int K_i^\theta(A\mid x,\Gcal)d_{t_i}^{\psi,\theta}(dx)$
at each step. This marginal-preservation condition, rather than equality
of trajectories, permits stochastic acquisition within the same
marginal objective. For history-dependent rollouts, the general
path-distribution definition in \eqref{eq:app-state-marginal} applies.

\subsection{Rollout replacement and mismatch}
\label{app:replacement}

\begin{appendixtheorem}[General marginal-preserving rollout replacement]
\label{thm:app-replacement}
Fix $\theta$, and let $\Tcal$ and $\widetilde\Tcal$ satisfy the setup
of Appendix~\ref{app:general-queries}. Fix the same grid distribution $\gamma$,
weights $a_i$, auxiliary rule $q$, and local loss, integrable under both
query distributions. If
\[
 d_i^{\Tcal,\theta}(\cdot\mid \Gcal)
 =d_i^{\widetilde\Tcal,\theta}(\cdot\mid \Gcal)
\]
for $\gamma$-almost every $\Gcal$ and every index with $a_i(\Gcal)>0$, then
\[
 \mathcal L_{\Tcal}(\theta)
 =\mathcal L_{\widetilde\Tcal}(\theta).
\]
Equality of the base-query distributions is also sufficient. If the local
loss is differentiable in a neighborhood of $\theta$, with derivatives
dominated by integrable envelopes under both fixed augmented query distributions,
then their expected gradients with the query distribution held fixed agree as well. Neither
conclusion requires equality of the two path distributions.
\end{appendixtheorem}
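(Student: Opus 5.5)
The plan is to reduce both claims to Theorem~\ref{prop:app-marginal}, which writes $\mathcal L_{\Tcal}(\theta)$ as an integral of the fixed loss against the augmented query distribution $\eta^{\Tcal,\theta}$ of \eqref{eq:app-augmented-occupancy}. The key observation is that, by its construction in \eqref{eq:app-occupancy-measure}, $\eta^{\Tcal,\theta}$ depends on the rollout only through the state marginals $d_i^{\Tcal,\theta}(\cdot\mid\Gcal)$. The ingredients $\gamma$, $a_i(\Gcal)$, and $q(d\zeta\mid x,i,\Gcal)$ are shared by hypothesis. Path distributions never enter, which already gives the final sentence of the statement.

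\textbf{Step 1 (equal query measures).} I will show $\rho^{\Tcal,\theta}=\rho^{\widetilde\Tcal,\theta}$ as measures on $(\Gcal,i,x)$. Evaluate both on rectangles $B\times\{i\}\times A$:
\[
\rho^{\Tcal,\theta}(B\times\{i\}\times A)=\int_B a_i(\Gcal)\,d_i^{\Tcal,\theta}(A\mid\Gcal)\,\gamma(d\Gcal).
\]
The integrands agree wherever $a_i(\Gcal)>0$, outside a $\gamma$-null set. Where $a_i(\Gcal)=0$, the factor $a_i$ kills the term, so the marginal hypothesis is not needed there. Since rectangles form a $\pi$-system generating the product $\sigma$-algebra, the two measures agree. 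Multiplying by the common kernel $q$ then gives $\eta^{\Tcal,\theta}=\eta^{\widetilde\Tcal,\theta}$. The mild technical point is that $\Gcal\mapsto d_i(A\mid\Gcal)$ must be measurable; this follows from the kernel structure of $\Tcal^\theta(d\tau\mid\Gcal)$ assumed in Appendix~\ref{app:general-queries}. If the base-query distributions are assumed equal directly, this step is skipped, which covers the ``also sufficient'' clause.

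\textbf{Step 2 (objective equality).} Apply the last line of \eqref{eq:app-marginal-identity} to both rollouts. The loss is integrable under each query distribution by assumption, so both sides are finite integrals of the same function against the same measure. For nonnegative losses, Tonelli's theorem allows infinite values.

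\textbf{Step 3 (gradients).} Because the query distribution is held fixed at $\eta:=\eta^{\Tcal,\theta}=\eta^{\widetilde\Tcal,\theta}$, the expected state-fixed update is $\int\nabla_\theta\ell(\theta,T;z,\zeta)\,\eta(dz,d\zeta)$. This is a single integral against a single measure, so it coincides for the two rollouts, and integrability follows from the dominating envelopes. To identify it with $\nabla_{\theta'}\int\ell(\theta',T;\cdot)\,d\eta$ at $\theta'=\theta$, I will differentiate under the integral. Dominated convergence applies to the difference quotients, which are bounded by the envelope via the mean value theorem on the neighborhood.

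I expect the main obstacle to be bookkeeping rather than depth. Two points need care. First, the equality must hold only where $a_i>0$ and $\gamma$-almost everywhere, which requires the rectangle/$\pi$-system argument rather than a pointwise one. Second, the gradient must be taken with the query measure frozen; the gradient through $\theta$-dependence of $d_i^{\Tcal,\theta}$ would genuinely differ between rollouts, and the statement does not claim otherwise.
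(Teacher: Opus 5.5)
Your proposal is correct and follows the paper's proof. Both reduce to the integral representation of Theorem~\ref{prop:app-marginal}, note that $\eta^{\Tcal,\theta}$ depends on the rollout only through the marginals $d_i^{\Tcal,\theta}(\cdot\mid\Gcal)$ once $\gamma$, $a_i$, and $q$ are shared, and then apply dominated differentiation with the query measure frozen; your $\pi$-system argument on rectangles spells out the measure-equality step that the paper states in one line.
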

\begin{proof}
Theorem~\ref{prop:app-marginal} writes both objectives as integrals of
the same local loss against their augmented query distributions. Matching the
conditional marginals with the same $\gamma,a_i,q$ makes these distributions
equal. Under the derivative assumptions, apply dominated differentiation
to the same integral representation.
\end{proof}
Matching query distributions preserves the expected local loss and update;
cross-time correlation can still affect estimator variance.
Pairwise losses require pairwise query distributions, and batch-dependent
normalization must be included in the comparison.

\begin{appendixproposition}[General Wasserstein mismatch bound]
\label{prop:app-wasserstein-mismatch}
\label{prop:rk-mismatch}
Suppose $\overline\ell^\theta(\cdot,i,\Gcal)$ is $L_{i,\Gcal}$-Lipschitz on the comparison
state space and the two conditional state distributions have finite first moments.
For fixed weights, grid distribution, and auxiliary rule,
\begin{equation}
    |\mathcal L_{\Tcal}-\mathcal L_{\widetilde\Tcal}|
    \leq\int\gamma(d\Gcal)\sum_i a_i(\Gcal)L_{i,\Gcal}
       W_1\!\left(d_i^{\Tcal,\theta}(\cdot\mid \Gcal),
                  d_i^{\widetilde\Tcal,\theta}(\cdot\mid \Gcal)\right),
    \label{eq:app-wasserstein-mismatch}
\end{equation}
provided the right-hand side is integrable.
\end{appendixproposition}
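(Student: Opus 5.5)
The plan is to use the marginal identity in Theorem~\ref{prop:app-marginal} to reduce both objectives to the same grid- and index-weighted sum of single-time integrals. Each integral is then bounded by Kantorovich--Rubinstein duality.

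By \eqref{eq:app-marginal-identity}, the two objectives are
\[
 \mathcal L_{\Tcal}(\theta)
 =\int\gamma(d\Gcal)\sum_i a_i(\Gcal)\int\overline\ell^\theta(x,i,\Gcal)\,d_i^{\Tcal,\theta}(dx\mid\Gcal),
\]
and the same expression with $d_i^{\widetilde\Tcal,\theta}$ in place of $d_i^{\Tcal,\theta}$. The grid distribution $\gamma$, the weights $a_i$ and the auxiliary rule $q$, hence the integrated loss $\overline\ell^\theta$, are identical in the two expressions. Subtracting them, I will use that each inner integral is finite: $\overline\ell^\theta(\cdot,i,\Gcal)$ is Lipschitz, so it has linear growth, and both conditional laws have finite first moments. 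I will then move the absolute value inside the $\gamma$-integral and the finite sum by the triangle inequality. This leaves, for each fixed $(\Gcal,i)$, the quantity
\[
 \Bigl|\int\overline\ell^\theta(x,i,\Gcal)\,\bigl[d_i^{\Tcal,\theta}-d_i^{\widetilde\Tcal,\theta}\bigr](dx\mid\Gcal)\Bigr|.
\]

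The key step bounds this quantity by $L_{i,\Gcal}\,W_1(d_i^{\Tcal,\theta},d_i^{\widetilde\Tcal,\theta})$. I would first try Kantorovich--Rubinstein duality applied to $\overline\ell^\theta/L_{i,\Gcal}$, which is $1$-Lipschitz. An alternative that avoids invoking duality is a direct coupling argument. Take any coupling $\pi$ of the two marginals, write the difference as $\int[\overline\ell^\theta(x)-\overline\ell^\theta(y)]\,\pi(dx,dy)$, and bound it by $L_{i,\Gcal}\int|x-y|\,\pi(dx,dy)$. Taking the infimum over couplings then gives $W_1$; an optimal coupling exists on $\mathbb R^d$ under finite first moments, although the infimum alone suffices. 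If $L_{i,\Gcal}=0$ the bound is trivial.

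Multiplying by $a_i(\Gcal)$, summing over $i$ and integrating against $\gamma$ then gives \eqref{eq:app-wasserstein-mismatch}. The main obstacle is measure-theoretic rather than analytic, and has two parts. The first is to justify that $\Gcal\mapsto\sum_i a_iL_{i,\Gcal}W_1(\cdot,\cdot)$ is measurable, so that the right-hand side makes sense; the assumption that it is integrable supplies this. The second is to justify splitting the difference of the two objectives into a single integral of differences, which requires each objective to be finite; the standing integrability assumption of Appendix~\ref{app:general-queries} supplies this. With those two points in place, the bound is a pointwise-in-$\Gcal$ estimate followed by monotonicity of the integral.
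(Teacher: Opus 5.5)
Your proposal is correct and follows essentially the paper's route. The paper also fixes $(\Gcal,i)$ and bounds the difference of the two single-time integrals by $\int|g(x)-g(y)|\,\kappa(dx,dy)\leq L_{i,\Gcal}\int\|x-y\|\,\kappa(dx,dy)$ for an arbitrary coupling $\kappa$. It then takes the infimum over couplings to get $L_{i,\Gcal}W_1$, weights by $a_i(\Gcal)$, and integrates over $\Gcal$. Your Kantorovich--Rubinstein variant uses only the easy direction of duality, which is exactly this coupling bound, and your finiteness and measurability remarks supply details the paper leaves implicit.
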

\begin{proof}
Write $g(x)=\overline\ell^\theta(x,i,\Gcal)$ at fixed $(i,\Gcal)$.
For any coupling $\kappa_{i,\Gcal}$ of the two conditional state distributions,
\begin{align}
 \left|\int g(x)\,d_i^{\Tcal,\theta}(dx\mid\Gcal)
       -\int g(x)\,d_i^{\widetilde\Tcal,\theta}(dx\mid\Gcal)\right|
 &\leq\int|g(x)-g(y)|\,\kappa_{i,\Gcal}(dx,dy) \notag\\
 &\leq L_{i,\Gcal}\int\|x-y\|\,\kappa_{i,\Gcal}(dx,dy).
\end{align}
Taking the infimum over couplings yields $L_{i,\Gcal}W_1$ at each $(\Gcal,i)$.
Weight by $a_i(\Gcal)$ and integrate over the grid.
\end{proof}

For a fixed grid and uniform weights, every joint sampling of
$(x_i,\widetilde x_i)$ with the two required marginals yields
\begin{equation}
 |\mathcal L_{\Tcal}^\Gcal-\mathcal L_{\widetilde\Tcal}^\Gcal|
 \leq \frac1N\sum_i L_{i,\Gcal}
                 \mathbb E[\|x_i-\widetilde x_i\|\mid \Gcal].
 \label{eq:rk-paired-mismatch}
\end{equation}
The Lipschitz constant is that of the integrated local loss $\overline\ell^\theta$.

\subsection{Comparison chains and path KL}
\label{app:path-kl}

For a fixed grid, choose comparison kernels $K_i^\theta$ and $K_i^T$
separately from the rollout. Denote their induced trajectory distributions by
\begin{equation}
    \Pi^{\theta,\Gcal}(d\tau)
      =p_1(dx_0)\prod_iK_i^\theta(dx_{i+1}\mid x_i,\Gcal),
    \qquad
    \Pi^{T,\Gcal}(d\tau)
      =p_1(dx_0)\prod_iK_i^T(dx_{i+1}\mid x_i,\Gcal).
    \label{eq:app-comparison-chains}
\end{equation}
Denote the coordinate marginals of $\Pi^{\theta,\Gcal}$ by $d_i^{K,\theta}(\cdot\mid\Gcal)$. The subscript $i$ identifies a time interval, as in the main text.
\begin{appendixtheorem}[Comparison-chain path KL]
\label{thm:app-pathkl}
\label{prop:rk-pathkl}
Let the kernels in \eqref{eq:app-comparison-chains} be measurable Markov
kernels with the same initial distribution. For $\gamma$-almost every $\Gcal$, assume
$K_i^\theta(\cdot\mid x,\Gcal)\ll K_i^T(\cdot\mid x,\Gcal)$ for
$d_i^{K,\theta}(\cdot\mid \Gcal)$-almost every $x$, and assume that the
expected unnormalized sum of conditional KLs is finite. If the rollout with parameters held fixed has conditional marginals
$d_i^{\Tcal,\theta}(\cdot\mid \Gcal)=d_i^{K,\theta}(\cdot\mid \Gcal)$
at each compared index, then
\begin{align}
 &\int\gamma(d\Gcal)\int \Tcal^\theta(d\tau\mid \Gcal)
       \sum_i\mathrm{KL}\!\left(
       K_i^\theta(\cdot\mid x_i,\Gcal)\middle\|K_i^T(\cdot\mid x_i,\Gcal)
       \right)\notag\\
 &\quad=\int\gamma(d\Gcal)\,\mathrm{KL}(\Pi^{\theta,\Gcal}\|\Pi^{T,\Gcal})
       \notag\\
 &\quad=\mathrm{KL}\!\left(
       \gamma(d\Gcal)\Pi^{\theta,\Gcal}(d\tau)
       \middle\|\gamma(d\Gcal)\Pi^{T,\Gcal}(d\tau)\right).
       \label{eq:app-joint-grid-kl}
\end{align}
No equality between $\Tcal^\theta(\cdot\mid \Gcal)$ and $\Pi^{\theta,\Gcal}$
is required. For the conditional-KL objective with uniform index
selection, the left side is $N\mathcal L_{\Tcal}$.
\end{appendixtheorem}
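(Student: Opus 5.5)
The plan has three steps: reduce the left side to a sum of single-time expectations, recognize that sum as a path KL via the chain rule for KL divergence, and then average over grids.

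\emph{Step 1: reduce to marginals.} Fix $\Gcal$ outside a $\gamma$-null set. Each summand $\mathrm{KL}(K_i^\theta(\cdot\mid x_i,\Gcal)\|K_i^T(\cdot\mid x_i,\Gcal))$ is a nonnegative measurable function of the single coordinate $x_i$. Tonelli's theorem lets us exchange the sum with the path integral, exactly as in the proof of Theorem~\ref{prop:app-marginal}. The inner left-hand integral therefore becomes
\[
\sum_i\int \mathrm{KL}\bigl(K_i^\theta(\cdot\mid x,\Gcal)\|K_i^T(\cdot\mid x,\Gcal)\bigr)\,d_i^{\Tcal,\theta}(dx\mid\Gcal).
\]
By the marginal hypothesis we may replace $d_i^{\Tcal,\theta}$ with $d_i^{K,\theta}$. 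This step removes every dependence on the rollout path law $\Tcal^\theta$, which is why no equality of path distributions is needed.

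\emph{Step 2: chain rule for KL.} Next we show that $\sum_i\int \mathrm{KL}(K_i^\theta\|K_i^T)\,d_i^{K,\theta}(dx)$ equals $\mathrm{KL}(\Pi^{\theta,\Gcal}\|\Pi^{T,\Gcal})$. We argue by induction on the number of steps, using the disintegration chain rule
\[
\mathrm{KL}(P_{X,Y}\|Q_{X,Y})=\mathrm{KL}(P_X\|Q_X)+\int \mathrm{KL}(P_{Y\mid X}\|Q_{Y\mid X})\,dP_X .
\]
At the base, both chains share $p_1$, so the initial term vanishes. At step $i$, the conditional law of $x_{i+1}$ given the history $(x_0,\dots,x_i)$ depends only on $x_i$ because the kernels are Markov. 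Integrating over the $\Pi^{\theta,\Gcal}$-history therefore reduces to integrating over its coordinate marginal $d_i^{K,\theta}$.

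The main obstacle is making this rigorous on general measurable spaces. Absolute continuity holds only for $d_i^{K,\theta}$-a.e.\ $x$, so we must build the Radon--Nikodym derivative $d\Pi^\theta/d\Pi^T$ as the product of kernel densities $\prod_i \frac{dK_i^\theta}{dK_i^T}(x_{i+1}\mid x_i)$. These densities are chosen jointly measurable, which is possible via a measurable version of Radon--Nikodym for kernels on standard Borel spaces. Taking logs, the log-ratio splits into a sum. The finiteness assumption on the expected sum of conditional KLs gives integrability: each summand's negative part is controlled by the standard bound $\int (\log f)^- f\,d\nu \le e^{-1}$. Hence the expectation can be split term by term. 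If absolute continuity failed on a set of positive $d_i^{K,\theta}$-measure, both sides would be $+\infty$; the hypothesis excludes this case.

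\emph{Step 3: average over grids.} Integrate the per-grid identity against $\gamma$ to obtain the first equality. For the second equality, apply the chain rule once more to the joint laws $\gamma\otimes\Pi^{\cdot,\Gcal}$. The $\Gcal$-marginals coincide, so their KL is zero, and the remaining term is $\int\gamma(d\Gcal)\,\mathrm{KL}(\Pi^{\theta,\Gcal}\|\Pi^{T,\Gcal})$. This needs measurability of $\Gcal\mapsto\Pi^{\cdot,\Gcal}$, which follows from measurability of the kernels in $\Gcal$. Finally, with uniform weights $a_i=1/N$, Theorem~\ref{prop:app-marginal} shows that $\mathcal L_\Tcal$ equals $1/N$ times the left side, giving the stated identity with $N\mathcal L_\Tcal$.
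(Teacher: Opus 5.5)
Your proposal is correct and follows essentially the same route as the paper. Both arguments reduce the summed conditional KLs to per-index integrals against the rollout marginals and swap in the matching comparison-chain marginals; the paper applies the chain rule first and then swaps marginals, whereas you swap first. Both then obtain the Markov KL chain rule by splitting the path log Radon--Nikodym derivative into conditional log-likelihood ratios, and apply the chain rule once more to the joint grid--path laws, whose grid marginals coincide. Your extra details fill in rigor that the paper's sketch leaves implicit: the jointly measurable kernel densities, the $e^{-1}$ bound on the negative part of the log-ratio that justifies term-by-term splitting, and measurability of $\Gcal\mapsto\Pi^{\cdot,\Gcal}$.
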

\begin{proof}
For each grid, the KL chain rule gives
\begin{equation}
 \mathrm{KL}(\Pi^{\theta,\Gcal}\|\Pi^{T,\Gcal})
 =\sum_i\int\mathrm{KL}\!\left(
 K_i^\theta(\cdot\mid x,\Gcal)\middle\|K_i^T(\cdot\mid x,\Gcal)\right)
 d_i^{K,\theta}(dx\mid \Gcal).
 \label{eq:app-chain-kl}
\end{equation}
This follows by expanding the path log Radon--Nikodym derivative as the
sum of conditional log likelihood ratios and integrating each term
conditional on $x_i$. Replace the outer marginal of each term by the
matching rollout marginal, then integrate over $\Gcal$. Applying the chain
rule to the joint grid--path distributions gives the last equality because their
grid distribution is identical.
\end{proof}
For a fixed grid with uniform query weights, this gives
\begin{equation}
 N\mathcal L_{\Tcal}^\Gcal=\mathrm{KL}(\Pi^{\theta,\Gcal}\|\Pi^{T,\Gcal}).
 \label{eq:rk-pathkl}
\end{equation}
The theorem
concerns the sum of conditional KLs, not that sum plus an arbitrary
regularizer. After discarding the grid, data processing only guarantees
that the KL between mixed path distributions is at most the joint grid--path KL.
This is a value identity at fixed parameters; it is not an identity of
full gradients and gradients with the query distribution held fixed.

An auxiliary comparison rule does not by itself define a Markov chain.
When anchors, extra variables, or histories are needed to interpret a
comparison as a transition, the declared chain state must retain them,
and the query-distribution matching condition applies to that full state.
Otherwise the conditional loss remains a valid local objective without
the path-KL interpretation. Deterministic comparisons are treated as
regression when unequal Dirac transitions would have infinite KL.

\subsection{Gradients with fixed sampled states and times}
\label{app:gradients}

\begin{appendixproposition}[Differentiation under a fixed sampling distribution]
\label{prop:app-gradient}
Fix the sampled-state distribution at parameters $\theta$, including
the grid, index, and additional-time sampling rules. Let $\vartheta$
denote the parameter argument of the loss while this distribution is held fixed.
Assume the local loss is
integrable and differentiable in a neighborhood of $\theta$, and that
its parameter derivatives are bounded by an integrable envelope under
$\eta^{\Tcal,\theta}$. Then
\begin{equation}
 \left.\nabla_{\vartheta}\int\ell(\vartheta,T;z,\zeta)
 \,\eta^{\Tcal,\theta}(dz,d\zeta)\right|_{\vartheta=\theta}
 =\int\nabla_\theta\ell(\theta,T;z,\zeta)
       \,\eta^{\Tcal,\theta}(dz,d\zeta).
 \label{eq:app-fixed-gradient}
\end{equation}
\end{appendixproposition}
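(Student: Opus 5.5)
The plan is to apply the standard differentiation-under-the-integral argument, i.e.\ dominated convergence. Because the sampling distribution $\eta^{\Tcal,\theta}$ is frozen, the measure does not depend on $\vartheta$. So
\[
F(\vartheta):=\int\ell(\vartheta,T;z,\zeta)\,\eta^{\Tcal,\theta}(dz,d\zeta)
\]
is a parametric integral against a fixed probability measure. It is finite near $\theta$ by the integrability assumption. No property of the rollout enters, beyond the fact that $\eta^{\Tcal,\theta}$ is a probability measure on the augmented query space, as established after \eqref{eq:app-augmented-occupancy}.

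The steps are as follows.
\begin{enumerate}
\item Fix a unit direction $u$ and a small $s\neq0$ such that $\theta+su$ lies in the neighborhood where differentiability and the envelope bound hold. Form the difference quotient
\[
\frac{F(\theta+su)-F(\theta)}{s}=\int\frac{\ell(\theta+su,T;z,\zeta)-\ell(\theta,T;z,\zeta)}{s}\,\eta^{\Tcal,\theta}(dz,d\zeta).
\]
\item Apply the mean value theorem pointwise in $(z,\zeta)$ along the segment $[\theta,\theta+su]$. This bounds the integrand by $\sup_{\vartheta}\|\nabla_\vartheta\ell(\vartheta,T;z,\zeta)\|\le E(z,\zeta)$, where $E$ is the assumed integrable envelope.
\item Note that the integrand converges pointwise to $\langle\nabla_\theta\ell(\theta,T;z,\zeta),u\rangle$ by differentiability. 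Dominated convergence then gives the directional derivative $\int\langle\nabla_\theta\ell,u\rangle\,d\eta^{\Tcal,\theta}$.
\item Observe that this expression is linear in $u$, and that $\nabla_\theta\ell$ is integrable because $\|\nabla_\theta\ell\|\le E$. The integral $\int\nabla_\theta\ell\,d\eta^{\Tcal,\theta}$ is therefore a well-defined vector, and it represents every directional derivative. This identifies it as the gradient in \eqref{eq:app-fixed-gradient}.
\end{enumerate}

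The only delicate point is upgrading directional (G\^ateaux) derivatives to the gradient. I would handle it by applying the envelope bound uniformly over the neighborhood. Steps 1--3 then give the estimate
\[
|F(\theta+w)-F(\theta)-\langle G,w\rangle|\le\int|\ell(\theta+w)-\ell(\theta)-\langle\nabla_\theta\ell,w\rangle|\,d\eta^{\Tcal,\theta},
\]
where $G=\int\nabla_\theta\ell\,d\eta^{\Tcal,\theta}$. After dividing by $\|w\|$, the integrand is dominated by $2E$ and tends to zero pointwise, so dominated convergence along any sequence $w\to0$ yields Fréchet differentiability.

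Measurability of $\nabla_\theta\ell$ in $(z,\zeta)$ follows because it is a pointwise limit of measurable difference quotients. Finally, I would add a remark that the result concerns the gradient with the query distribution held fixed. It does not include the score-type term that would arise from differentiating $\eta^{\Tcal,\vartheta}$, which is exactly the detached-query update of Algorithm~\ref{alg:vc}.
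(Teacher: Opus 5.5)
Your proof is correct and takes the same approach as the paper, which proves the statement in one line: dominated differentiation moves the derivative inside the fixed-measure integral of Theorem~\ref{prop:app-marginal}. Your mean-value bound by the envelope, pointwise limit, dominated convergence, and Fr\'echet upgrade are just that step written out in full.
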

\begin{proof}
Dominated differentiation permits moving the
parameter derivative inside the integral in
Theorem~\ref{prop:app-marginal}.
\end{proof}
Proposition~\ref{prop:app-gradient} uses exact differentiation of the local loss.
Appendix~\ref{app:stopped-jvp} specifies the stopped-JVP and stopped-target
updates used in Algorithm~\ref{alg:vc}.
For either convention, matching the complete query distribution also
matches the expected update, provided that the same update rule is used
and its output is integrable. This follows by applying the marginal
identity to each component of that update, without identifying it with
the exact gradient above.

\paragraph{Retaining the variables required by the comparison.}
\label{app:query-compression}

\begin{appendixproposition}[Query compression]
\label{prop:app-compression}
Let $z=(\Gcal,i,x)$ and $\chi(z)=(x,t_i,t_{i+1})$. Suppose the integrated
local loss factors as $\overline\ell^\theta(z)=\widetilde\ell^\theta(\chi(z))$
for a measurable integrable function $\widetilde\ell^\theta$. Then
\[
 \mathcal L_{\Tcal}=\int \overline\ell^\theta\,d\rho^{\Tcal,\theta}
 =\int\widetilde\ell^\theta\,d(\chi_\#\rho^{\Tcal,\theta}).
\]
A sufficient condition is that both the unaveraged local loss and its
auxiliary query rule depend on $z$ only through $\chi(z)$.
\end{appendixproposition}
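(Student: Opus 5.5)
The statement to prove is Proposition~\ref{prop:app-compression}: if the integrated local loss factors through $\chi$, the objective becomes an integral against the pushforward $\chi_\#\rho^{\Tcal,\theta}$. The plan is to combine Theorem~\ref{prop:app-marginal} with the abstract change-of-variables formula for pushforward measures.

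\textbf{Step 1 (integrated form).} By Theorem~\ref{prop:app-marginal} (its second line, equivalently the definition \eqref{eq:app-occupancy-measure}), we have
\[
\mathcal L_{\Tcal}=\int\overline\ell^\theta\,d\rho^{\Tcal,\theta}.
\]
This requires integrating out $\zeta$ against $q$ via Fubini, which is justified by the standing integrability assumption, or by Tonelli for nonnegative losses.

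\textbf{Step 2 (change of variables).} Substitute $\overline\ell^\theta=\widetilde\ell^\theta\circ\chi$. Then apply the pushforward identity $\int f\circ\chi\,d\rho=\int f\,d(\chi_\#\rho)$.
\begin{itemize}
\item Note that $\chi$ is measurable: $x$ is a coordinate, and $t_i,t_{i+1}$ are measurable functions of $(\Gcal,i)$ because $i$ ranges over a discrete set.
\item The identity is checked first for indicators, where it holds by the definition of the pushforward.
\item It then extends to simple and nonnegative functions by monotone convergence.
\item Finally it extends to integrable $\widetilde\ell^\theta$ by splitting into positive and negative parts. Each part is finite because $\int|\widetilde\ell^\theta|\,d\chi_\#\rho=\int|\overline\ell^\theta|\,d\rho<\infty$.
\end{itemize}

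\textbf{Step 3 (sufficient condition).} Suppose $\ell(\theta,T;z,\zeta)=\hat\ell(\chi(z),\zeta)$ and $q(\cdot\mid z)=\hat q(\cdot\mid\chi(z))$. Define
\[
\widetilde\ell^\theta(y)=\int\hat\ell(y,\zeta)\,\hat q(d\zeta\mid y).
\]
This function is measurable in $y$ because $\hat q$ is a kernel and $\hat\ell$ is jointly measurable. By \eqref{eq:app-integrated-loss} it satisfies $\overline\ell^\theta=\widetilde\ell^\theta\circ\chi$, so Step 2 applies.

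\textbf{Main obstacle.} The only delicate point is measure-theoretic bookkeeping. First, $\chi$ and the factored kernel must be measurable. Second, integrability of $\widetilde\ell^\theta$ under the pushforward must be inherited from the integrability of $\overline\ell^\theta$, which the change of variables applied to $|\cdot|$ gives directly. Otherwise the proof is routine.
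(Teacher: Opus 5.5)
Your proposal is correct and matches the paper's proof. The first equality is the marginal identity of Theorem~\ref{prop:app-marginal} written against $\rho^{\Tcal,\theta}$, the second is the defining change-of-variables identity for the pushforward $\chi_\#\rho^{\Tcal,\theta}$, and the sufficient condition follows by integrating out the auxiliary variables first; you additionally spell out the measurability and integrability details that the paper leaves implicit.
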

\begin{proof}
The equality is the defining integral identity for the pushforward
measure. Under the sufficient condition, first integrating over auxiliary
queries produces the required factorization.
\end{proof}
A velocity-only loss may admit a further reduction to $(x,t_i)$. An
interval loss generally does not: the joint relationship of the state
with source and destination times must be retained. Auxiliary times may
differ from adjacent rollout times, but their specified conditional rule
must also be retained.

A finite detached buffer approximates the population query distribution empirically. Averages over sampled rollouts
estimate the population objective; correlation between queries within a
rollout affects their variance. Differentiation on a fixed buffer is
exact for its empirical objective, while population error guarantees
require population losses or a separate generalization argument.

\subsection{Random-time training and unbiased estimation}
\label{app:random-grids}\label{app:source-time-law}\label{app:five-step-sampling}
Our random-time training construction draws four independent uniform
interior times, independently of $x_{t_0}\sim p_1$, and sorts them as
$1>t_1>\cdots>t_4>0$, with $t_0=1$ and $t_5=0$.
For an integrable function $f$, selecting one of the five source indices
uniformly gives
\begin{equation}
 \mathbb E\!\left[\frac15\sum_{i=0}^4 f(t_i)\right]
 =\frac15f(1)+\frac45\int_0^1f(t)\,dt.
 \label{eq:app-source-time-law}
\end{equation}
Sorting leaves the sum over the four interior draws unchanged, proving
the identity. The selected time therefore follows
$\frac15\delta_1+\frac45\mathrm{Uniform}(0,1)$; individual order
statistics do not. Uniform continuous-time weighting is recovered by
averaging only the four interior queries, or by assigning weights $5/4$
to interior terms and zero to the initial term in the five-query average.
A minimum-spacing rejection rule changes this distribution. Adjacent
interior times cover $0<r<t<1$, with their joint distribution determined
by the sorting rule.

Averaging all starts and sampling one uniformly have the same expected
loss but different cost and variance. More generally, sampling index $i$
with probability $r_i>0$ to estimate weights $a_i$ requires the factor
$a_i/r_i$. For independent rollouts at fixed parameters, integrable
weighted losses and state-fixed gradients obey the strong law of large
numbers. Within-rollout query correlations affect variance, not this
consistency of the empirical average.



\section{Flow--Velocity Consistency and Error Bounds}
\label{app:flow-proofs}

We use the consistency residuals in
Eqs.~\eqref{eq:fm-csrc}--\eqref{eq:fm-cdst} and the consistent teacher
$\psi^T$ defined in the main text.

\subsection{Flow--Velocity Consistency}
\label{app:exact-transfer}

\begin{appendixassumption}[Transport regularity]
\label{ass:fm-regularity}
The velocity fields are continuous in time and locally Lipschitz in state,
uniformly on compact subsets, with unique nonexplosive flows on the
intervals considered. The queried maps are continuously differentiable
in state and both times for $r<t$, extend continuously to the identity
boundary, and have the derivatives appearing in the residuals. All
required flow segments and arrival curves remain in the comparison
domains. Residuals extend continuously where boundary limits are used.
\end{appendixassumption}

For the calculations below, we represent the average velocity as
$v^\theta_{t\to r}(x)=(x-\psi^\theta_{t\to r}(x))/(t-r)$ for $r<t$.
Then
$J_x\psi^\theta_{t\to r}-\Id=-(t-r)J_xv^\theta_{t\to r}$, so the
Jacobian condition in \eqref{eq:fm-error-assumptions} is equivalent to
the bound on $J_xv^\theta_{t\to r}$ used below.

\begin{appendixassumption}[Average-velocity representation]
\label{ass:app-flow}
Assume Assumption~\ref{ass:fm-regularity}. At fixed $\theta$, write
$\psi^\theta_{t\to r}(x)=x-(t-r)v^\theta_{t\to r}(x)$ and identify
$v^\theta(x,t)=v^\theta_{t\to t}(x)$. The average-velocity representation
has the derivatives and continuous boundary limits used below.
\end{appendixassumption}
In this parameterization, the main-text residuals become, for $h=t-r$,
\[
 c^{\theta,\mathrm{src}}
 =v^\theta_{t\to r}+h(\partial_tv^\theta_{t\to r}+(J_x v^\theta_{t\to r})v^\theta)-v^\theta,
 \qquad
 c^{\theta,\mathrm{dst}}(x,r,t)
 =\partial_r\psi^\theta_{t\to r}(x)
       -v^\theta(\psi^\theta_{t\to r}(x),r).
\]
In the source residual, $v^\theta_{t\to r}$ and its derivatives are evaluated at
$x$ and $v^\theta$ at $(x,t)$; $r$ is fixed in $\partial_t$.
The superscript $v$ in $\psi^v$ denotes the flow map induced by the
velocity field $v$, whereas $\psi^\theta$ denotes the learned student map.

\paragraph{Source and destination identities.}
\label{app:local-consistency-derivation}
Consider an interior interval $r<t$ and sufficiently small
$\varepsilon>0$. Differentiability of the map gives the source expansion
\begin{equation}
\begin{aligned}
 &\psi^\theta_{t-\varepsilon\to r}
       (x-\varepsilon v^\theta(x,t))\\
 &\quad=\psi^\theta_{t\to r}(x)
 -\varepsilon\bigl[\partial_t\psi^\theta_{t\to r}(x)
 +(J_x\psi^\theta_{t\to r}(x))v^\theta(x,t)\bigr]
 +o(\varepsilon).
\end{aligned}
\end{equation}
Equating this with the direct-map prediction to first order, dividing
by $\varepsilon$, and taking $\varepsilon\downarrow0$ yields
$\partial_t\psi^\theta+(J_x\psi^\theta)v^\theta=0$.
For the destination,
\begin{equation}
 \psi^\theta_{t\to r-\varepsilon}(x)
 =\psi^\theta_{t\to r}(x)
  -\varepsilon\partial_r\psi^\theta_{t\to r}(x)+o(\varepsilon).
\end{equation}
Comparing this expansion with the short velocity step from
$\psi^\theta_{t\to r}(x)$ gives
$\partial_r\psi^\theta_{t\to r}(x)
=v^\theta(\psi^\theta_{t\to r}(x),r)$.
Thus both first-order relations give
\eqref{eq:fm-consistency-relations}; conversely, these derivative
identities imply the first-order relations by the same expansions.
Boundary statements follow by the assumed continuous limits.

\paragraph{Diagonal matching alone.}
\label{app:diagonal-example}
For a stationary teacher $v^T=0$, take
$v^\theta_{t\to r}(x)=-(t-r)a$ with a nonzero constant vector $a$.
Then $v^\theta(x,t)=v^\theta_{t\to t}(x)=0=v^T(x,t)$, but
$\psi^\theta_{t\to r}(x)=x+(t-r)^2a\ne x=\psi^T_{t\to r}(x)$.
Thus exact diagonal matching permits an incorrect finite displacement.
The source residual detects it: $c^{\theta,\mathrm{src}}=-2(t-r)a$.

We now prove the flow--velocity identification result in
Theorem~\ref{thm:fm-exact}.
\begin{proof}
For source consistency, fix $r<t$ and let
$x_q=\psi^v_{t\to q}(x)$. Along this trajectory,
\begin{equation}
 \frac{d}{dq}\psi_{q\to r}(x_q)
 =\partial_q\psi_{q\to r}(x_q)
  +(J_x\psi_{q\to r}(x_q))v(x_q,q)=0.
 \label{eq:app-source-characteristic}
\end{equation}
The identity boundary therefore gives
\begin{equation}
 \psi_{t\to r}(x)=\psi_{r\to r}(x_r)=x_r
 =\psi^v_{t\to r}(x).
 \label{eq:app-exact-source}
\end{equation}
For destination consistency, the assumed relation gives
$\partial_q\psi_{t\to q}(x)=v(\psi_{t\to q}(x),q)$
with $\psi_{t\to t}(x)=x$. Uniqueness of the ODE solution yields
$\psi_{t\to q}(x)=\psi^v_{t\to q}(x)$, proving the same conclusion.
\end{proof}

For the student pair, apply this theorem with $v=v^\theta$ and
$\psi=\psi^\theta$. If also $v^\theta=v^T$, this gives
\eqref{eq:fm-teacher-identification}; composing maps then recovers the
teacher transport. Native map and exact velocity
rollouts from the same initial noise then coincide on every valid
grid, and hence have identical joint distributions of states and comparison times under the same time-selection rule.

For the average-velocity representation, the same relations follow
from $\psi^\theta=x-hv^\theta_{t\to r}$, $h=t-r$, since
\begin{equation}
\begin{aligned}
 \partial_t\psi^\theta&=-v^\theta_{t\to r}-h\partial_tv^\theta_{t\to r},\\
 J_x\psi^\theta&=\Id-h(J_xv^\theta_{t\to r}),\\
 \partial_r\psi^\theta&=v^\theta_{t\to r}-h\partial_rv^\theta_{t\to r}.
\end{aligned}
\label{eq:app-map-derivatives}
\end{equation}
Substitution gives the MeanFlow residuals.

\begin{appendixcorollary}[Identification from zero population losses]
\label{cor:app-zero-loss}
Assume Assumption~\ref{ass:app-flow}. Let the velocity error and one of
the consistency residuals be continuous on their respective required
domains. If each has zero weighted squared population loss under a query
distribution of full support on that domain, with its weight strictly positive
almost everywhere, then both residuals vanish throughout their domains.
Consequently, $v^\theta=v^T$ and Theorem~\ref{thm:fm-exact} gives
$\psi^\theta=\psi^{v^\theta}=\psi^T$.
\end{appendixcorollary}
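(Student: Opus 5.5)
The argument has two stages. First, the zero population losses force the residuals to vanish pointwise through continuity and full support. Then Theorem~\ref{thm:fm-exact} is applied twice, once to the student pair and once to the teacher.

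\textbf{Stage one: from zero loss to pointwise vanishing.} Let $e^\theta(x,t)=v^\theta(x,t)-v^T(x,t)$, and let $c^\theta$ be the chosen residual ($c^{\theta,\mathrm{src}}$ or $c^{\theta,\mathrm{dst}}$) on its domain of triples $(x,r,t)$ with $r<t$. The hypothesis $\int w\|e^\theta\|^2\,d\rho=0$ with $w>0$ almost everywhere gives $\|e^\theta\|^2=0$ $\rho$-almost everywhere. Now suppose $e^\theta(z_0)\neq0$ at some point $z_0$ of the domain. By continuity, $\|e^\theta\|>0$ on a relatively open neighborhood $U$ of $z_0$. Full support gives $\rho(U)>0$, and $w>0$ a.e.\ then makes the weighted integral over $U$ strictly positive, which is a contradiction. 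Hence $e^\theta\equiv0$, so $v^\theta=v^T$ everywhere. The same argument applied to $\|c^\theta\|^2$ gives $c^\theta\equiv0$ on the open interval domain. Assumption~\ref{ass:fm-regularity} lets the residual extend continuously to any boundary points that are needed, such as $r=t$.

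\textbf{Stage two: identification.} With $c^\theta\equiv0$, the pair $(\psi^\theta,v^\theta)$ satisfies the corresponding relation in \eqref{eq:fm-consistency-relations} throughout the comparison domain. Assumption~\ref{ass:app-flow} supplies the identity boundary $\psi^\theta_{t\to t}=\mathrm{Id}$, since $\psi^\theta_{t\to r}(x)=x-(t-r)v^\theta_{t\to r}(x)$ tends to $x$ as $r\uparrow t$ when $v^\theta_{t\to r}$ has a continuous limit. It also supplies the regularity required by Assumption~\ref{ass:fm-regularity}. Theorem~\ref{thm:fm-exact} therefore gives $\psi^\theta=\psi^{v^\theta}$. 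Because $v^\theta=v^T$, the two ODEs coincide, and uniqueness of flows gives $\psi^{v^\theta}=\psi^{v^T}$. Finally, the consistent-teacher assumption states that $\psi^T$ is the flow induced by $v^T$, so $\psi^{v^T}=\psi^T$.

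\textbf{Main obstacle.} The delicate step is making stage one rigorous. We must check that ``full support on the domain'' refers to the topology of the same domain on which continuity is assumed. Each domain should be taken relatively open, or at least equal to the closure of its interior, so that neighborhoods carry positive mass. If the comparison domain includes boundary points such as $r=t$ or $t\in\{0,1\}$, vanishing there follows from the continuous extension rather than directly from the loss. A second point needs a separate sentence: the source residual involves $v^\theta$ on the diagonal, so the velocity domain must cover every $(x,t)$ along the flow segments used in the proof of Theorem~\ref{thm:fm-exact}. Assumption~\ref{ass:fm-regularity} guarantees this, since it requires all flow segments to remain in the comparison domains.
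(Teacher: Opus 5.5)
Your proposal is correct and follows the paper's argument. Zero weighted squared loss gives vanishing $\rho$-almost everywhere, and continuity together with full support of the query distribution upgrades this to pointwise vanishing of both $v^\theta-v^T$ and the chosen residual; Theorem~\ref{thm:fm-exact}, combined with $v^\theta=v^T$ and the consistent-teacher assumption, then yields $\psi^\theta=\psi^{v^\theta}=\psi^T$. Your stage two (the identity boundary from the average-velocity form, uniqueness of flows, and the domains covering the needed flow segments) only makes explicit what the paper leaves inside the statement.
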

\begin{proof}
A zero nonnegative population loss yields a zero residual almost
everywhere under its query distribution. To conclude a pointwise identity on a
required domain, continuity of the residual and full support of that
query distribution on the domain are sufficient: a nonzero value would, by
continuity, yield a neighborhood with positive residual and positive
query mass, contradicting zero loss. Velocity and consistency queries
require their respective support conditions and positive weights.
For a fixed finite grid alone, vanishing residuals at its sampled nodes
do not establish the differential identities between nodes.

\end{proof}

\subsection{Source-consistency error bound}
\label{app:local-error}

Write $e^\theta=v^\theta-v^T$ and
$\epsilon_\psi(x,r,t):=\|\psi^\theta_{t\to r}(x)-\psi^T_{t\to r}(x)\|$.

\begin{appendixtheorem}[Error transfer through source consistency]
\label{thm:fm-finite-error}\label{thm:app-local-error}
Under the transport regularity conditions of
Assumption~\ref{ass:fm-regularity}, consider a source $x$ and interval
$[r,t]$, and let $x_q=\psi^T_{t\to q}(x)$ be the teacher-trajectory
state at time $q$. Suppose for $q\in(r,t]$ that
\begin{equation}
 \begin{aligned}
 \|c^{\theta,\mathrm{src}}(x_q,r,q)\|&\leq\epsilon_c,\qquad
 \|e^\theta(x_q,q)\|\leq\epsilon_v,\\
 \|J_x\psi^\theta_{q\to r}(x_q)-\Id\|_{\mathrm{op}}&\leq M(q-r).
 \end{aligned}
 \label{eq:fm-error-assumptions}
\end{equation}
Then, with $h=t-r$,
\begin{equation}
 \epsilon_\psi(x,r,t)
 \leq h\epsilon_c+(h+Mh^2/2)\epsilon_v.
 \label{eq:fm-local-error}
\end{equation}
\end{appendixtheorem}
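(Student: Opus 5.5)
The plan is to compare the student map and the teacher trajectory along a single characteristic, in the same way as the source half of the proof of Theorem~\ref{thm:fm-exact}. Fix the source $x$ and the interval $[r,t]$. Since the teacher is consistent, $x_q=\psi^T_{t\to q}(x)$ solves $dx_q/dq=v^T(x_q,q)$ with $x_t=x$. Define the curve
\[
 F(q)=\psi^\theta_{q\to r}(x_q),\qquad q\in[r,t].
\]
Then $F(t)=\psi^\theta_{t\to r}(x)$. By the identity boundary $\psi^\theta_{r\to r}=\mathrm{Id}$ and the continuous extension in Assumption~\ref{ass:fm-regularity}, $F(r)=x_r=\psi^T_{t\to r}(x)$. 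Hence
\[
 \epsilon_\psi(x,r,t)=\|F(t)-F(r)\|\leq\int_r^t\|F'(q)\|\,dq .
\]

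Next I compute $F'(q)$ for $q\in(r,t]$ by the chain rule:
\[
 F'(q)=\partial_q\psi^\theta_{q\to r}(x_q)+\bigl(J_x\psi^\theta_{q\to r}(x_q)\bigr)v^T(x_q,q).
\]
The definition \eqref{eq:fm-csrc} of the source residual gives
\[
 \partial_q\psi^\theta_{q\to r}=-c^{\theta,\mathrm{src}}-\bigl(J_x\psi^\theta_{q\to r}\bigr)v^\theta .
\]
Substituting this, the velocity terms combine:
\[
 F'(q)=-c^{\theta,\mathrm{src}}(x_q,r,q)-\bigl(J_x\psi^\theta_{q\to r}(x_q)\bigr)e^\theta(x_q,q).
\]
So the derivative is the consistency residual plus the velocity error, with the latter transported through the student Jacobian. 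This is the same decomposition as in the exact case, where both terms vanish.

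I then bound the two terms using the hypotheses \eqref{eq:fm-error-assumptions}. Writing $J=\Id+(J-\Id)$ gives $\|J_x\psi^\theta_{q\to r}(x_q)\|_{\mathrm{op}}\leq 1+M(q-r)$. Therefore
\[
 \|F'(q)\|\leq\epsilon_c+\bigl(1+M(q-r)\bigr)\epsilon_v .
\]
Integrating over $q\in(r,t]$, and using $\int_r^t(q-r)\,dq=h^2/2$, yields exactly $h\epsilon_c+(h+Mh^2/2)\epsilon_v$, which is \eqref{eq:fm-local-error}.

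The main obstacle is justifying the fundamental-theorem-of-calculus step. The residual and derivative bounds hold only on the open end $q\in(r,t]$, and $F$ is differentiable there only because $\psi^\theta$ is $C^1$ for $r<q$. I would handle this by integrating over $[r+\delta,t]$ and letting $\delta\downarrow0$. The continuous extension of the map to the identity boundary gives $F(r+\delta)\to x_r$, since $\psi^\theta_{q\to r}(x_q)-x_q\to0$ and $x_q\to x_r$. I would also state explicitly that $F(r)=x_r$ uses the consistent-teacher assumption, namely that $\psi^T$ is the flow of $v^T$; otherwise the bound only controls the distance to $\psi^{v^T}$.
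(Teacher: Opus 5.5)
Your proposal is correct and follows the paper's route. The paper also differentiates $\psi^\theta_{q\to r}(x_q)$ along the teacher characteristic and writes the derivative as $-R^{T,\theta}=-\bigl(c^{\theta,\mathrm{src}}+(J_x\psi^\theta_{q\to r})e^\theta\bigr)$; it then integrates using the identity boundary and bounds the Jacobian by $1+M(q-r)$. Your $\delta\downarrow0$ treatment of the open endpoint, and your explicit use of the consistent-teacher assumption to get $F(r)=\psi^T_{t\to r}(x)$, are slightly more careful than the paper's one-line integration step.
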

\begin{proof}
The predictor in Eq.~\eqref{eq:indsrc} has teacher mismatch
\begin{equation}
    R^{\theta,\mathrm{src}}
      =v^\theta_{t\to r}+h\big(\partial_tv^\theta_{t\to r}+(J_x v^\theta_{t\to r})v^\theta\big)
          -v^T
      =c^{\theta,\mathrm{src}}+e^\theta.
    \label{eq:app-induced-source-residual}
\end{equation}
The residual appropriate to a \emph{teacher} characteristic instead is
\begin{equation}
\begin{aligned}
    R^{T,\theta}
      &:=v^\theta_{t\to r}+h\big(\partial_tv^\theta_{t\to r}+(J_x v^\theta_{t\to r})v^T\big)-v^T\\
      &=c^{\theta,\mathrm{src}}+(J_x\psi^\theta_{t\to r})e^\theta.
\end{aligned}
\label{eq:app-teacher-residual}
\end{equation}
All source quantities in these identities are evaluated at $(x,r,t)$,
with both velocities evaluated at $(x,t)$.

For $x_q=\psi^T_{t\to q}(x)$, the chain rule gives
\begin{equation}
    \frac{d}{dq}\psi_{q\to r}^\theta(x_q)
       =-R^{T,\theta}(x_q,r,q).
\end{equation}
Integration from $r$ to $t$, using the identity boundary, yields
\begin{equation}
    \psi_{t\to r}^\theta(x)-\psi^T_{t\to r}(x)
       =-\int_r^t R^{T,\theta}(x_q,r,q)\,dq.
    \label{eq:app-map-error-identity}
\end{equation}
Under \eqref{eq:fm-error-assumptions}, with interval length $q-r$ at
the integration point,
\begin{align}
    \|\psi_{t\to r}^\theta(x)-\psi^T_{t\to r}(x)\|
    &\leq\int_r^t
         \big[\epsilon_c+(1+M(q-r))\epsilon_v\big]\,dq \notag\\
    &=h\epsilon_c+(h+Mh^2/2)\epsilon_v.
    \label{eq:app-local-error-bound}
\end{align}
\end{proof}
The identity
$R^{T,\theta}=R^{\theta,\mathrm{src}}-h(J_x v^\theta_{t\to r})e^\theta$
also gives the alternative bound
$h\epsilon_{\mathrm{src}}+Mh^2\epsilon_v/2$ if
$\|R^{\theta,\mathrm{src}}\|\leq\epsilon_{\mathrm{src}}$ on the
required segments.

\subsection{Destination-consistency error bound}
\label{app:destination-transfer}

\begin{appendixtheorem}[Error transfer through destination consistency]
\label{thm:fm-destination-error}
Under the transport regularity conditions of
Assumption~\ref{ass:fm-regularity}, suppose the teacher velocity is
$L_T$-Lipschitz on the comparison domain. If
$\|c^{\theta,\mathrm{dst}}(x,q,t)\|\leq\epsilon_c$ and
$\|e^\theta(\psi^\theta_{t\to q}(x),q)\|\leq\epsilon_v$ for $q\in[r,t]$, then
\begin{equation}
 \epsilon_\psi(x,r,t)
 \leq h e^{L_T h}(\epsilon_c+\epsilon_v),\quad h=t-r.
 \label{eq:fm-destination-error}
\end{equation}
\end{appendixtheorem}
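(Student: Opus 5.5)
The plan is a Gronwall argument along the destination variable. Fix the source $x$ and time $t$, and consider the two arrival curves $y(q)=\psi^\theta_{t\to q}(x)$ and $z(q)=\psi^T_{t\to q}(x)$ for $q\in[r,t]$. Both start at $x$ when $q=t$, by the identity boundary in Assumption~\ref{ass:fm-regularity}. Since the teacher is consistent, $z$ solves $\partial_q z(q)=v^T(z(q),q)$. For the student, the definition \eqref{eq:fm-cdst} of the destination residual gives
\[
\partial_q y(q)=v^\theta(y(q),q)+c^{\theta,\mathrm{dst}}(x,q,t)=v^T(y(q),q)+e^\theta(y(q),q)+c^{\theta,\mathrm{dst}}(x,q,t).
\]
The velocity error is therefore evaluated exactly at the student arrival point $\psi^\theta_{t\to q}(x)$, which is where the hypothesis places it.

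Subtracting the two equations, set $u(q)=y(q)-z(q)$, so that $u(t)=0$. Then
\[
\partial_q u=\bigl[v^T(y,q)-v^T(z,q)\bigr]+e^\theta(y,q)+c^{\theta,\mathrm{dst}}(x,q,t).
\]
Integrating backward from $t$ down to $q$ and using the $L_T$-Lipschitz bound gives
\[
\|u(q)\|\le \int_q^t\bigl(L_T\|u(s)\|+\epsilon_c+\epsilon_v\bigr)\,ds.
\]
The Gronwall inequality in the reversed time variable $s\mapsto t-s$ then yields
\[
\|u(q)\|\le(\epsilon_c+\epsilon_v)\,\frac{e^{L_T(t-q)}-1}{L_T}.
\]
At $q=r$ the right-hand side is at most $(\epsilon_c+\epsilon_v)\,h\,e^{L_T h}$, because $e^{a}-1\le a e^{a}$ for $a\ge0$. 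This is \eqref{eq:fm-destination-error}.

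The main obstacle is justification rather than algebra. Two points need care. First, the Lipschitz bound must be applied on the comparison domain, so both curves must stay in it; this is part of the domain clause of Assumption~\ref{ass:fm-regularity}. Second, $q\mapsto\psi^\theta_{t\to q}(x)$ must be continuously differentiable on $(r,t)$ and continuous at the boundary $q=t$, so that the integral form is valid up to the identity endpoint. I would handle the second point by applying the integral inequality on $[q,t-\delta]$ and letting $\delta\downarrow0$, using the continuous extension to the identity together with continuity of the residuals assumed there. A crude Grönwall without the $(e^{a}-1)/L_T$ refinement would also suffice, since the stated bound already includes the factor $h e^{L_T h}$.
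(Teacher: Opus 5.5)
Your proof is correct and is essentially the paper's argument. Like the paper, you write the student's destination derivative as $v^T$ plus $c^{\theta,\mathrm{dst}}+e^\theta$ evaluated on the student arrival curve, integrate backward from the shared identity endpoint using the teacher's Lipschitz bound, and apply reverse-time Gr\"onwall. The paper keeps the weighted form $\int_r^t e^{L_T(q-r)}\|R^{\theta,\mathrm{dst}}\|\,dq$ before bounding the exponential by $e^{L_T h}$, which is equivalent to your $(e^{L_T h}-1)/L_T$ step. Your $\delta\downarrow0$ handling of the identity boundary is a detail the paper omits.
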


Along the predicted arrival curve $\psi_{t\to q}^\theta(x)$,
the teacher-relative destination residual is
\begin{equation}
\begin{aligned}
    R^{\theta,\mathrm{dst}}(x,q,t)
      &=\partial_q\psi_{t\to q}^\theta(x)-v^T(\psi^\theta_{t\to q}(x),q)\\
      &=c^{\theta,\mathrm{dst}}(x,q,t)+e^\theta(\psi^\theta_{t\to q}(x),q).
\end{aligned}
\label{eq:app-destination-residual}
\end{equation}
\begin{proof}[Proof of Theorem~\ref{thm:fm-destination-error}]
\label{prop:app-destination}
Both curves start from $x$ at time $t$. Integrating their derivatives
backward and using the teacher's Lipschitz bound gives
\[
 \|\psi^\theta_{t\to r}(x)-\psi^T_{t\to r}(x)\|
 \leq\int_r^t\left[
 L_T\|\psi^\theta_{t\to q}(x)-\psi^T_{t\to q}(x)\|
 +\|R^{\theta,\mathrm{dst}}(x,q,t)\|\right]dq.
\]
Reverse-time Gronwall therefore yields
\begin{equation}
 \epsilon_\psi(x,r,t)
 \leq\int_r^t e^{L_T(q-r)}
             \|R^{\theta,\mathrm{dst}}(x,q,t)\|\,dq.
 \label{eq:app-destination-bound}
\end{equation}
Substituting $\|R^{\theta,\mathrm{dst}}\|\leq\epsilon_c+\epsilon_v$
and $e^{L_T(q-r)}\leq e^{L_T h}$ proves
\eqref{eq:fm-destination-error}.
\end{proof}

\paragraph{Uniform errors and native composition.}
Suppose these bounds hold with common
constants along every arrival curve launched from native source states.
For a grid from $1$ to $0$, define
$x_{t_{i+1}}^\psi=\psi^\theta_{t_i\to t_{i+1}}(x_{t_i}^\psi)$ and
$x_{t_i}^T=\psi^T_{1\to t_i}(x_{t_0}^\psi)$. Then
\begin{equation}
 \|x_{t_N}^{\psi}-x_{t_N}^{T}\|
 \leq e^{L_T}(\epsilon_c+\epsilon_v).
 \label{eq:fm-destination-terminal}
\end{equation}
To see this, let $D_i=\|x_{t_i}^{\psi}-x_{t_i}^T\|$ and $h_i=t_i-t_{i+1}$.
Teacher-flow stability gives
\[
 D_{i+1}\leq e^{L_T h_i}D_i
       +h_i e^{L_T h_i}(\epsilon_c+\epsilon_v).
\]
With $D_0=0$, iteration yields
$D_N\leq(\epsilon_c+\epsilon_v)\sum_i h_i e^{L_T t_i}
\leq e^{L_T}(\epsilon_c+\epsilon_v)$, since $\sum_i h_i=1$.
This proves \eqref{eq:fm-destination-terminal}, conditionally on any
valid grid for which the bounds hold.

\subsection{Error propagation through native rollouts}
\label{app:grid-propagation}

For a fixed grid, define
\begin{equation}
 x_{t_0}^{\psi}=x_{t_0}^T\sim p_1,\quad
 x_{t_{i+1}}^{\psi}=\psi^\theta_{t_i\to t_{i+1}}(x_{t_i}^{\psi}),\quad
 x_{t_{i+1}}^T=\psi^T_{t_i\to t_{i+1}}(x_{t_i}^T).
\label{eq:fm-paired-rollouts}
\end{equation}
Here $x_{t_i}^{\psi}$ and $x_{t_i}^T$ are the random states obtained
by drawing the shared initial state from $p_1$.
Let $\pi^{\psi,\Gcal,\theta}$ be the native output distribution and
$\pi^T=(\psi^T_{1\to0})_\#p_1$. Write $W_2$ for the quadratic
Wasserstein distance.

\begin{appendixtheorem}[Gridwise and distributional deployment]
\label{thm:app-deployment}
Assume Assumption~\ref{ass:app-flow}, and that $v^T$ is globally
$L_T$-Lipschitz on the comparison domain. Suppose the hypotheses of
Theorem~\ref{thm:app-local-error} hold with common constants for all
teacher segments launched from native source states, for $p_1$-almost
every initial state. Then, for $h_i=t_i-t_{i+1}$,
\begin{equation}
 \|x_{t_N}^{\psi}-x_{t_N}^T\|\leq
 \mathcal B(\Gcal):=e^{L_T}\left[\epsilon_c+
       \left(1+\frac M2\sum_i h_i^2\right)\epsilon_v\right]
 \quad\text{almost surely}.
 \label{eq:fm-terminal-error}
\end{equation}
If the output distributions have finite second moments,
$W_2(\pi^{\psi,\Gcal,\theta},\pi^T)\leq\mathcal B(\Gcal)$.
For an independent grid drawn from a deployment-grid distribution
$\gamma_{\mathrm{dep}}$, if the
hypotheses hold conditionally almost everywhere, the mixed native output
distribution has finite second moment, and $\mathbb E_{\Gcal}\mathcal B(\Gcal)^2<\infty$, then
\[
 W_2\!\left(\int\pi^{\psi,\Gcal,\theta}\gamma_{\mathrm{dep}}(d\Gcal),\pi^T\right)
 \leq\left(\mathbb E_{\Gcal}\mathcal B(\Gcal)^2\right)^{1/2}.
\]
\end{appendixtheorem}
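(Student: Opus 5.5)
The plan is a one-step stability recursion. Combine the local source bound of Theorem~\ref{thm:app-local-error} with the stability of the teacher flow, then sum over the grid. Set $D_i=\|x_{t_i}^{\psi}-x_{t_i}^T\|$, so $D_0=0$ because the initial state is shared.

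At step $i$, insert the intermediate point $\psi^T_{t_i\to t_{i+1}}(x_{t_i}^\psi)$ and use the triangle inequality:
\[
 D_{i+1}\leq\bigl\|\psi^\theta_{t_i\to t_{i+1}}(x_{t_i}^\psi)-\psi^T_{t_i\to t_{i+1}}(x_{t_i}^\psi)\bigr\|
 +\bigl\|\psi^T_{t_i\to t_{i+1}}(x_{t_i}^\psi)-\psi^T_{t_i\to t_{i+1}}(x_{t_i}^T)\bigr\|.
\]
\begin{itemize}
\item The first term is a teacher segment launched from the native source state $x_{t_i}^\psi$. By hypothesis, Theorem~\ref{thm:app-local-error} applies with common constants, giving the bound $h_i\epsilon_c+(h_i+Mh_i^2/2)\epsilon_v$.
\item The second term is controlled by Gronwall for the globally $L_T$-Lipschitz teacher velocity. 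Since $\psi^T$ is the flow of $v^T$ (consistent teacher), it is at most $e^{L_Th_i}D_i$.
\end{itemize}
This yields
\[
 D_{i+1}\leq e^{L_Th_i}D_i+h_i\epsilon_c+\bigl(h_i+\tfrac M2h_i^2\bigr)\epsilon_v .
\]

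Unrolling the recursion gives
\[
 D_N\leq\sum_i e^{L_T t_{i+1}}\Bigl[h_i\epsilon_c+\bigl(h_i+\tfrac M2h_i^2\bigr)\epsilon_v\Bigr].
\]
Bound each $e^{L_Tt_{i+1}}$ by $e^{L_T}$, and use $\sum_ih_i=1$. This produces exactly $\mathcal B(\Gcal)$, holding for $p_1$-a.e.\ initial state.

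For the distributional statements I will use couplings.
\begin{itemize}
\item \emph{Fixed grid.} The pair $(x_{t_N}^\psi,x_{t_N}^T)$ is a coupling of $\pi^{\psi,\Gcal,\theta}$ and $\pi^T$, because $x_{t_N}^T=\psi^T_{1\to0}(x_{t_0})$ by the teacher's composition property. Hence $W_2^2\leq\mathbb E D_N^2\leq\mathcal B(\Gcal)^2$.
\item \emph{Random grid.} Draw $\Gcal\sim\gamma_{\mathrm{dep}}$ independently of $x_{t_0}$. The teacher endpoint $\psi^T_{1\to0}(x_{t_0})$ does not depend on $\Gcal$, so its law is $\pi^T$. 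The native endpoint has the mixed law. This joint construction is therefore a coupling of the mixture and $\pi^T$, and conditioning on $\Gcal$ gives $W_2^2\leq\mathbb E_\Gcal\mathcal B(\Gcal)^2$.
\end{itemize}
Measurability of the grid-indexed rollout and the finite-moment assumptions make these expectations well defined.

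The main obstacle is the first step: making sure the hypotheses of Theorem~\ref{thm:app-local-error} genuinely apply along each teacher segment launched from a native state $x_{t_i}^\psi$, uniformly in $i$. The bounds on $c^{\theta,\mathrm{src}}$, $e^\theta$, and the Jacobian must hold along $\psi^T_{t_i\to q}(x_{t_i}^\psi)$, which lies off the teacher's own trajectory from $x_{t_0}$. This is exactly what the "common constants for all teacher segments launched from native source states" hypothesis supplies, so I would state that invocation explicitly. A secondary point is the teacher Lipschitz stability bound $e^{L_Th}$, obtained by standard Gronwall in reversed time.
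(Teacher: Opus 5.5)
Your proposal is correct and follows essentially the same route as the paper's proof. It uses the same add-and-subtract of $\psi^T_{t_i\to t_{i+1}}(x^{\psi}_{t_i})$, the same recursion $D_{i+1}\leq e^{L_Th_i}D_i+h_i\epsilon_c+(h_i+Mh_i^2/2)\epsilon_v$ unrolled using $\sum_i h_i=1$, and the same shared-initial-noise coupling (with the grid drawn independently of the noise) for both Wasserstein bounds.
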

\begin{proof}
For the coupled rollouts \eqref{eq:fm-paired-rollouts}, let
$D_i=\|x_{t_i}^{\psi}-x_{t_i}^T\|$. Add and subtract
$\psi^T_{t_i\to t_{i+1}}(x_{t_i}^{\psi})$ to obtain
\begin{align}
 D_{i+1}
 &\leq
   \|\psi_{t_i\to t_{i+1}}^\theta(x_{t_i}^{\psi})
     -\psi^T_{t_i\to t_{i+1}}(x_{t_i}^{\psi})\| \notag\\
 &\quad+
   \|\psi^T_{t_i\to t_{i+1}}(x_{t_i}^{\psi})
     -\psi^T_{t_i\to t_{i+1}}(x_{t_i}^T)\| \notag\\
 &\leq\beta_i+e^{L_Th_i}D_i,
 \label{eq:app-recursion}
\end{align}
where $\beta_i=h_i\epsilon_c+(h_i+Mh_i^2/2)\epsilon_v$.
Since $D_0=0$ and $\sum_i h_i=1$,
\begin{equation}
 D_N\leq e^{L_T}\sum_i\beta_i
     =e^{L_T}\left[\epsilon_c+
             \left(1+\tfrac M2\sum_i h_i^2\right)\epsilon_v\right]
     =:\mathcal B(\Gcal).
 \label{eq:app-terminal-bound}
\end{equation}
This proves the samplewise deployment bound.

For output distributions with finite second moments, the shared-noise
construction is an admissible coupling. Hence
\begin{equation}
    W_2(\pi^{\psi,\Gcal,\theta},\pi^T)
       \leq\big(\mathbb E[D_N^2\mid \Gcal]\big)^{1/2}
       \leq\mathcal B(\Gcal).
    \label{eq:app-output-wasserstein}
\end{equation}
For a grid sampled independently of the initial noise, the mixed native
output distribution obeys
\begin{equation}
    W_2\!\left(\int\pi^{\psi,\Gcal,\theta}\gamma_{\mathrm{dep}}(d\Gcal),\pi^T\right)
       \leq\left(\int\mathcal B(\Gcal)^2
                               \gamma_{\mathrm{dep}}(d\Gcal)\right)^{1/2}.
    \label{eq:app-random-output-wasserstein}
\end{equation}
The teacher marginal remains $\pi^T$ for every grid because it uses an
exact flow with the same initial distribution. A numerical teacher solver
introduces an additional integration error. The bound
$\sum_i h_i^2\leq1$ yields a grid-independent estimate.

\end{proof}

\begin{appendixcorollary}[Approximate state-marginal matching]
\label{cor:app-occupancy}
\label{cor:fm-occupancy}
Assume Assumption~\ref{ass:app-flow} for the student field and let it
be $L_\theta$-Lipschitz on the comparison domain. Suppose
$\|c^{\theta,\mathrm{src}}\|\leq\epsilon_c$ along all student-velocity
segments launched from native states with their associated destinations,
almost surely. Let $x_{t_i}^{v}=\psi^{v^\theta}_{1\to t_i}(x_{t_0}^{\psi})$ denote the exact student-velocity rollout. Then, under shared initial noise,
\[
 \|x_{t_i}^{\psi}-x_{t_i}^v\|\leq(1-t_i)e^{L_\theta(1-t_i)}\epsilon_c.
\]
When the two state distributions have finite second moments, the same bound holds
for their $W_2$ distance. Here $d_i^{\psi,\theta}(\cdot\mid\Gcal)$ and $d_i^{v,\theta}(\cdot\mid\Gcal)$ denote the native-map and exact-velocity rollout marginals, respectively; they need not equal the direct-map pushforward $d_{t_i}^{\psi,\theta}$. The statements hold conditionally on any
admissible fixed or independently sampled grid.
\end{appendixcorollary}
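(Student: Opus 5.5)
The plan is to repeat the argument of Theorem~\ref{thm:app-deployment}, with the student velocity field $v^\theta$ playing the role of the teacher. Fix a grid and a shared initial state $x_{t_0}^\psi=x_{t_0}^v\sim p_1$. Set $D_i=\|x_{t_i}^\psi-x_{t_i}^v\|$ and $h_i=t_i-t_{i+1}$. Adding and subtracting $\psi^{v^\theta}_{t_i\to t_{i+1}}(x_{t_i}^\psi)$ gives
\[
D_{i+1}\leq\|\psi^\theta_{t_i\to t_{i+1}}(x_{t_i}^\psi)-\psi^{v^\theta}_{t_i\to t_{i+1}}(x_{t_i}^\psi)\|
+\|\psi^{v^\theta}_{t_i\to t_{i+1}}(x_{t_i}^\psi)-\psi^{v^\theta}_{t_i\to t_{i+1}}(x_{t_i}^v)\|.
\]
The second term is bounded by $e^{L_\theta h_i}D_i$ using Lipschitz stability of the exact $v^\theta$-flow (Gronwall).

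For the first term, a one-step local error, I would apply the characteristic identity from the proof of Theorem~\ref{thm:fm-finite-error}, with the teacher replaced by $v^\theta$. Along $x_q=\psi^{v^\theta}_{t_i\to q}(x_{t_i}^\psi)$, the chain rule gives
\[
\frac{d}{dq}\psi^\theta_{q\to t_{i+1}}(x_q)=-c^{\theta,\mathrm{src}}(x_q,t_{i+1},q).
\]
This holds because the velocity along the characteristic is exactly $v^\theta$, so the velocity-error term $e^\theta$ is absent. Integrating from $t_{i+1}$ to $t_i$ and using the identity boundary $\psi^\theta_{t_{i+1}\to t_{i+1}}=\mathrm{Id}$, the local error is at most $h_i\epsilon_c$. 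No Jacobian constant $M$ is needed. The hypothesis that the residual bound holds along student-velocity segments launched from native states, with their associated destinations, is exactly what licenses this step.

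Combining the two terms gives $D_{i+1}\leq e^{L_\theta h_i}D_i+h_i\epsilon_c$ with $D_0=0$. Unrolling,
\[
D_i\leq\epsilon_c\sum_{j<i}h_je^{L_\theta(t_{j+1}-t_i)}\leq(1-t_i)e^{L_\theta(1-t_i)}\epsilon_c,
\]
since $\sum_{j<i}h_j=1-t_i$ and every exponent is at most $L_\theta(1-t_i)$. The main care point is bookkeeping the exponential factors so that only the elapsed time $1-t_i$ appears. A secondary care point is making sure the residual is evaluated with the correct argument order $(x,r,t)$ and destination $t_{i+1}$.

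The shared-noise construction is an admissible coupling of $d_i^{\psi,\theta}(\cdot\mid\Gcal)$ and $d_i^{v,\theta}(\cdot\mid\Gcal)$. So when second moments are finite, $W_2\leq(\mathbb E D_i^2)^{1/2}$, which is bounded by the same almost-sure constant. For an independently sampled grid, the argument is applied conditionally on $\Gcal$, exactly as in Theorem~\ref{thm:app-deployment}.
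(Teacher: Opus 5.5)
Your proposal is correct and follows the paper's proof. The paper likewise reruns the deployment recursion of Theorem~\ref{thm:app-deployment} with $v^T$ replaced by $v^\theta$: the velocity error vanishes, each local error is at most $h_i\epsilon_c$, the exponential factors accumulate only over the elapsed time $1-t_i$, and the $W_2$ bound follows from the same shared-noise coupling. Your write-up just spells out the steps the paper leaves implicit, namely the characteristic identity and the unrolling $\sum_{j<i}h_je^{L_\theta(t_{j+1}-t_i)}$.
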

\begin{proof}
Use $v^T=v^\theta$ in the preceding argument, so the
velocity error is zero and each local error is bounded by
$h_i\epsilon_c$. Up to time $t_i$, the total elapsed time is
$1-t_i$, giving
\begin{equation}
    \|x_{t_i}^{\psi}-x_{t_i}^v\|
       \leq(1-t_i)e^{L_\theta(1-t_i)}\epsilon_c.
\end{equation}
The same coupling gives
\begin{equation}
    W_2\!\left(d_i^{\psi,\theta}(\cdot\mid \Gcal),
                d_i^{v,\theta}(\cdot\mid \Gcal)\right)
       \leq(1-t_i)e^{L_\theta(1-t_i)}\epsilon_c
    \label{eq:app-occupancy-wasserstein}
\end{equation}
when second moments exist. Substituting the paired-state estimate at the same $\theta$ into
\eqref{eq:rk-paired-mismatch} gives a fixed-kernel acquisition bound
with the sampled queries held fixed during optimization.

\end{proof}

\subsection{Population training losses and coverage}
\label{app:population-transfer}
\label{sec:fm-population}

Let $\Gcal\sim\gamma_{\mathrm{dep}}$ be a deployment grid independent of
the initial noise, and let $x_{t_i}^{\psi}$ be the current student's native states.
Write $x_{i,q}=\psi^T_{t_i\to q}(x_{t_i}^{\psi})$. For every bounded measurable test function $f$ on the corresponding
state--time domain, define the two analysis distributions by
\begin{align}
    \int f\,d\nu_v
       &=\mathbb E\sum_i\int_{t_{i+1}}^{t_i}
                         f(x_{i,q},q)\,dq,
       \label{eq:app-nu-v}\\
    \int f\,d\nu_c
       &=\mathbb E\sum_i\int_{t_{i+1}}^{t_i}
                         f(x_{i,q},t_{i+1},q)\,dq.
       \label{eq:app-nu-c}
\end{align}
They have total mass one because $\sum_i h_i=1$.
These analysis measures describe the transport segments used in the proof.

Let $\mu_v$ and $\mu_c$ be the actual \emph{population} distributions of training states and comparison times for the two residuals. They may be induced by frozen student
rollouts and by auxiliary queries. Define
\begin{equation}
    \mathcal L_v=\int w_v\|e^\theta\|^2\,d\mu_v,
    \qquad
    \mathcal L_c=\int\|c^{\theta,\mathrm{src}}\|^2\,d\mu_c.
    \label{eq:app-training-losses}
\end{equation}
The coverage assumption is
\begin{equation}
    \nu_v\ll\mu_v,\quad
    \frac{d\nu_v}{d\mu_v}\leq C_v\quad\mu_v\text{-a.e.},
    \qquad
    \nu_c\ll\mu_c,\quad
    \frac{d\nu_c}{d\mu_c}\leq C_c\quad\mu_c\text{-a.e.}
    \label{eq:app-coverage-ratios}
\end{equation}
These bounds imply $\int f\,d\nu_v\leq C_v\int f\,d\mu_v$ and
$\int f\,d\nu_c\leq C_c\int f\,d\mu_c$ for every nonnegative $f$.
They supply the link from deployment errors to population training losses.

\begin{appendixtheorem}[Covered population-loss transfer]
\label{thm:app-population}
\label{cor:fm-population}
Assume Assumption~\ref{ass:app-flow}; $v^T$ is $L_T$-Lipschitz on
the comparison domain; and $\|J_x v^\theta_{t\to r}\|_{\mathrm{op}}\leq M$
along the teacher segments defining $\nu_c$. Let the population training
measures $\mu_v,\mu_c$ have finite losses
\eqref{eq:app-training-losses}. Assume
\eqref{eq:app-coverage-ratios} with finite $C_v,C_c$ and
$w_v\geq w_{\min}>0$ $\mu_v$-almost everywhere. Then
\[
 \mathbb E\|x_{t_N}^{\psi}-x_{t_N}^T\|^2
 \leq 2e^{2L_T}\left[C_c\mathcal L_c+
              \frac{(1+M)^2C_v}{w_{\min}}\mathcal L_v\right].
\]
The expectation includes an independent random deployment grid when one
is used. If both output distributions have finite second moments, the same
right-hand side bounds $W_2^2(\pi^{\psi,\theta},\pi^T)$, where
$\pi^{\psi,\theta}=\int\pi^{\psi,\Gcal,\theta}\gamma_{\mathrm{dep}}(d\Gcal)$.
\end{appendixtheorem}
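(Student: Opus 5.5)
The plan is to reuse the samplewise recursion from the proof of Theorem~\ref{thm:app-deployment}, but with the uniform constants $\epsilon_c,\epsilon_v$ replaced by integrals of the residuals along each teacher segment. The squared terminal error will then be bounded by integrals against the analysis measures $\nu_c,\nu_v$. The coverage ratios \eqref{eq:app-coverage-ratios} convert these into the population losses.

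\emph{Local step.} Fix a grid and the shared initial state, and consider segment $i$ launched from $x_{t_i}^{\psi}$. By the map-error identity \eqref{eq:app-map-error-identity} with $x_q=x_{i,q}$, and $R^{T,\theta}=c^{\theta,\mathrm{src}}+(J_x\psi^\theta_{q\to t_{i+1}})e^\theta$ from \eqref{eq:app-teacher-residual}, we get
\[
\beta_i:=\|\psi^\theta_{t_i\to t_{i+1}}(x_{t_i}^{\psi})-\psi^T_{t_i\to t_{i+1}}(x_{t_i}^{\psi})\|
\leq\int_{t_{i+1}}^{t_i}\bigl[\|c^{\theta,\mathrm{src}}(x_{i,q},t_{i+1},q)\|+(1+M)\|e^\theta(x_{i,q},q)\|\bigr]dq .
\]
Here $J_x\psi^\theta_{q\to r}=\Id-(q-r)J_xv^\theta_{q\to r}$ and $q-r\leq1$, so its operator norm is at most $1+M$.

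\emph{Composition.} The recursion \eqref{eq:app-recursion}, with teacher-flow stability $e^{L_Th_i}$, gives $D_N\leq e^{L_T}\sum_i\beta_i$. Summing the segment bounds yields $D_N\leq e^{L_T}\bigl(\sum_i\int\|c\|\,dq+(1+M)\sum_i\int\|e^\theta\|\,dq\bigr)$. Squaring and using $(a+b)^2\leq2a^2+2b^2$ gives
\[
D_N^2\leq 2e^{2L_T}\Bigl[\Bigl(\textstyle\sum_i\int\|c\|dq\Bigr)^2+(1+M)^2\Bigl(\sum_i\int\|e^\theta\|dq\Bigr)^2\Bigr].
\]
The total length of the segments is $\sum_ih_i=1$, so Cauchy--Schwarz (Jensen for the unit-mass measure $\sum_i\mathbf 1_{[t_{i+1},t_i]}dq$) bounds each squared sum by the corresponding sum of integrals of squares.

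\emph{Expectation and coverage.} Taking expectations over the initial noise and the independent deployment grid turns the two terms, by the definitions \eqref{eq:app-nu-v}--\eqref{eq:app-nu-c}, into $\int\|c^{\theta,\mathrm{src}}\|^2d\nu_c$ and $\int\|e^\theta\|^2d\nu_v$. The coverage bounds then give $\int\|c\|^2d\nu_c\leq C_c\mathcal L_c$. Since $w_v\geq w_{\min}$ $\mu_v$-a.e., we also get $\int\|e^\theta\|^2d\nu_v\leq C_v\int\|e^\theta\|^2d\mu_v\leq (C_v/w_{\min})\mathcal L_v$. This is exactly the claimed bound. For the $W_2^2$ statement, the shared-noise (and shared-grid) construction is an admissible coupling of the mixed outputs, so $W_2^2\leq\mathbb E D_N^2$.

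\emph{Main obstacle.} The delicate step is measurability and integrability. The pointwise path bound must be justified without the uniform constants; I would apply \eqref{eq:app-map-error-identity} pathwise under Assumption~\ref{ass:fm-regularity}. Tonelli is needed to swap the expectation, the sum and the $dq$ integrals of nonnegative functions. One must also check that the residuals are evaluated at exactly the arguments $(x_{i,q},t_{i+1},q)$ and $(x_{i,q},q)$ appearing in $\nu_c,\nu_v$. If the losses are infinite, the bound is trivial.
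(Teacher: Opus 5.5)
Your proposal is correct and follows essentially the same route as the paper's proof. You obtain the pathwise bound $D_N\leq e^{L_T}\sum_i\int_{t_{i+1}}^{t_i}\bigl[\|c^{\theta,\mathrm{src}}\|+(1+M)\|e^\theta\|\bigr]dq$ from the map-error identity and the grid recursion. You then apply $(a+b)^2\leq 2a^2+2b^2$ and Cauchy--Schwarz over segments of total length one to reach $\nu_c,\nu_v$, use the coverage ratios together with $w_v\geq w_{\min}$ to pass to $\mathcal L_c,\mathcal L_v$, and get the $W_2^2$ bound from the shared-noise coupling. Your remarks on Tonelli and on extending the coverage inequality to unbounded nonnegative integrands make explicit details the paper leaves implicit.
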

\begin{proof}
Let $D_N=\|x_{t_N}^{\psi}-x_{t_N}^T\|$ and set
$r_i(q)=\|c^{\theta,\mathrm{src}}(x_{i,q},t_{i+1},q)\|+(1+M)\|e^\theta(x_{i,q},q)\|$.
The local error identity and the grid recursion imply
\begin{equation}
    D_N\leq e^{L_T}\sum_i\int_{t_{i+1}}^{t_i}r_i(q)\,dq.
    \label{eq:app-integrated-terminal-error}
\end{equation}
The total integration length is one. Cauchy--Schwarz and
$(a+b)^2\leq2a^2+2b^2$ therefore give
\begin{equation}
    \mathbb E D_N^2
      \leq2e^{2L_T}
          \left[\int\|c^{\theta,\mathrm{src}}\|^2\,d\nu_c
                 +(1+M)^2\int\|e^\theta\|^2\,d\nu_v\right].
    \label{eq:app-expected-error-intermediate}
\end{equation}
If \eqref{eq:app-coverage-ratios} holds and
$w_v\geq w_{\min}>0$ on the relevant training support, then
\begin{equation}
    \int\|c^{\theta,\mathrm{src}}\|^2\,d\nu_c\leq C_c\mathcal L_c,
    \qquad
    \int\|e^\theta\|^2\,d\nu_v
       \leq\frac{C_v}{w_{\min}}\mathcal L_v.
\end{equation}
Combining these inequalities proves the expected-output-error bound.
With finite second moments,
\begin{equation}
    W_2^2(\pi^{\psi,\theta},\pi^T)
      \leq\mathbb E D_N^2
      \leq2e^{2L_T}
        \left[C_c\mathcal L_c+
        \frac{(1+M)^2C_v}{w_{\min}}\mathcal L_v\right].
    \label{eq:app-population-wasserstein}
\end{equation}
\end{proof}
The constants quantify both off-trajectory coverage and mismatch between
current deployment and a frozen acquisition distribution. A finite
empirical buffer or randomized times alone do not establish bounded
ratios. If a schedule omits endpoint intervals, the omitted transport
requires an additional error term or a separate boundary guarantee.

\section{Supervision Kernels and Model Compatibility}\label{app:kernels}
\subsection{Schedule-compatible local velocity kernels}
Let $X_0$ be a data sample and $\xi\sim\mathcal N(0,\Id)$ independent
noise. For differentiable schedule coefficients $\alpha_t,\beta_t$,
the affine path is $X_t=\alpha_tX_0+\beta_t\xi$, with density $p_t$. Its exact velocity
is $v(x,t)=\mathbb E[\dot\alpha_tX_0+\dot\beta_t\xi\mid X_t=x]$.
Writing $D_t=\alpha_t\dot\beta_t-\dot\alpha_t\beta_t$, elimination
of the conditional data prediction gives
$\alpha_tv-\dot\alpha_tx=D_t\mathbb E[\xi\mid X_t=x]$.
Since the score is $s_v(x,t)=-\mathbb E[\xi\mid X_t=x]/\beta_t$,
for $\beta_tD_t\neq0$ we obtain
\begin{equation}
 s_v(x,t)=\frac{\dot\alpha_t x-\alpha_t v(x,t)}{\beta_tD_t}.\end{equation}
For a velocity prediction $w=w(x,t)$, substitute this score into the
reverse SDE in Eq.~\eqref{eq:background-sde}. An Euler step of length
$\delta>0$ and diffusion coefficient $g_t>0$ has mean and covariance
\begin{equation}
 m_w=x-\delta\left[w+\frac{g_t^2}{2\beta_tD_t}(\alpha_t w-\dot\alpha_t x)\right],
 \qquad \Sigma_t=g_t^2\delta\Id.
\end{equation}
Define $K_w^\delta(\cdot\mid x,t)=\mathcal N(m_w,\Sigma_t)$.
For student and teacher at the same $(x,t)$ with shared covariance,
the mean difference is
$-\delta(1+g_t^2\alpha_t/(2\beta_tD_t))(v^\theta-v^T)$.
The Gaussian KL formula therefore gives
\begin{equation}
 \KL(K_{v^\theta}^\delta\|K_{v^T}^\delta)
 =\frac{\delta}{2g_t^2}\left(1+\frac{g_t^2\alpha_t}{2\beta_tD_t}\right)^2
                 \norm{v^\theta-v^T}^2.
 \label{eq:generalkl}
\end{equation}
For rectified flow, $\alpha_t=1-t$, $\beta_t=t$, and $D_t=1$, giving
\begin{equation}
 m_w=x-\delta\left[w+\frac{g_t^2}{2t}\{x+(1-t)w\}\right],\qquad
 w_{\mathrm{KL}}(t,\delta)
 =\frac{\delta}{2g_t^2}\left(1+\frac{g_t^2(1-t)}{2t}\right)^2.
 \label{eq:rectifiedkl}
\end{equation}
For TrigFlow with data scale $\sigma_d>0$, use $\alpha_t=\cos(\pi t/2)$, $\beta_t=\sigma_d\sin(\pi t/2)$, and $D_t=(\pi/2)\sigma_d$. The Gaussian KL is exact for the displayed kernels. Marginal preservation additionally requires compatible dynamics and exact integration. Kernel noise and weighting can be chosen separately from rollout noise.

\subsection{Anchored flow-map kernels}
Instantiate the local-anchor construction of Flow-Map GRPO \citep{li2026flowmapgrpo}. Under our reverse-time convention, let $0<\varepsilon<r<t\leq1$ and $\lambda>0$.
The sampling rule in Eq.~\eqref{eq:finite-kernel-main} is
\begin{equation}
 \widetilde x_r^\theta
 =\psi^\theta_{t\to r}(x)-\varepsilon\lambda^2\left[\frac{\psi^\theta_{t\to r}(x)}{1-r}+v^\theta(\psi^\theta_{t\to r}(x),r)\right]
      +\lambda\sqrt{\frac{2\varepsilon r}{1-r}}\,\xi,
 \qquad\xi\sim\mathcal N(0,\Id).
 \label{eq:anchored}
\end{equation}
The teacher uses the same rule with $\theta$ replaced by $T$.
Thus $K^\theta$ and $K^T$ have respective means $\mu^\theta,\mu^T$
and shared variance $\nu_r^2=2\varepsilon\lambda^2r/(1-r)$.
Subtracting their means and applying the Gaussian KL formula gives
\begin{equation}
 \KL(K^\theta\|K^T)
 =\frac{1-r}{4\varepsilon\lambda^2r}\norm{\mu^\theta-\mu^T}^2
 =\frac{1-r}{4\varepsilon\lambda^2r}
 \norm{(1-\varepsilon\lambda^2/(1-r))\Delta\psi
                      -\varepsilon\lambda^2\Delta v}^2,
 \label{eq:finitekl}
\end{equation}
where $\Delta\psi=\psi^\theta_{t\to r}(x)-\psi^T_{t\to r}(x)$ and $\Delta v=v^\theta(\psi^\theta_{t\to r}(x),r)-v^T(\psi^T_{t\to r}(x),r)$. Each velocity correction is evaluated at its model's predicted destination.
For exact compatible dynamics, the rectified-flow score satisfies
$s_v(x,r)=-[x+(1-r)v(x,r)]/r$. The correction is therefore a
Langevin step with drift $\lambda^2r\,s_v/(1-r)$ and diffusion
coefficient $\lambda\sqrt{2r/(1-r)}$, at fixed physical time $r$.
Its density equation preserves $p_r$ because
$-\nabla_x\cdot(p_rs_v)+\Delta_xp_r=0$.
Thus exact integration preserves the boundary marginal; the displayed
Gaussian is its Euler approximation over anchor length $\varepsilon$.

The corrected means also control map error when the teacher velocity
is $L_T$-Lipschitz and $1-\varepsilon\lambda^2/(1-r)-\varepsilon\lambda^2L_T>0$:
\begin{equation}
\begin{split}
 &\left(1-\frac{\varepsilon\lambda^2}{1-r}-\varepsilon\lambda^2L_T\right)
       \|\Delta\psi\|\\
 &\quad\leq\|\mu^\theta-\mu^T\|
 +\varepsilon\lambda^2
 \|v^\theta(\psi^\theta_{t\to r}(x),r)-v^T(\psi^\theta_{t\to r}(x),r)\|.
\end{split}
 \label{eq:anchoredident}
\end{equation}
To obtain this bound, split $\Delta v$ into the student--teacher
velocity difference at $\psi^\theta_{t\to r}(x)$ and the teacher-velocity
difference between the two predicted destinations. The latter is bounded
by $L_T\|\Delta\psi\|$. Apply the reverse triangle inequality to
$\mu^\theta-\mu^T=(1-\varepsilon\lambda^2/(1-r))\Delta\psi-\varepsilon\lambda^2\Delta v$.

For the deterministic regression in Eq.~\eqref{eq:map-reg-main}, the scaled limit is
$\nu_r^2\KL(K^\theta\|K^T)\to\norm{\psi^\theta_{t\to r}-\psi^T_{t\to r}}^2/2$ as $\lambda\to0$, provided the fields are finite. At zero noise, unequal point masses have infinite KL, so this regression uses the scaled limit.

\subsection{Induced source and destination supervision}
\begin{appendixproposition}[Consistency from induced teacher supervision]
\label{prop:app-induced-consistency}
Assume Assumption~\ref{ass:app-flow}, with $v^\theta_{t\to r}$ and its first
derivatives continuous up to the diagonal. On the continuous comparison
domain, suppose either
\[
 V^{\theta,\mathrm{src}}(x,r,t)=v^T(x,t)
 \quad\text{or}\quad
 V^{\theta,\mathrm{dst}}(x,r,t)
       =v^T(\psi^\theta_{t\to r}(x),r)
\]
holds for every admissible $(x,r,t)$, including limits $r\uparrow t$.
Then $v^\theta=v^T$, $\psi^\theta_{t\to r}=\psi^T_{t\to r}$, and both
$c^{\theta,\mathrm{src}}$ and $c^{\theta,\mathrm{dst}}$ vanish.
The same conclusion follows from either zero weighted squared
population loss when its query distribution has full support on this
domain and its weight is strictly positive almost everywhere.
\end{appendixproposition}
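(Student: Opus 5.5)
The plan is to handle the two hypotheses separately, using the same three-step pattern in each case. First, take the diagonal limit $r\uparrow t$ to extract $v^\theta=v^T$. Second, use this to show that one consistency residual vanishes, so that Theorem~\ref{thm:fm-exact} applies. Third, show that the other residual vanishes because $\psi^\theta$ is then an exact flow. Throughout, I work in the average-velocity representation of Assumption~\ref{ass:app-flow}, with $h=t-r$. I use continuity of $v^\theta_{t\to r}$ and its first derivatives up to the diagonal to pass to limits.

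\emph{Source case.} Assume $V^{\theta,\mathrm{src}}(x,r,t)=v^T(x,t)$. As $r\uparrow t$, the bracket $\partial_tv^\theta_{t\to r}+(J_xv^\theta_{t\to r})v^\theta$ stays bounded by continuity, so $h$ times it tends to zero. Since $v^\theta_{t\to r}\to v^\theta_{t\to t}=v^\theta(x,t)$, we get $V^{\theta,\mathrm{src}}\to v^\theta(x,t)$, and therefore $v^\theta=v^T$ on the domain. Comparing Eq.~\eqref{eq:indsrc} with the MeanFlow form of $c^{\theta,\mathrm{src}}$ gives $c^{\theta,\mathrm{src}}=V^{\theta,\mathrm{src}}-v^\theta=v^T-v^T=0$. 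Theorem~\ref{thm:fm-exact}, applied with $v=v^\theta$, then gives $\psi^\theta=\psi^{v^\theta}=\psi^{v^T}=\psi^T$; the last equality uses the consistent-teacher assumption. Since $\psi^\theta$ is now the exact ODE flow of $v^\theta$, the destination identity in \eqref{eq:fm-consistency-relations} holds for it, so $c^{\theta,\mathrm{dst}}=0$.

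\emph{Destination case.} Assume $\partial_r\psi^\theta_{t\to r}(x)=v^T(\psi^\theta_{t\to r}(x),r)$ and $\psi^\theta_{t\to t}(x)=x$. The curve $r\mapsto\psi^\theta_{t\to r}(x)$ therefore solves the teacher ODE with the same initial condition, and uniqueness under Assumption~\ref{ass:fm-regularity} gives $\psi^\theta=\psi^{v^T}=\psi^T$. To identify the velocity, I use \eqref{eq:app-map-derivatives}: $\partial_r\psi^\theta=v^\theta_{t\to r}-h\,\partial_rv^\theta_{t\to r}$, which tends to $v^\theta(x,t)$ as $r\uparrow t$. The right-hand side $v^T(\psi^\theta_{t\to r}(x),r)$ tends to $v^T(x,t)$, so $v^\theta=v^T$. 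Then $c^{\theta,\mathrm{dst}}=V^{\theta,\mathrm{dst}}-v^\theta(\psi^\theta,r)=v^T(\psi^\theta,r)-v^T(\psi^\theta,r)=0$. Because $\psi^\theta=\psi^{v^\theta}$ is an exact flow, the source characteristic identity holds, and hence $c^{\theta,\mathrm{src}}=0$.

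\emph{Population version and main obstacle.} For the population statement, I would repeat the argument of Corollary~\ref{cor:app-zero-loss}. A zero weighted squared loss with an almost-everywhere positive weight makes the induced mismatch vanish almost everywhere. Continuity of the mismatch and full support of the query distribution upgrade this to a pointwise statement on the domain, including boundary limits by continuous extension, which reduces to the pointwise case above. I expect the main obstacle to be the diagonal limit itself. I must check that $h\,\partial_tv^\theta_{t\to r}$, $h\,\partial_rv^\theta_{t\to r}$ and $h\,(J_xv^\theta_{t\to r})v^\theta$ genuinely vanish as $r\uparrow t$, which needs local boundedness of the derivatives near the diagonal. I would obtain this from the assumed continuity up to the diagonal, on compact neighborhoods of $(x,t,t)$.
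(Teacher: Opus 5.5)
Your proposal is correct and follows essentially the same route as the paper. The paper also takes the diagonal limit $r\uparrow t$ to get $v^\theta=v^T$, then identifies $\psi^\theta=\psi^T$ via Theorem~\ref{thm:fm-exact} in the source case and via uniqueness of the teacher ODE in the destination case, kills the remaining residual with the exact-flow identities, and reduces the population version to the pointwise one by the argument of Corollary~\ref{cor:app-zero-loss}. The differences are cosmetic: you make explicit the consistent-teacher step $\psi^{v^T}=\psi^T$ and the direct substitution giving $c^{\theta,\mathrm{dst}}=0$ in the destination case, whereas the paper folds both residuals into a single exact-flow remark.
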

\begin{proof}
Use the predictors in Eqs.~\eqref{eq:indsrc}--\eqref{eq:inddst}
and write $h=t-r$. For source supervision, take $r\uparrow t$. The derivative correction
$h(\partial_tv^\theta_{t\to r}+(J_xv^\theta_{t\to r})v^\theta)$ vanishes, giving
$v^\theta(x,t)=v^\theta_{t\to t}(x)=v^T(x,t)$. The identity
$V^{\theta,\mathrm{src}}-v^T
=c^{\theta,\mathrm{src}}+(v^\theta-v^T)$ then gives
$c^{\theta,\mathrm{src}}=0$. Theorem~\ref{thm:fm-exact}
implies $\psi^\theta=\psi^{v^\theta}=\psi^T$.

For destination supervision, fix $(x,t)$. The assumed equality gives
$\partial_r\psi^\theta_{t\to r}(x)=v^T(\psi^\theta_{t\to r}(x),r)$
with $\psi^\theta_{t\to t}(x)=x$. Uniqueness of the teacher ODE
therefore yields $\psi^\theta_{t\to r}(x)=\psi^T_{t\to r}(x)$. Moreover,
$\partial_r\psi^\theta=v^\theta_{t\to r}-h\partial_rv^\theta_{t\to r}$ tends to
$v^\theta(x,t)$ as $r\uparrow t$, whereas the teacher side tends to
$v^T(x,t)$. Thus $v^\theta=v^T$ also holds.

In either case, the map is the exact flow of the common velocity.
Its source and destination flow identities give both zero consistency
residuals. For population losses, continuity and full support convert
zero weighted squared loss into pointwise matching by the argument in
Corollary~\ref{cor:app-zero-loss}, after which the preceding proof applies.
\end{proof}

For two-time MeanFlow, the induced source target $V^{\theta,\mathrm{src}}$ is defined in \eqref{eq:indsrc}. With $e^\theta=v^\theta-v^T$ as in Appendix~\ref{app:local-error},
its teacher mismatch is $c^{\theta,\mathrm{src}}+e^\theta$, so
\begin{equation}
 \norm{V^{\theta,\mathrm{src}}-v^T}^2
 \leq2\norm{c^{\theta,\mathrm{src}}}^2+2\norm{e^\theta}^2.
\end{equation}
Thus separate velocity and consistency penalties control the induced-source residual; finite-error cancellation can occur when only their sum is matched. A teacher-directed source variant uses $v^T$ in the JVP direction. Its residual is exactly $R^{T,\theta}$ from \eqref{eq:app-teacher-residual}; it is a direct Eulerian map-distillation control grounded in Flow Map Matching \citep{boffi2024fmm}. For destination supervision, compare
$\partial_r\psi^\theta_{t\to r}(x)$ with $v^T(\psi^\theta_{t\to r}(x),r)$. The comparison retains the source and interval and queries the teacher at
$\psi^\theta_{t\to r}(x)$. Appendix~\ref{app:stopped-jvp} specifies the gradient conventions.

\subsection{Fixed-endpoint consistency models}\label{app:endpoint-kernels}
A native CM model exposes $G^\theta(x,t)$ with a fixed destination, rather than a learned two-time $\psi^\theta_{t\to r}$. Its source residual with a compatible teacher velocity is
\begin{equation}
 C^{\theta,\mathrm{CM}}(x,t)
 =\partial_tG^\theta(x,t)+(J_x G^\theta(x,t))v^T(x,t).
\end{equation}
With $G^\theta=\psi^\theta_{t\to0}$, this sign convention gives $C^{\theta,\mathrm{CM}}=-c^{\mathrm{src}}$ for the pair $(G^\theta,v^T)$.
If $G^\theta(x,0)=x$ and this residual vanishes on the required domain, integration along a teacher characteristic yields $G^\theta(x,t)=\psi^T_{t\to0}(x)$. More generally, with $x_q=\psi^T_{t\to q}(x)$,
\begin{equation}
 \norm{G^\theta(x,t)-\psi^T_{t\to0}(x)}
 \leq\int_0^t\norm{C^{\theta,\mathrm{CM}}(x_q,q)}dq.
\end{equation}
For $t>0$, $v^\theta_{t\to0}(x)=(x-G^\theta(x,t))/t$ exposes the induced teacher-guided source predictor
$v^\theta_{t\to0}+t(\partial_t v^\theta_{t\to0}+(J_x v^\theta_{t\to0})v^T)=v^T-C^{\theta,\mathrm{CM}}$.
This construction supervises a fixed-endpoint CM in velocity space using
the teacher velocity as the closure field.

Native endpoint prediction plus re-noising uses
\begin{equation}
 x_r=\alpha_rG^\theta(x_t,t)+\beta_r\xi,
 \qquad 0<r<t.
\end{equation}
An endpoint-anchor comparison with $\lambda>0$ and $\beta_r>0$ uses
\begin{equation}
 K^{\theta,\mathrm{CM}}(\cdot\mid x,t,r)
 =\mathcal N(\alpha_rG^\theta(x,t),\lambda^2\beta_r^2\Id),\qquad
 \KL(K^{\theta,\mathrm{CM}}\|K^{T,\mathrm{CM}})
 =\frac{\alpha_r^2}{2\lambda^2\beta_r^2}\norm{G^\theta-G^T}^2.
\end{equation}
Here $K^{T,\mathrm{CM}}$ replaces the student endpoint prediction by
$G^T$ with the same schedule and variance. Equal endpoint maps therefore
give equal kernels. The final data step uses deterministic regression.

\begin{table}[H]
\centering\small
\caption{Supervision choices supported by each native parameterization.}
\label{tab:compatibility}
\begin{tabularx}{\textwidth}{@{}lYYY@{}}
\toprule
Native family & Source interface & Destination interface & Flow-map interface\\
\midrule
MeanFlow, two-time map & Self-induced or teacher-directed source predictor; instantaneous-velocity supervision with consistency is also available & Learned destination derivative at the student proposal & Direct flow-map regression or local-anchor Gaussian comparison\\
CM, endpoint plus re-noising & Endpoint source tangent with declared teacher-velocity closure & Unavailable natively: no learned destination-time derivative & Endpoint-anchor Gaussian comparison or endpoint regression\\
\bottomrule
\end{tabularx}
\end{table}

\section{Gradient Implementation}\label{app:implementation}
\paragraph{Exact and stopped-JVP updates.}\label{app:stopped-jvp}
For MeanFlow input order $(x,r,t)$, the source JVP is
\begin{equation}
 (v^\theta_{t\to r},D^\theta)
 =\operatorname{JVP}\bigl((x,r,t)\mapsto v^\theta_{t\to r}(x);
 (x,r,t);(v^\theta(x,t),0,1)\bigr).
\end{equation}
Here $D^\theta=\partial_t v^\theta_{t\to r}+(J_x v^\theta_{t\to r})v^\theta(x,t)$, with $h=t-r$. The JVP evaluates the required directional derivative at the sampled state and times. Exact minimization of $\norm{v^\theta_{t\to r}+h D^\theta-v^\theta(x,t)}^2$ differentiates all student occurrences. Equation~\eqref{eq:stoppedcons} has the same residual value but a different backward rule. Our primary implementation detaches the entire JVP output, including its dependence on the direction; the explicit average-velocity and diagonal-velocity predictions remain differentiable. Teacher parameters remain frozen; an exact destination-residual gradient retains the teacher-input derivative.

For T2I coordinates $(x,t,h)$ with $h=t-r$, the same source JVP
uses direction $(v^\theta,1,1)$, since differentiating at fixed $r$
also differentiates $h$; the forward residual is unchanged.
For induced-destination supervision, the reported implementation stops
the teacher target, including its input derivative, whereas the
mathematical objective permits differentiation through that input.

If a required operator lacks JVP support, a one-sided fallback approximates
\begin{equation}
 \partial_t v^\theta_{t\to r}(x)+(J_x v^\theta_{t\to r}(x))b
 \approx\frac{v^\theta_{t+\eta\to r}(x+\eta b)-v^\theta_{t\to r}(x)}{\eta},
\end{equation}
with fixed direction $b$ and a valid signed $\eta$. Its discretization error is additional to the theoretical residual.

\section{Additional ImageNet Results}\label{app:imagenet-results}\label{app:imagenet-protocol}

\paragraph{Teacher and student.}
Both models use the MeanFlow parameterization. We use three pre-provided XL/2
teachers adapted to separate rewards: maximum mean discrepancy rewards distribution matching (MMD);
classifier reward maximizes the target-class log-probability under a frozen
ConvNeXt-Base (Classifier); and the reward maximizes cosine similarity
to the mean class feature computed from 50 real training images using frozen
DINOv2-small features (DINO).
Each setting tests transfer from a separately adapted teacher.
We study asymmetric distillation from XL/2 teachers to B/2 students,
all initialized from the same pretrained B/2 model. All 36 configurations are compared at epoch 300.

\paragraph{Supervision comparison.}
Each reward setting compares the same twelve B/2 configurations at epoch 300:
velocity supervision
with $\lambda_c\in\{0,0.01,0.1,1,10\}$, stochastic flow-map supervision
with $\lambda\in\{0.1,0.2,0.4,0.8\}$, deterministic map regression,
and source- and destination-induced velocity supervision.
Each epoch collects 256 trajectories and takes
four updates of effective batch size 256. We use AdamW with learning
rate $10^{-4}$, weight decay $10^{-4}$, gradient clipping at 1,
and fresh rank-64 LoRA adapters with scaling 128.
For off-diagonal queries, the destination is the native next time with
probability $0.5$, otherwise uniform in $(0,t)$. Following MeanFlow, the induced-source
recipe uses $75\%$ diagonal queries; both induced recipes use adaptive
weighting and stopped targets. Consistency uses the stopped-JVP update.
Stochastic-map comparison uses $\varepsilon=0.01$ and its kernel-KL
weight, with deterministic comparison when $r\leq\varepsilon$.
Squared errors are averaged over latent coordinates. Instantaneous velocity uses $w_v=1$; deterministic map regression retains the $h^2/2$ displacement scaling. Induced losses use unit outer weighting and per-sample adaptive normalization $\ell/\sg(\ell+10^{-6})$. The stochastic-map velocity correction is detached.

\paragraph{Evaluation.}
We evaluate 5,000 generated images across ImageNet classes 0--99, with
50 images per class. FID, precision and recall are computed against
2,000 real test images; a separate set of 3,000 real images is used
for validation.
All student main results use epoch 300. Four-step evaluation uses
the native uniform grid $[1,0.75,0.5,0.25,0]$. To test whether distillation gains persist
when the number of sampling steps changes, we also evaluate five-step
generation using $\operatorname{linspace}(1,0.05,5)$ followed by zero. Table~\ref{tab:imagenet-step-comparison} compares four- and five-step results;
Figure~\ref{fig:imagenet-three-all} shows the training curves. Following the time discretization in Flow-Map GRPO~\citep{li2026flowmapgrpo}, we also evaluate the same frozen checkpoints at 4, 5, 8 and 16 steps using $\operatorname{linspace}(1,0.05,N)$ followed by zero.
\begin{figure}[H]\centering
\includegraphics[width=\textwidth]{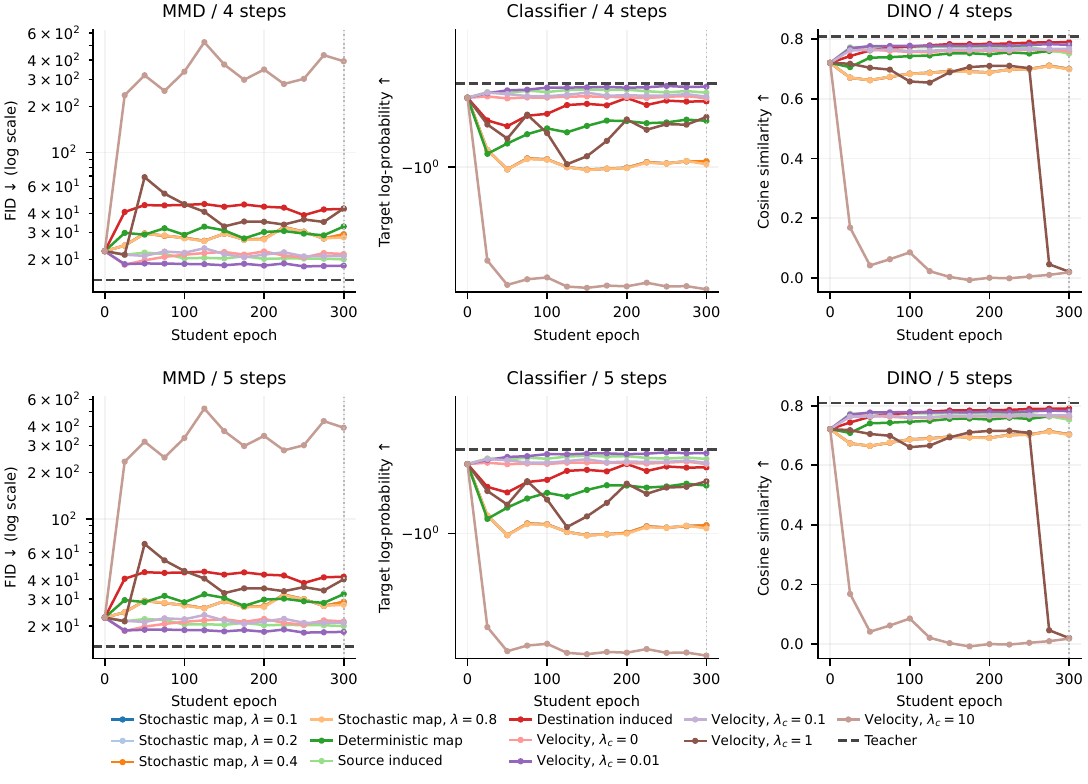}
\caption{All twelve supervision configurations for each of the three teacher
rewards, under four-step (top) and five-step (bottom) generation. Markers
are recorded evaluations, dashed horizontal lines are the frozen teachers,
and the dotted vertical line marks the common epoch-300 comparison.
MMD FID uses a logarithmic scale; classifier log-probability uses a symmetric
logarithmic scale with a linear region between $-1$ and $1$, retaining
unstable configurations. Curves end at the last available evaluation.}
\label{fig:imagenet-three-all}
\end{figure}
\begin{table}[H]\centering\small
\caption{Four- and five-step reward-transfer results at epoch 300. Each reward setting reports its corresponding metric. Best student results in each column are bold.}
\label{tab:imagenet-step-comparison}
\setlength{\tabcolsep}{7pt}
\begin{tabular}{@{}lrrrrrr@{}}\toprule
 & \multicolumn{2}{c}{MMD: FID $\downarrow$} & \multicolumn{2}{c}{Classifier: $\log p_y\uparrow$} & \multicolumn{2}{c}{DINO: Cosine $\uparrow$}\\
\cmidrule(lr){2-3}\cmidrule(lr){4-5}\cmidrule(lr){6-7}
Supervision & 4 steps & 5 steps & 4 steps & 5 steps & 4 steps & 5 steps\\\midrule
B/2 base & 22.68 & 22.72 & -0.5448 & -0.5451 & 0.7207 & 0.7215\\
XL/2 teacher & 14.68 & 14.73 & -0.4526 & -0.4500 & 0.8084 & 0.8096\\
\midrule
Stochastic map, $\lambda=0.1$ & 29.15 & 28.70 & -0.9623 & -0.9474 & 0.7004 & 0.7040\\
Stochastic map, $\lambda=0.2$ & 29.15 & 28.70 & -0.9621 & -0.9472 & 0.7003 & 0.7040\\
Stochastic map, $\lambda=0.4$ & 29.15 & 28.70 & -0.9619 & -0.9469 & 0.7001 & 0.7037\\
Stochastic map, $\lambda=0.8$ & 27.89 & 27.52 & -0.9813 & -0.9670 & 0.6989 & 0.7023\\
Deterministic map & 32.84 & 32.31 & -0.6956 & -0.6860 & 0.7589 & 0.7624\\
Source induced & 20.04 & 19.99 & -0.5118 & -0.5113 & 0.7519 & 0.7531\\
Destination induced & 42.74 & 41.92 & -0.5682 & -0.5671 & \textbf{0.7901} & \textbf{0.7904}\\
Velocity, $\lambda_c=0$ & 21.72 & 21.44 & -0.5484 & -0.5453 & 0.7601 & 0.7625\\
Velocity, $\lambda_c=0.01$ & \textbf{18.23} & \textbf{18.33} & \textbf{-0.4720} & \textbf{-0.4737} & 0.7811 & 0.7817\\
Velocity, $\lambda_c=0.1$ & 21.05 & 21.01 & -0.5378 & -0.5371 & 0.7679 & 0.7684\\
Velocity, $\lambda_c=1$ & 43.07 & 40.33 & -0.6710 & -0.6582 & 0.0202 & 0.0202\\
Velocity, $\lambda_c=10$ & 392.78 & 392.19 & -7.7761 & -7.7774 & 0.0183 & 0.0183\\
\bottomrule\end{tabular}\end{table}

\begin{table}[H]\centering\small
\caption{Fixed epoch-300 checkpoints under the fixed-tail deployment sweep. Lower test FID is better; best student results are bold.}
\begin{tabular}{lrrrrr}\toprule
Supervision & Epoch & 4 steps & 5 steps & 8 steps & 16 steps\\\midrule
B/2 base & 0 & 22.48 & 22.72 & 22.88 & 22.87\\\midrule
Velocity, $\lambda_c=0$ & 300 & 23.58 & 21.44 & 19.29 & 19.17\\
Velocity, $\lambda_c=0.01$ & 300 & \textbf{18.14} & \textbf{18.33} & \textbf{18.33} & \textbf{18.56}\\
Velocity, $\lambda_c=0.1$ & 300 & 21.59 & 21.01 & 20.42 & 20.41\\
Velocity, $\lambda_c=1$ & 300 & 41.40 & 40.33 & 37.19 & 34.74\\
Velocity, $\lambda_c=10$ & 300 & 395.48 & 392.19 & 388.29 & 386.52\\
Stochastic map, $\lambda=0.1$ & 300 & 32.79 & 28.70 & 25.63 & 24.87\\
Stochastic map, $\lambda=0.2$ & 300 & 32.79 & 28.70 & 25.63 & 24.87\\
Stochastic map, $\lambda=0.4$ & 300 & 32.79 & 28.70 & 25.63 & 24.87\\
Stochastic map, $\lambda=0.8$ & 300 & 31.40 & 27.52 & 24.63 & 23.88\\
Deterministic map & 300 & 36.92 & 32.31 & 28.35 & 27.53\\
Source-induced velocity & 300 & 20.42 & 19.99 & 19.77 & 19.71\\
Destination-induced velocity & 300 & 46.25 & 41.92 & 36.13 & 33.65\\
\bottomrule\end{tabular}\end{table}

\paragraph{Consistency-model distillation.}
We additionally apply FlowMap-OPD to a fixed-endpoint CM parameterization,
distilling an MMD-adapted XL/2 teacher into a B/2 student.
Table~\ref{tab:cm-transfer} compares endpoint-map and source-induced velocity
supervision at epoch 150. Source-induced supervision achieves the lowest
FID among the compared recipes under both four- and five-step generation, supporting the use of
FlowMap-OPD with CM students as well as two-time flow-map students.

\begin{table}[H]
\centering
\caption{Supplementary CM distillation at epoch 150. We report FID under four- and
five-step generation; lower is better.}
\label{tab:cm-transfer}
\setlength{\tabcolsep}{12pt}
\begin{tabular}{lrr}
\toprule
Supervision & Four-step FID $\downarrow$ & Five-step FID $\downarrow$\\
\midrule
Stochastic endpoint map, $\lambda=0.1$ & 41.01 & 39.72\\
Stochastic endpoint map, $\lambda=0.2$ & 40.91 & 39.67\\
Stochastic endpoint map, $\lambda=0.4$ & 40.66 & 39.28\\
Stochastic endpoint map, $\lambda=0.8$ & 40.34 & 38.78\\
Stochastic endpoint map, $\lambda=1.0$ & 40.12 & 38.54\\
Deterministic endpoint map & 40.96 & 39.68\\
Source-induced velocity & \textbf{22.45} & \textbf{21.03}\\
Source-induced velocity with adaptive weighting & 392.95 & 415.18\\
\bottomrule
\end{tabular}
\end{table}

\section{Text-to-Image Consistency Analysis}\label{app:consistency-diagnostics}
\paragraph{Consistency strength.}
We vary $\lambda_c$ and compare the students after 200 training steps.
Table~\ref{tab:consistency-errors} reports student consistency and
teacher-matching errors. A stronger penalty reduces the consistency
residual, but large weights increase the difference from the teacher and
reduce task performance (Table~\ref{tab:consistency-t2i}). The coefficient
therefore controls how strongly the student fits the teacher relative to
its own consistency constraint.

\begin{table}[H]
\centering\small
\caption{Consistency and teacher-matching errors after 200 training steps,
averaged over the preceding 20 steps at the last supervised interval.
Velocity errors are mean squared differences from the teacher;
map-composition error measures agreement between direct and composed student maps.}
\label{tab:consistency-errors}
\begin{tabular}{@{}lrrrr@{}}
\toprule
$\lambda_c$ & Source residual & Velocity mismatch & Average-velocity mismatch & Map composition error\\
\midrule
0 & 0.800 & 0.017 & 0.031 & 0.003391\\
0.001 & 0.649 & 0.020 & 0.032 & 0.003148\\
0.01 & 0.763 & 0.053 & 0.043 & 0.003504\\
0.1 & 0.401 & 0.088 & 0.060 & 0.002558\\
1 & 0.179 & 0.171 & 0.090 & 0.001996\\
\bottomrule
\end{tabular}
\end{table}

\clearpage
\section{Paired Qualitative Comparisons}\label{app:paired-gallery}
We compare the base, all three specialist teachers and the same student.
Within each row, the models share the prompt, initial noise, few-step
sampling grid and guidance. These illustrative comparisons use the
$\lambda_c=0$ student.

\subsection{Cross-task Specialist--Student Comparisons}
Each panel combines examples from the three tasks to show specialist
capabilities and their consolidation in one student.
\begin{figure*}[!htbp]
\centering
\includegraphics[width=\textwidth]{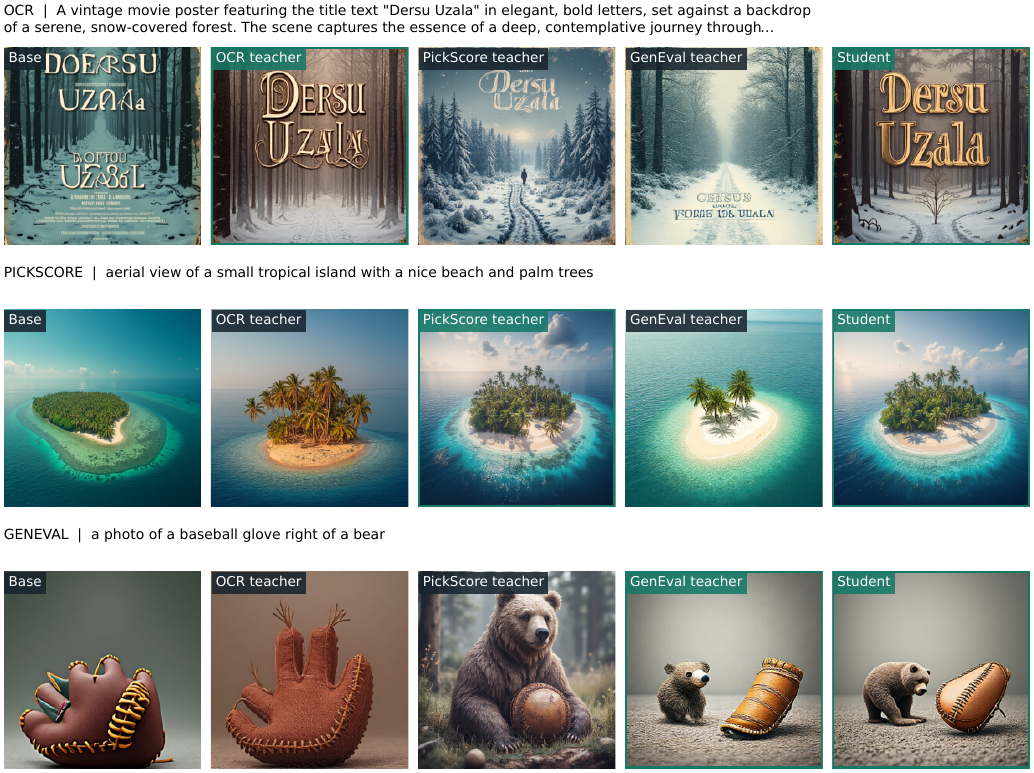}
\caption{\textbf{Complementary specialist capabilities in one student.}
Rows compare text rendering, visual preferences and object relations.
Columns show the base, OCR teacher, PickScore teacher, GenEval teacher
and the same student.}
\label{fig:specialist-comparison}
\end{figure*}
\clearpage
\begin{figure*}[!htbp]
\centering
\includegraphics[width=\textwidth]{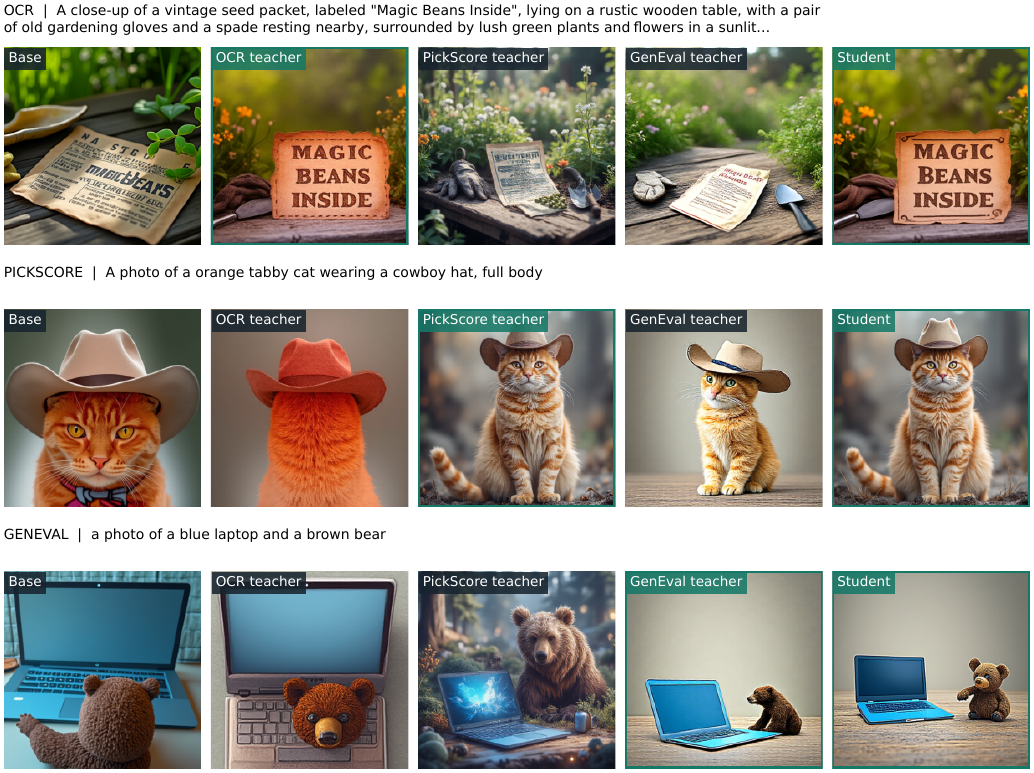}
\caption{Cross-task specialist--student comparison. Rows cover OCR, visual preferences and compositional generation. Each row compares the base, three specialist teachers and the same student under a shared prompt, initial noise and few-step sampler.}
\end{figure*}

\clearpage
\begin{figure*}[!htbp]
\centering
\includegraphics[width=\textwidth]{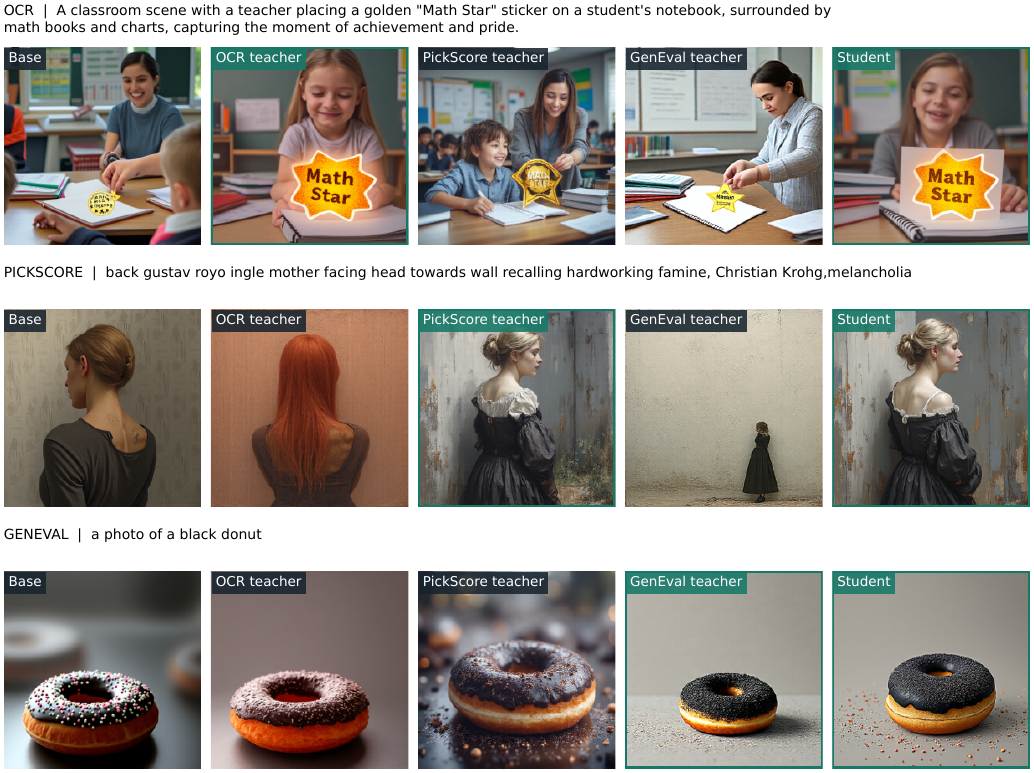}
\caption{Cross-task specialist--student comparison. Rows cover OCR, visual preferences and compositional generation. Each row compares the base, three specialist teachers and the same student under a shared prompt, initial noise and few-step sampler.}
\end{figure*}

\clearpage
\subsection{OCR Capability Comparisons}

\begin{figure*}[!htbp]
\centering
\includegraphics[width=\textwidth]{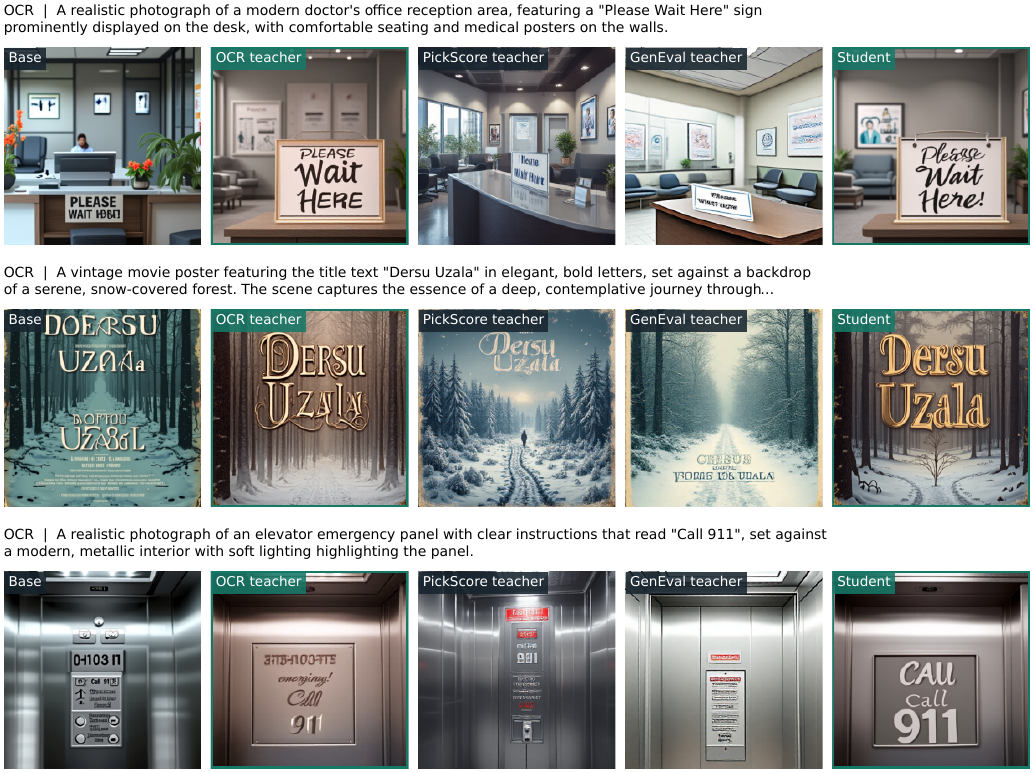}
\caption{Paired OCR examples. Columns show the base, OCR teacher, PickScore teacher, GenEval teacher, and the same student. Each row shares the prompt, initial noise and few-step sampler.}
\end{figure*}

\clearpage

\begin{figure*}[!htbp]
\centering
\includegraphics[width=\textwidth]{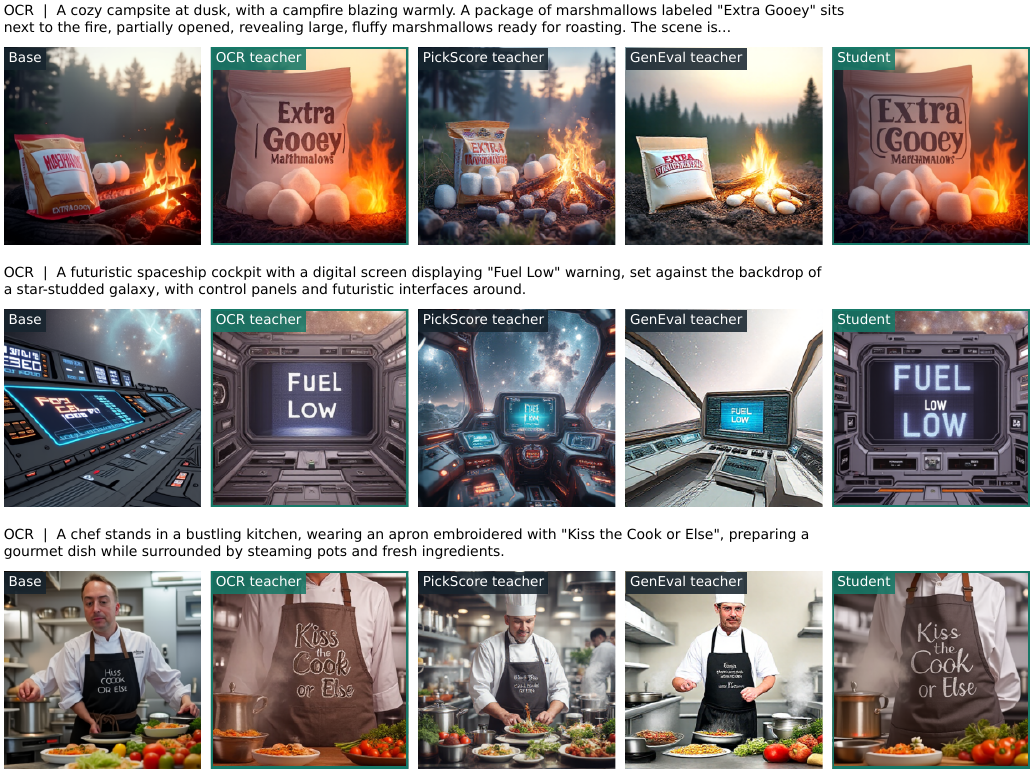}
\caption{Paired OCR examples. Columns show the base, OCR teacher, PickScore teacher, GenEval teacher, and the same student. Each row shares the prompt, initial noise and few-step sampler.}
\end{figure*}

\clearpage
\subsection{Visual Preference Comparisons}

\begin{figure*}[!htbp]
\centering
\includegraphics[width=\textwidth]{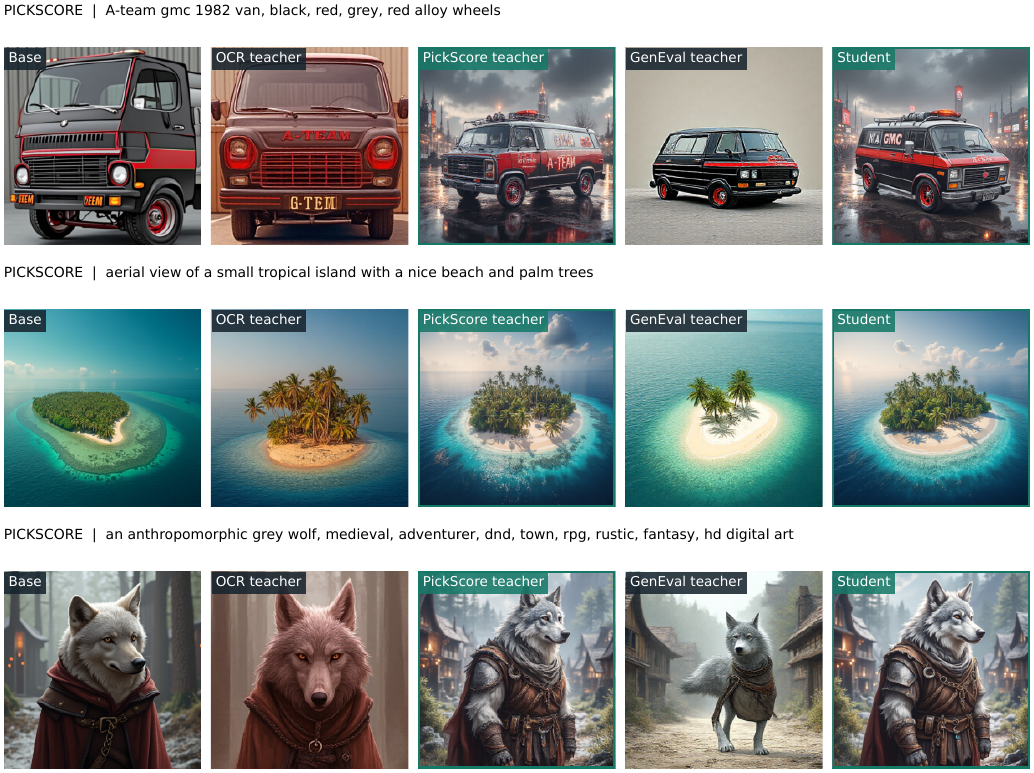}
\caption{Paired PickScore examples. Columns show the base, OCR teacher, PickScore teacher, GenEval teacher, and the same student. Each row shares the prompt, initial noise and few-step sampler.}
\end{figure*}

\clearpage

\begin{figure*}[!htbp]
\centering
\includegraphics[width=\textwidth]{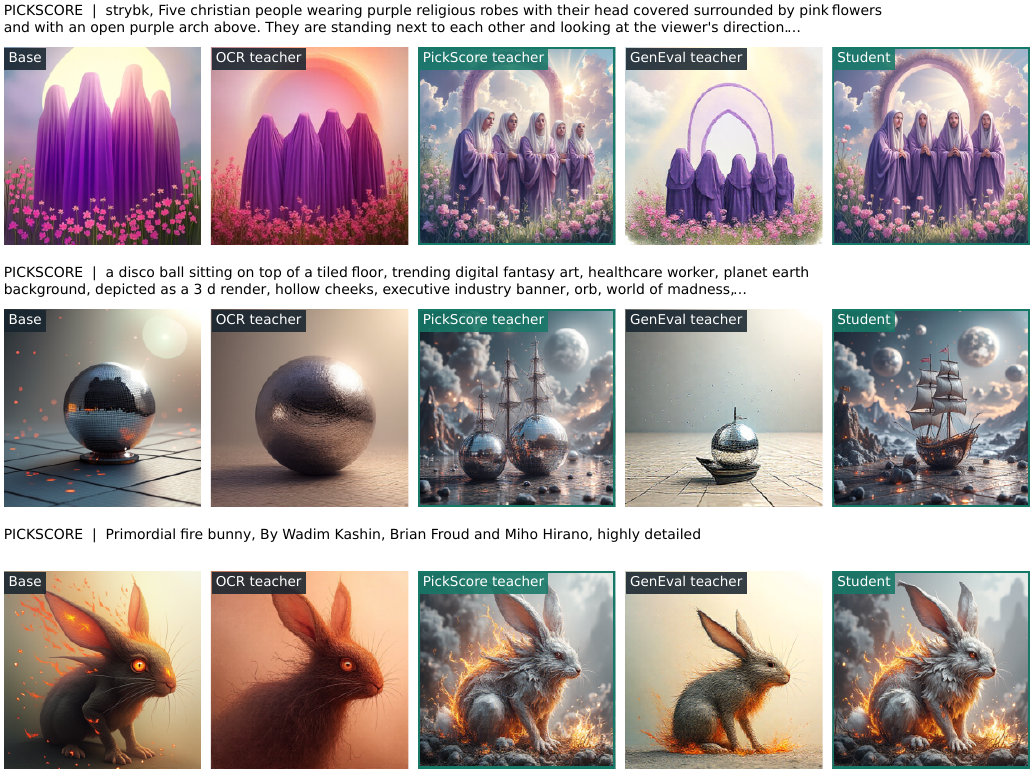}
\caption{Paired PickScore examples. Columns show the base, OCR teacher, PickScore teacher, GenEval teacher, and the same student. Each row shares the prompt, initial noise and few-step sampler.}
\end{figure*}

\clearpage
\subsection{Compositional Generation Comparisons}

\begin{figure*}[!htbp]
\centering
\includegraphics[width=\textwidth]{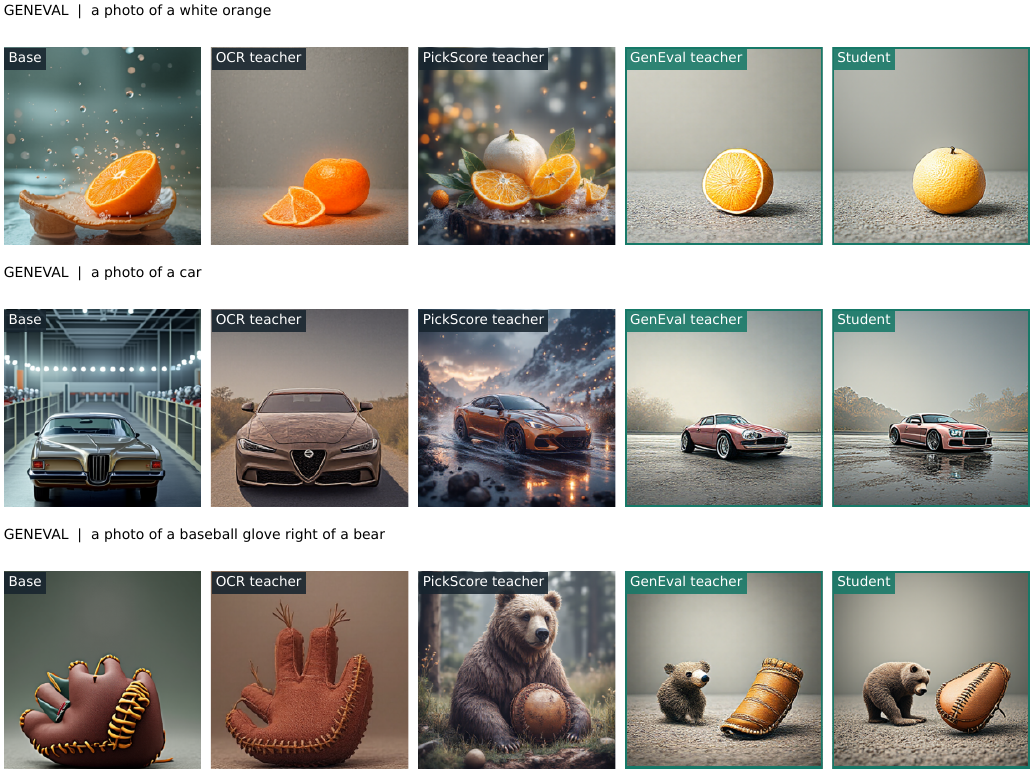}
\caption{Paired GenEval examples. Columns show the base, OCR teacher, PickScore teacher, GenEval teacher, and the same student. Each row shares the prompt, initial noise and few-step sampler.}
\end{figure*}

\clearpage

\begin{figure*}[!htbp]
\centering
\includegraphics[width=\textwidth]{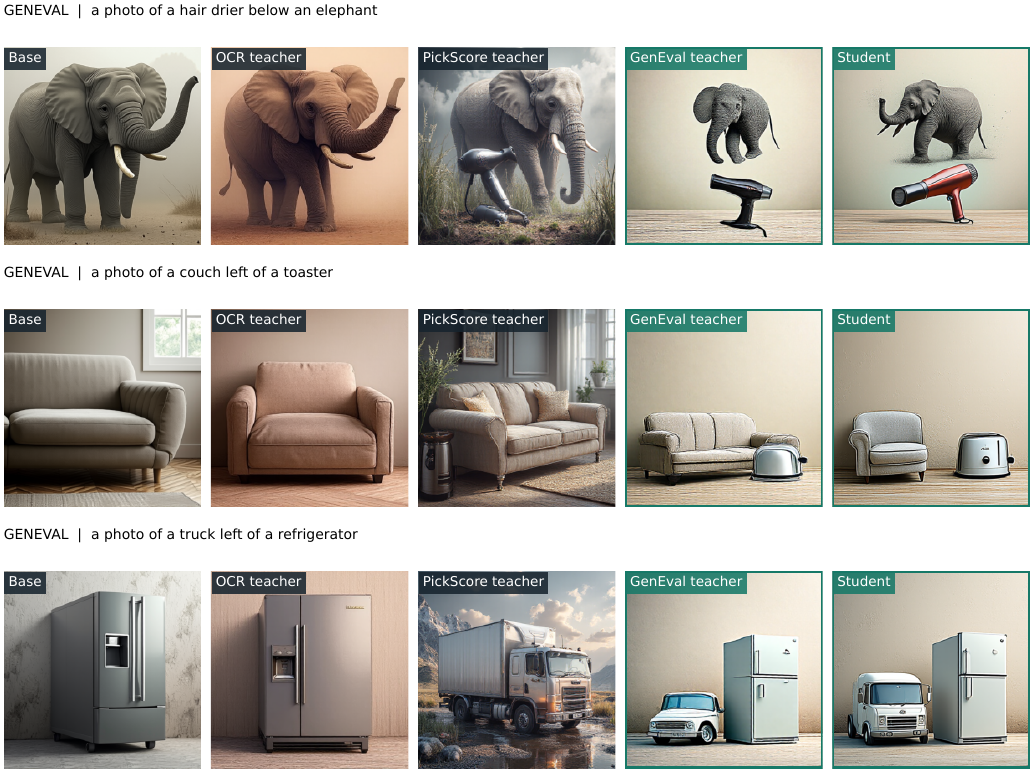}
\caption{Paired GenEval examples. Columns show the base, OCR teacher, PickScore teacher, GenEval teacher, and the same student. Each row shares the prompt, initial noise and few-step sampler.}
\end{figure*}

\end{document}